\newtheoremstyle{exampstyle}
  {\topsep} 
  {\topsep} 
  {} 
  {} 
  {\bfseries} 
  {.} 
  {.5em} 
  {} 
\theoremstyle{definition}
\newtheorem{definition}{Definition}
\newtheorem{example}{Example}
\newtheorem{theorem}{Theorem}
\newtheorem{lemma}{Lemma}
\newtheorem{fact}{Fact}
\newtheorem{proposition}{Proposition}
\newtheorem{corollary}{Corollary}
\newtheorem*{priorwork*}{Prior Work}
\newtheorem*{notation*}{Notation}
\newtheorem{remark}{Remark}
\newcommand{\CartProd}{\bigtimes}
\newcommand{\SCM}{{\mathcal{M}}}
\renewcommand{\models}{\vDash}
\newcommand{\TopoWeak}{\tau^{\mathrm{w}}}
\newcommand{\hookdoubleheadrightarrow}{%
  \hookrightarrow\mathrel{\mspace{-15mu}}\rightarrow
}
\newcommand*{\addFileDependency}[1]{
  \typeout{(#1)}
  \@addtofilelist{#1}
  \IfFileExists{#1}{}{\typeout{No file #1.}}
}
\newcommand*{\myexternaldocument}[1]{%
    \externaldocument{#1}%
    \addFileDependency{#1.tex}%
    \addFileDependency{#1.aux}%
}
\title{A Topological Perspective on Causal Inference}
\author{%
  Duligur Ibeling \\
  Department of Computer Science\\
  Stanford University\\
  \texttt{duligur@stanford.edu} \\
   \And
   Thomas Icard \\
   Department of Philosophy \\
   Stanford University\\
   \texttt{icard@stanford.edu} \\
}
\begin{document}

\maketitle

\begin{abstract}
This paper presents a topological learning-theoretic perspective on causal inference by introducing a series of topologies defined on general spaces of structural causal models (SCMs). As an illustration of the framework we prove a topological causal hierarchy theorem, showing that substantive assumption-free causal inference is possible only in a meager set of SCMs. Thanks to a known correspondence between open sets in the weak topology and statistically verifiable hypotheses, our results show that inductive assumptions sufficient to license valid causal inferences are statistically unverifiable in principle. Similar to no-free-lunch theorems for statistical inference, the present results clarify the inevitability of substantial assumptions for causal inference. An additional benefit of our topological approach is that it easily accommodates SCMs with infinitely many variables. We finally suggest that the framework may be helpful for the positive project of exploring and assessing alternative causal-inductive assumptions.
\end{abstract}

\section{Introduction and Motivation} 
In the background of any investigation into learning algorithms are no-free-lunch phenomena: roughly, the observation that assumption-free statistical learning is infeasible in general (see, e.g., \cite[Ch. 5]{MLbook} for a formal statement). Common wisdom is that learning algorithms and architectures must adequately reflect non-trivial features of the data-generating distribution to gain inductive purchase. 

For many purposes we need to move beyond passive observation, focusing instead on what \emph{would} happen were we to \emph{act} upon a given system. Even further, we sometimes desire to \emph{explain} the behavior of a system, raising questions about what \emph{would have} occurred had some aspects of a situation been different. Such questions depend not just on the data distribution; they depend on deeper features of underlying data-generating \emph{processes} or \emph{mechanisms}. It is thus generally acknowledged that stronger assumptions are  required if we want to draw \emph{causal} conclusions from data \citep{Spirtes2001,Pearl2009,Imbens2015,Peters,9363924}. 


Whether implicit or explicit, any approach to causal inference involves a space of candidate causal models, viz. data-generating processes. Indeed, a blunt way of incorporating inductive bias is simply to \emph{omit} some class of possible causal hypotheses from consideration. Many (im)possibility results in the literature can accordingly be understood as pertaining to all models within a class. 
For instance, if we can restrict attention to \emph{Markovian} models that satisfy \emph{faithfulness}, then we can \emph{always} identify the structure of a model from experimental data (e.g., \cite{Eberhardt2005,Spirtes2001}). If we can restrict attention to Markovian (continuous) models with linear functions and \emph{non}-Gaussian noise, then \emph{every} model can be learned even from purely observational data \citep{JMLR:v7:shimizu06a}. As a negative example, in the larger class of (not necessarily Markovian) models, \emph{no} model can \emph{ever} be determined from observational data alone \citep{Spirtes2001,BCII2020}.

At the same time, in many settings it is sensible to aim for results with ``nearly universal'' force. It is natural to ask, e.g., within the class of all Markovian models, how ``typical'' are those in which the faithfulness condition is violated? This might tell us, for instance, how typically we could expect failure of a method that depended on these assumptions. A well-known result shows that, fixing any particular causal dependence graph, such violations have \emph{measure zero} for any smooth (e.g., Lebesgue) measure on the parameter space of distributions consistent with that graph \citep{Meek1995}. In fact, the standard notion of statistical consistency itself, which underlies many possibility results in causal inference, requires omission of some purportedly ``negligible'' set of possible data streams \citep{DiaconisFreedman,Spirtes2001}.

There are two standard mathematical approaches to making concepts like ``typical'' and ``negligible'' rigorous: \emph{measure-theoretic} and \emph{topological}. While the two approaches often agree, they capture slightly different intuitions \citep{Oxtoby}.  One virtue of the measure-theoretic approach is its natural  probabilistic interpretation: intuitively, we are exceedingly \emph{unlikely} to hit upon a set with measure zero. At the same time, the measure-theoretic approach is sometimes criticized in statistical settings for its alleged  dependence on a measure, and this has been argued to favor topological approaches (see, e.g.,  \cite{BELOT2020159} on no-free-lunch theorems). The latter of course in turn demands an appropriate topology.

In the present work we show how to define a sequence of meaningful topologies on the space of causal models, each corresponding to a progressively coarser level of the so called \emph{causal hierarchy} (\cite{pearl2018book,BCII2020}; see Fig. \ref{fig:hierarchy} for an abbreviated pictorial summary). 
We aim to demonstrate that topologizing causal models in this way helps clarify the scope and limits of causal inference under different assumptions, as well as the potential empirical status of those very assumptions, in a highly general setting. 

Our starting point is a canonical topology on the space of Borel probability distributions called the \emph{weak topology}. The weak topology is grounded in the fundamental notion of \emph{weak convergence} of probability distributions \citep{Billingsley} and is thereby closely related to problems of statistical inference (see, e.g., \cite{DemboPeres}). Recent work has sharpened this correspondence, showing that open sets in the weak topology correspond exactly to the statistical hypotheses that can be naturally deemed  \emph{verifiable} \citep{Genin2018,GeninKelly}. 
We extend the correspondence to higher levels of the causal hierarchy, including the most refined and expansive ``top'' level consisting of all (well-founded) causal models. Lower levels and  natural subspaces (e.g., corresponding to prominent causal assumption classes) emerge as coarsenings and projections of this largest space. As an illustration of the general approach, we prove a topological version of the causal hierarchy theorem from \cite{BCII2020}. Rather than showing that collapse happens only in a measure zero set  as in \cite{BCII2020}, our Theorem \ref{thm:hierarchyFormal} show that collapse is \emph{topologically meager}. Conceptually, this highlights a  different (but complementary) intuition: 
not only is collapse  exceedingly unlikely in the sense of measure, meagerness implies that collapse could never be \emph{statistically verified}. Correlatively, this implies that any causal assumption that would generally allow us to infer counterfactual probabilities from experimental (or ``interventional'') probabilities must itself be statistically unverifiable (Corollary \ref{cor:hierarchy}). 

To derive such a result we actually show something slightly stronger (see Lem.~\ref{lem:separation}): even with respect to the subspace of models consistent with a fixed \emph{temporal order} on variables, the causal hierarchy theorem holds. Merely knowing the temporal order of the variables is not enough to render collapse of the hierarchy a statistically verifiable proposition.
Furthermore, we show that the \emph{witness to collapse} can be taken as any of the well-known counterfactual ``probabilities of causation'' (see, e.g., \cite{Pearl1999}): probabilities of necessity, sufficiency, necessity \emph{and} sufficiency, enablement, or disablement. That is, none of these important quantities are fully determined by experimental data except in a meager set. 



In \S\ref{scms} we give background on causal models, and in \S\ref{sec:causalhierarchy} we present a model-theoretic characterization of the causal hierarchy as a sequence of spaces. Topology is introduced in \S\ref{sec:weaktopology}, and the main results about collapse appear in \S\ref{sec:main}. 
For the technical results, we include proof sketches in the main text to provide the core intuitions, relegating some of the details to an exhaustive technical appendix, which also includes additional supplementary material. 


\section{Structural Causal Models} \label{scms}
A fundamental building block in the theory of causality is the \emph{structural causal model} \citep{Pearl1995,Spirtes2001,Pearl2009} or SCM, which formalizes the notion of a data-generating process. In addition to specifying data-generating distributions, these models also specify the generative mechanisms that produce them. For the purpose of causal inference and learning, SCMs provide a broad, fine-grained hypothesis space.

The notions in this section have their usual definition following, e.g., \cite{Pearl2009}, but we have
recast them in the standard language of Borel probability spaces so as to handle the case of infinitely many variables rigorously.
We start with notation, basic assumptions, and some probability theory.


\begin{notation*}
The signature (or range) of a variable $V$ is denoted $\chi_V$. 
Where $\*S$ is a set of variables, let $\chi_{\*S} = \CartProd_{S \in \*S} \chi_S$.
Given an indexed family of sets $\{S_\beta\}_{\beta \in B}$ and elements $s_\beta \in S_\beta$, let $(s_\beta)_\beta$ denote the tuple whose element at index $\beta$ is $s_\beta$, for all $\beta$.
For $B' \subset B$ write $\pi_{B'} : \CartProd_{\beta \in B} S_\beta \to \CartProd_{\beta \in B'} S_{\beta}$ for the \emph{projection map} sending each $(s_\beta)_{\beta \in B} \mapsto (s_{\beta'})_{\beta' \in B'}$; abbreviate $\pi_{\beta'} = \pi_{\{\beta'\}}$, where $\beta' \in B$. 
\end{notation*}

The reader is referred to standard texts \cite{johnkelley1975,Bogachev2007} for elaboration on the concepts used below.

\begin{definition}[Topology]\label{def:basictopology}
For discrete spaces (like $\chi_S$, for a single categorical variable $S$) we use the discrete topology and for product spaces (like $\chi_{\*S}$ for a set of variables $\*S$) we use the product topology.
Note that the so-called \emph{cylinder sets} of the form $\pi^{-1}_{\*Y}(\{\*y\})$ for finite subsets $\*Y \subset \*S$ and $\*y \in \chi_{\*Y}$
form a basis for the product topology on $\chi_{\*S}$. This cylinder set is a subset of $\chi_{\*S}$, and contains exactly those valuations agreeing with the value $\pi_Y(\*y)$ specified in $\*y$ for $Y$, for every $Y \in \*Y$. Following standard statistical notation this cylinder is abbreviated as simply $\*y$.
\end{definition}
\begin{definition}[Probability]
\label{def:probability}
Where $\vartheta$ is a topological space write $\mathcal{B}(\vartheta)$ for its Borel $\sigma$-algebra of measurable subsets.
Let $\mathfrak{P}(\vartheta)$ be the set of probability measures on $\mathcal{B}(\vartheta)$.
Specifically, elements of $\mathfrak{P}(\vartheta)$ are functions $\mu: \mathcal{B}(\vartheta) \to [0, 1]$ assigning a probability to each measurable set such that $\mu(\vartheta) = 1$ and $\mu\big(\bigcup_{i=1}^{\infty}(S_i)\big) = \sum_{i=1}^{\infty} \mu(S_i)$ for each sequence $S_1, S_2, \dots$ of pairwise disjoint sets from $\mathcal{B}(\vartheta)$.
A map $f: \vartheta_1 \to \vartheta_2$ is said to be \emph{measurable} if $f^{-1}(S_2) \in \mathcal{B}(\vartheta_1)$ for every $S_2 \in \mathcal{B}(\vartheta_2)$.
\end{definition}

\begin{fact}[Lemma~1.9.4 \cite{Bogachev2007}]
A Borel probability measure is determined by its values on a basis.
\end{fact}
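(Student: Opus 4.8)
The plan is to reduce the claim to the standard uniqueness theorem for measures via Dynkin's $\pi$--$\lambda$ theorem. Concretely, suppose $\mu, \nu \in \mathfrak{P}(\vartheta)$ are two Borel probability measures that agree on every element of a basis $\mathcal{B}$ for the topology on $\vartheta$; I want to conclude $\mu = \nu$ on all of $\mathcal{B}(\vartheta)$. The natural object to feed into Dynkin's theorem is a $\pi$-system that is simultaneously controlled by $\mathcal{B}$ and generates $\mathcal{B}(\vartheta)$.

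First I would pass from the basis to the collection $\mathcal{P}$ of all finite intersections of basis elements (taking $\vartheta$ itself as the empty intersection). By construction $\mathcal{P}$ is closed under finite intersection, i.e.\ a $\pi$-system. In the relevant case where $\mathcal{B}$ is the cylinder basis of Definition~\ref{def:basictopology}, the intersection of two cylinders $\pi^{-1}_{\*Y}(\{\*y\})$ and $\pi^{-1}_{\*Z}(\{\*z\})$ is again a cylinder on $\*Y \cup \*Z$ (or empty, when the specified values clash on $\*Y \cap \*Z$), so $\mathcal{P}$ consists again of cylinders together with $\emptyset$. Since $\mu$ and $\nu$ agree on every cylinder by hypothesis, they agree on all of $\mathcal{P}$. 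Next I would verify the generation claim $\sigma(\mathcal{P}) = \mathcal{B}(\vartheta)$: each basis element is open, giving $\sigma(\mathcal{P}) \subseteq \mathcal{B}(\vartheta)$, while for the reverse inclusion it suffices to place every open set in $\sigma(\mathcal{P})$; each open set is a union of basis elements, and because the cylinder basis is countable this union is a countable union of elements of $\mathcal{P}$ and hence lies in $\sigma(\mathcal{P})$.

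Finally I would invoke Dynkin's theorem. The collection $\mathcal{D} = \{A \in \mathcal{B}(\vartheta) : \mu(A) = \nu(A)\}$ is a $\lambda$-system: it contains $\vartheta$ since both are probability measures, is closed under proper differences by finite additivity, and is closed under increasing countable unions by continuity from below. As $\mathcal{D}$ contains the $\pi$-system $\mathcal{P}$, Dynkin's theorem gives $\sigma(\mathcal{P}) \subseteq \mathcal{D}$, and combined with the previous step this forces $\mathcal{B}(\vartheta) = \mathcal{D}$, i.e.\ $\mu = \nu$.

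The step I expect to be the main obstacle is the generation claim $\sigma(\mathcal{P}) = \mathcal{B}(\vartheta)$, since it rests on expressing an arbitrary open set as a \emph{countable} union of basis elements, which requires second-countability of $\vartheta$. This holds automatically for the countable products of discrete signatures considered here, but would fail for genuinely uncountable products, where the Borel $\sigma$-algebra can strictly contain the cylindrical $\sigma$-algebra generated by $\mathcal{P}$; it is precisely this countability (or, equivalently, a restriction to countably many variables) that makes the stated determinacy go through.
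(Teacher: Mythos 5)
Your proof is correct and is essentially the paper's own route: the paper simply cites Bogachev's Lemma~1.9.4, which is exactly the $\pi$-system uniqueness theorem you derive via Dynkin's $\pi$--$\lambda$ argument, applied to the cylinder basis (closed under finite intersections, and generating $\mathcal{B}(\vartheta)$ by second countability). Your closing caveat — that the determinacy rests on the countability of the cylinder basis, hence on the standing assumption of countably many variables with countable signatures — is precisely the hypothesis under which the paper invokes the fact.
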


\subsection{SCMs, Observational Distributions}\label{ss:scml1}

Let $\*V$ be a set of \emph{endogenous variables}. 
We assume for simplicity every variable $V \in \*V$ is dichotomous
with $\chi_V = \{0, 1\}$, although the results here generalize to any larger countable range.
Influences among endogenous variables are the main phenomena our formalism aims to capture.
A well-founded\footnote{See Appendix~\ref{app:causalmodels} for additional background on orders and relations.} \emph{direct influence} relation $\rightarrow$ on $\*V$ encapsulates the notion of one endogenous variable possibly influencing another. For each $V \in \*V$, we call $\{V' \in \*V : V' \rightarrow V\} = \mathbf{Pa}(V)$ the \emph{parents} of $\*V$.
We assume every set $\mathbf{Pa}(V)$ is finite; this condition is called \emph{local finiteness}. These two assumptions (well-foundedness and local finiteness) generalize the common \emph{recursiveness} assumption to the infinitary setting, and have an alternative characterization in terms of ``temporal'' orderings:
\begin{fact}\label{fact:omegalike}
Say that a total order $\prec$ on $\*V$ is \emph{$\omega$-like} if every node has finitely many predecessors: for each $V \in \*V$, the set $\{V' : V' \prec V\}$ is finite.
Then the influence relation $\rightarrow$ is extendible to an $\omega$-like order iff $\rightarrow$ is well-founded and locally finite.
\end{fact}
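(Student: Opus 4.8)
The plan is to prove the two directions separately, reading ``extendible to an $\omega$-like order'' as: there is a total order $\prec$ with $V' \rightarrow V \implies V' \prec V$ in which every element has only finitely many $\prec$-predecessors. I would first flag that the statement presupposes $\*V$ is countable: any total order in which each element has finitely many predecessors has order type at most $\omega$, so an $\omega$-like order can only exist on a countable set. This is an implicit standing assumption of the setting; without it the ``if'' direction fails (an edgeless uncountable relation is well-founded and locally finite but admits no $\omega$-like order).

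For the ``only if'' direction I would argue that both target properties transfer from $\prec$ to $\rightarrow$. Local finiteness is immediate, since $\mathbf{Pa}(V) = \{V' : V' \rightarrow V\} \subseteq \{V' : V' \prec V\}$ and the latter is finite by hypothesis. For well-foundedness, observe that $\prec$ itself admits no infinite descending chain $\cdots \prec V_2 \prec V_1 \prec V_0$, as all of $V_1, V_2, \dots$ would then be predecessors of $V_0$, contradicting $\omega$-likeness; since $\rightarrow$ is contained in $\prec$ (every $\rightarrow$-edge is a $\prec$-edge), any infinite $\rightarrow$-descending chain is an infinite $\prec$-descending chain, so $\rightarrow$ inherits well-foundedness.

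The substantive ``if'' direction rests on a lemma: under well-foundedness and local finiteness, every $V$ has only finitely many ancestors (elements reaching $V$ through the transitive closure of $\rightarrow$). I would prove this by K\"onig's lemma applied to the tree $T_V$ of finite backward paths $V = W_0, W_1, \dots, W_k$ with $W_{j+1} \rightarrow W_j$. Local finiteness makes $T_V$ finitely branching, so if $V$ had infinitely many ancestors then $T_V$ would be infinite and would contain an infinite branch $W_0, W_1, \dots$ with $W_{j+1} \rightarrow W_j$ for all $j$ --- an infinite $\rightarrow$-descending chain, contradicting well-foundedness. Hence each ancestor set $A(V)$ (taken to include $V$) is finite, and it is ancestor-closed since ancestors of ancestors are ancestors.

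Finally I would assemble the order. Enumerating $\*V = \{V_0, V_1, \dots\}$, set $F_n = A(V_0) \cup \dots \cup A(V_n)$; each $F_n$ is finite, ancestor-closed, increasing in $n$, and $\bigcup_n F_n = \*V$. Ordering the finite batches $G_n = F_n \setminus F_{n-1}$ (with $F_{-1} = \emptyset$) one after another, and within each finite $G_n$ using a topological sort of the restriction of $\rightarrow$ to $G_n$ (which is acyclic, since a cycle would yield an infinite descending chain), produces a total order $\prec$. It extends $\rightarrow$ because all parents of any $V \in G_n$ lie in $F_n$, hence in an earlier batch (where they automatically precede $V$) or in $G_n$ (where the topological sort places them before $V$); and it is $\omega$-like because the predecessors of any $V \in G_n$ are contained in the finite set $F_n$. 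I expect the main obstacle to be exactly this direction: isolating the finite-ancestors lemma as the precise point where well-foundedness and local finiteness must be combined --- neither alone suffices, as an infinite chain or a single infinitely-branching node shows --- and managing the countability of $\*V$, which is indispensable for the order to exist at all.
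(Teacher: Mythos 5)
Your proof is correct, but there is no proof in the paper to compare it with: the statement is labelled a \emph{Fact} and left unproven, with the appendix supplying only the definitions of well-foundedness and total orders together with one example (a relation in which $V_0$ has infinitely many parents $V_1, V_2, \dots$) showing that well-foundedness without local finiteness is not enough. Your argument therefore fills a gap rather than retracing or deviating from a route the paper takes. Both directions are sound. Two remarks on economy: in the ``only if'' direction you can avoid the dependent-choice-flavored equivalence between well-foundedness and the absence of infinite descending chains by observing that an $\omega$-like total order is a well-order, so the $\prec$-least element of any nonempty $D \subset \*V$ is automatically $\rightarrow$-minimal in $D$; and in the ``if'' direction the finite-ancestor lemma follows from well-founded induction directly --- $A(V) = \{V\} \cup \bigcup_{W \rightarrow V} A(W)$ is a finite union (by local finiteness) of finite sets (by the inductive hypothesis) --- with no need for K\"onig's lemma, though your K\"onig argument is also valid. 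Your flag about countability is apt: the paper only fixes a countable $\*V$ later (when defining the classes $\mathfrak{M}_\prec$, $\mathfrak{M}_X$, $\mathfrak{M}$), and, as your edgeless uncountable example shows, the ``if'' direction genuinely fails without it. The batch-plus-topological-sort construction, and the verifications that the resulting order extends $\rightarrow$ and is $\omega$-like, are all correct.
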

In addition to endogenous variables, causal models have \emph{exogenous variables} $\*U$.
Each endogenous $V$ depends on a subset $\*U(V)\subset \*U$ of ``exogenous parents'' and
uncertainty enters via exogenous noise, that is, a distribution from $\mathfrak{P}(\chi_{\*U})$.
A \emph{structural function} (or \emph{mechanism}) for $V \in \*V$ is a measurable $f_{V} : \chi_{\mathbf{Pa}(V)} \times \chi_{\*U(V)} \to \chi_{V}$
mapping parental endogenous and exogenous valuations to values.
\begin{definition}\label{def:scm:lit}
A \emph{structural causal model} is a tuple $\mathcal{M} = \langle \*U, \*V, \{f_V\}_{V \in \*V}, P \rangle$ where $\*U$ is a collection of exogenous variables, $\*V$ is a collection of endogenous variables, $f_V$ is a structural function for each $V \in \*V$, and $P \in \mathfrak{P}(\chi_{\*U})$ is a probability measure on (the Borel $\sigma$-algebra of) $\chi_{\*U}$.
\end{definition}

As is well known, recursiveness implies that each $\*u \in \chi_{\*U}$ induces a unique $\*v \in \chi_{\*V}$ that solves the simultaneous system of structural equations $\{V = f_V\}_V$:
\begin{proposition}
Any SCM $\SCM$ with well-founded, locally finite parent relation $\rightarrow$ induces a unique measurable $m^{\SCM} : \chi_{\*U} \to \chi_{\*V}$ such that $f_V\big( \pi_{\mathbf{Pa}(V)}(m^{\SCM}(\*u)), \pi_{\*U(V)}(\*u) \big) = \pi_{V}\big(m^{\SCM}(\*u)\big)$ for all $\*u \in \chi_{\*U}$ and $V \in \*V$.
\end{proposition}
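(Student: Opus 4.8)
The plan is to construct the solution map $m^{\SCM}$ by exploiting the $\omega$-like order guaranteed by Fact~\ref{fact:omegalike}, and to define the endogenous values one variable at a time along this order, using well-founded recursion. First I would invoke Fact~\ref{fact:omegalike} to extend the influence relation $\rightarrow$ to an $\omega$-like total order $\prec$ on $\*V$, so that each $V \in \*V$ has only finitely many $\prec$-predecessors. Since $\mathbf{Pa}(V) \subset \{V' : V' \prec V\}$ (parents necessarily precede their child), each $V$ depends only on finitely many variables that are all ``earlier'' in the order, and this finiteness is exactly what makes the recursion well-defined.

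The core of the argument is to fix an arbitrary $\*u \in \chi_{\*U}$ and define $\pi_V(m^{\SCM}(\*u))$ by recursion on $\prec$. For the base case, variables with no predecessors have $\mathbf{Pa}(V) = \emptyset$, so $f_V$ depends only on $\pi_{\*U(V)}(\*u)$ and the value is determined outright. For the recursive step, once the values $\pi_{V'}(m^{\SCM}(\*u))$ have been fixed for all $V' \prec V$ — in particular for all $V' \in \mathbf{Pa}(V)$ — I set $\pi_V(m^{\SCM}(\*u)) = f_V\big(\pi_{\mathbf{Pa}(V)}(m^{\SCM}(\*u)), \pi_{\*U(V)}(\*u)\big)$. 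Because $\prec$ is well-founded (every nonempty subset has a $\prec$-minimal element) and each $\mathbf{Pa}(V)$ lies strictly below $V$, the principle of well-founded recursion guarantees that this defines a unique value at every $V$, hence a unique $\*v = m^{\SCM}(\*u) \in \chi_{\*V}$ satisfying the structural equations. Uniqueness follows by a parallel well-founded induction: any two solutions must agree at every $\prec$-minimal node and, if they agree below $V$, they agree at $V$ by functionality of $f_V$.

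The remaining task is measurability of $m^{\SCM} : \chi_{\*U} \to \chi_{\*V}$. By the Fact citing Lemma~1.9.4 of \cite{Bogachev2007} together with Definition~\ref{def:basictopology}, it suffices to check that the preimage of each basic cylinder set is Borel, and since cylinders are determined by finitely many coordinates, it further suffices to show that each coordinate map $\*u \mapsto \pi_V(m^{\SCM}(\*u))$ is measurable. I would prove this by induction along $\prec$: each such map is a composition of the measurable structural function $f_V$ with the (finitely many) measurable coordinate maps $\pi_{V'}(m^{\SCM}(\cdot))$ for $V' \in \mathbf{Pa}(V)$ and the measurable projections $\pi_{\*U(V)}$, and finite compositions and products of measurable maps are measurable.

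The main obstacle, and the place where local finiteness and well-foundedness do genuine work, is ensuring that the recursion terminates coherently in the infinitary setting: with infinitely many variables one cannot simply order them as $\omega$ and induct, so the delicate point is that each individual value depends on only finitely many predecessors \emph{and} that well-foundedness licenses the recursive definition despite the order type possibly exceeding $\omega$. Once Fact~\ref{fact:omegalike} supplies the $\omega$-like order, however, each node's dependency set is finite and situated strictly below it, which is precisely what reduces the construction to a standard well-founded recursion with a clean inductive measurability argument layered on top.
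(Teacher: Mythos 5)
Your proposal is correct and follows essentially the same route as the paper: pointwise existence and uniqueness of the solution by well-founded recursion (the paper inducts along $\rightarrow$ directly, whereas you route through the $\omega$-like order of Fact~\ref{fact:omegalike}, an inessential difference), followed by an induction on variables for measurability in which local finiteness guarantees that each coordinate of $m^{\SCM}$ depends measurably on finitely many previously-handled coordinates. Your composition-of-measurable-maps phrasing of the inductive step is just the functional form of the paper's explicit preimage identity $(m^{\SCM})^{-1}(v) = \bigcup_{\mathbf{p} \in \chi_{\mathbf{Pa}(V)}}\big[ (m^{\SCM})^{-1}(\*p) \cap \pi_{\*U}\big(f_V^{-1}(\{v\}) \cap (\{\*p\} \times \chi_{\*U})\big) \big]$.
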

Measurability then entails that the exogenous noise $P$ induces a distribution on joint valuations of $\*V$, called the \emph{observational distribution},
which characterizes passive observations of the system.
\begin{definition}
The observational distribution $p^{\SCM} \in \mathfrak{P}(\chi_{\*V})$ is defined on open sets by $p^{\SCM}(\*y) = P\big((m^{\SCM})^{-1}(\*y)\big)$. Here recall that $\*y$ represents a cylinder subset (Definition~\ref{def:basictopology}) of $\chi_{\*V}$.
\end{definition}

\subsection{Interventions}\label{ss:scml2}
What makes SCMs distinctively causal is the way they accommodate statements about possible manipulations of a causal setup capturing, e.g., observations resulting from a controlled experimental trial. This is formalized in the following definition.
\begin{definition}
An \emph{intervention} is a choice of a finite subset of variables $\*W \subset \*V$ and $\*w \in \chi_{\*W}$. This intervention is written $\*W \coloneqq \*w$, and we let $A$ be the set of all interventions.
Under this intervention, each $W \in \*W$ is held fixed to its value $\pi_{W}(\*w) \in \chi_W$ in $\*w$ while the mechanism for any $V \in \*V \setminus \*W$ is left unchanged.
Specifically, where $\SCM$ is as in Definition \ref{def:scm:lit}, the manipulated model for $\*W \coloneqq \*w$ is the model $\SCM_{\*W \coloneqq \*w} = \langle \*U, \*V, \{f^{\*W \coloneqq \*w}_V\}_{V \in \*V}, P\rangle$ where
\begin{align*}
f^{\*W \coloneqq \*w}_V = \begin{cases}
f_V, & V \notin \*W\\
\text{constant func. mapping to } \pi_V(\*w), & V \in \*W.
\end{cases}
\end{align*}
\end{definition}
The \emph{interventional} or \emph{experimental distribution} $p^{\mathcal{M}_{\*W \coloneqq \*w}} \in \mathfrak{P}(\chi_{\*V})$ is just the observational distribution for the manipulated model $\SCM_{\*W \coloneqq \*w}$, and it encodes the probabilities for an experiment in which the variables $\*W$ are fixed to the values $\*w$.
\begin{remark}
Empty interventions $\varnothing \coloneqq ()$ are just passive observations, i.e., $p^{\SCM_{\varnothing \coloneqq ()}} = p^{\SCM}$.
\end{remark}


\subsection{Counterfactuals}\label{ss:scml3}
By permitting multiple manipulated settings to share exogenous noise, not only the distribution arising from a single manipulation, but also joint distributions over multiple can be considered. These are often called \emph{counterfactuals}. The set $\mathfrak{P}(\chi_{A \times \*V})$ encompasses the combined joint distributions over $\*V$ for any combination of interventions from $A$. A basis for the space $\chi_{A \times \*V}$ are the cylinder sets of the following form, for some sequence $(\*X \coloneqq \*x, \*Y), \dots, (\*W \coloneqq \*w, \*Z)$ of pairs, where $ \*Y, \dots, \*Z \subset \*V$ are finite, and $\*X \coloneqq \*x, \dots, \*W \coloneqq \*w \in A$ are interventions:
\begin{align*}
    \pi^{-1}_{\{\*X \coloneqq \*x\}\times \*Y}(\{\*y\}) \cap \dots \cap \pi^{-1}_{\{\*W \coloneqq \*w\}\times \*Z}(\{\*z\}).
\end{align*}
We will abbreviate this open set as ${\*y}_{\*x}, \dots, {\*z}_{\*w}$, writing, e.g. simply $\*x$ for the intervention $\*X = \*x$.
\begin{definition}
Given $\SCM$, define a counterfactual distribution $p_{\text{cf}}^{\SCM} \in \mathfrak{P}(\chi_{A \times \*V})$ on a basis as follows:
\begin{equation*}
    p_{\text{cf}}^{\SCM}( \*y_{\*x}, \dots, {\*z}_{\*w} ) = P\big((m^{\SCM_{\*X \coloneqq \*x}})^{-1}(\*y) \cap  \dots \cap (m^{\SCM_{\*W \coloneqq \*w}})^{-1}(\*z)\big).
\end{equation*}
Here, the letters $\*y, \dots, \*z$ on the right-hand side abbreviate the respective cylinder sets (Definition~\ref{def:basictopology}) $\pi^{-1}_{\*Y}(\{\*y\}), \dots, \pi^{-1}_{\*Z}(\{\*z\})$.
\end{definition}
\begin{remark}
Marginalizing $p^{\SCM}_{\text{cf}}$ to any single intervention $\*W \coloneqq \*w$ yields $p^{\SCM_{\*W \coloneqq \*w}}$. If $\chi_{\*U}$ is finite, we obtain a familiar \cite{Galles1998} sum formula
$p_{\text{cf}}^{\SCM}(\*y_{\*x}, \dots, {\*z}_{\*w}) = \sum_{\{\*u \mid m^{\SCM_{\*X \coloneqq \*x}}(\*u) \in \*y, \dots, m^{\SCM_{\*W \coloneqq \*w}}(\*u) \in \*z\}} P(\*u)$.
\end{remark}

\begin{example} As a very simple example (drawn from \cite{Pearl2009,BCII2020}), just to illustrate the previous definitions and notation, consider a scenario with two binary exogenous variables $\mathbf{U} = \{U_1,U_2\}$ and two binary endogenous variables $\mathbf{V} = \{X,Y\}$. Let $U_1,U_2$ both be uniformly distributed, and define $f_X:\chi_{U_1} \rightarrow \chi_X$ to be the identity, and $f_Y:\chi_X \times \chi_{U_2} \rightarrow \chi_Y$ by $f_Y(u,x) = ux + (1-u)(1-x)$. This fully defines an SCM $\SCM$ with influence $X \rightarrow Y$, and produces an observational distribution $p^\SCM$ such that $p^\SCM(x,y) = \nicefrac{1}{4}$ for all four settings $X=x,Y=y$. 

The space $A$ of interventions in this example includes the empty intervention and all combinations of $X:=x$ and $Y:=y$, with $x,y \in \{0,1\}$. Notably, all interventional distributions here collapse to observational distributions, e.g., $p^{\SCM_{X:= x}}(X,Y) = p^\SCM(X,Y)$, for both values of $x$. Thus, ``experimental'' manipulations of this system reveal little interesting causal structure. The counterfactual distribution $p^\SCM_{\mathrm{cf}}$, however, does not trivialize. For instance, $p^\SCM_{\mathrm{cf}}((X:=1, Y=1),(X:=0,Y=0)) = \nicefrac{1}{2}$. This term is known as the \emph{probability of necessity and sufficiency} \cite{Pearl1999}, which we can abbreviate by $p^\SCM_{\mathrm{cf}}(y_x,y'_{x'})$. Note that $p^\SCM_{\mathrm{cf}}(y_x,y'_{x'}) \neq  p^\SCM_{\mathrm{cf}}(y_x)p^\SCM_{\mathrm{cf}}(y'_{x'}) = \nicefrac{1}{4}$. Similarly, $p^\SCM_{\mathrm{cf}}(y'_x,y_{x'}) = \nicefrac{1}{2}$.

\end{example}




\subsection{SCM classes}
We now define several subclasses of SCMs that we will use throughout the paper.
Notably, we do not require their endogenous variable sets $\*V$ to be finite. It is infinite in many applications, for instance, in time series models, or generative models defined by probabilistic programs (see, e.g., \citep{II2019,Tavares}). Because the proofs call for slightly different methods, we deal with the infinite and finite cases separately. We make one additional assumption in the infinite case.
\begin{definition}
$\mu \in \mathfrak{P}(\vartheta)$ is \emph{atomless} if $\mu(\{t\}) = 0$ for each $t \in \vartheta$;
 $\SCM$ is atomless if $p^{\SCM}_{\text{cf}}$ is atomless.
\end{definition}
Intuitively, an atomless distribution is one in which weight is always ``smeared'' out continuously and there are no point masses; infinitely many fair coin flips, for example, generate an atomless distribution as the probability of obtaining any given infinite sequence is zero.
\begin{definition}
For the remainder of the paper, fix a countable endogenous variable set $\*V$. Define the following classes of SCMs:
\begin{align*}
    \mathfrak{M}_\prec &= \text{SCMs over } \*V \text{ whose influence relation is extendible to the } \omega\text{-like order } \prec;\\
    \mathfrak{M}_X &= \text{SCMs over } \*V \text{ in which the variable } X \text{ has no parents: } \mathbf{Pa}(X) = \varnothing;\\
    \mathfrak{M} &= \text{all SCMs over } \*V = \bigcup_{\prec} \mathfrak{M}_\prec = \bigcup_X \mathfrak{M}_X.
\end{align*}
If $\*V$ is infinite then all SCMs in the classes above are assumed to be atomless.
\end{definition}

\section{The Causal Hierarchy} \label{sec:causalhierarchy}

Implicit in \S{}\ref{scms}, and indeed in much of the literature on causal inference, is a hierarchy of causal expressivity. Following the metaphor offered in \cite{pearl2018book}, it is natural to characterize three levels of the hierarchy as the \emph{observational}, \emph{interventional} (experimental), and \emph{counterfactual} (explanatory). Drawing on recent work \citep{BCII2020,ibelingicard2020} we make this characterization explicit. The levels will be defined in descending order of causal expressivity (the reverse of \S\ref{scms}). Fig.~\ref{fig:hierarchy}(a) summarizes our definitions.

Higher levels determine lower levels---counterfactuals determine interventionals, and the observational is just an (empty) interventional.
Thus movement ``downward'' in the causal hierarchy corresponds to a kind of projection.
For indexed $\{S_\beta\}_{\beta \in B}$
and $B' \subset B$
let $\varsigma_{B'} : \mathfrak{P}(\CartProd_{\beta \in B} S_\beta) \to \mathfrak{P}(\CartProd_{\beta \in B'} S_\beta)$
be the \emph{marginalization} map taking a joint distribution to its marginal on $B'$.
\begin{definition}
Define three composable \emph{causal projections} $\{\varpi_i\}_{1 \le i \le 3}$
with signatures and definitions
\begin{gather*}
\varpi_3 : \mathfrak{M} \to \mathfrak{P}(\chi_{A \times \*V}), \quad \varpi_2: \mathfrak{P}(\chi_{A \times \*V}) \to  \CartProd_{\alpha \in A} \mathfrak{P}(\chi_{\*V}), \quad \varpi_1: \CartProd_{\alpha \in A} \mathfrak{P}(\chi_{\*V}) \to \mathfrak{P}(\chi_{\*V});\\
    \varpi_3: \SCM \mapsto p^{\SCM}_{\mathrm{cf}}, \quad \varpi_2: \mu_3 \mapsto \big(\varsigma_{\{\alpha\} \times \*V}(\mu_3)\big)_{\alpha \in A}, \quad \varpi_1: (\mu_\alpha)_{\alpha \in A} \mapsto  \mu_{\varnothing \coloneqq ()} = \pi_{\varnothing \coloneqq ()}\big((\mu_\alpha)_\alpha\big).
\end{gather*}
The \emph{causal hierarchy} consists of three sets $\{\mathfrak{S}_i\}_{1 \le i \le 3}$ defined as images or projections of $\mathfrak{M}$:
\begin{equation*}
    \mathfrak{S}_3 = \varpi_3(\mathfrak{M}), \quad \mathfrak{S}_2 = \varpi_2(\mathfrak{S}_3), \quad \mathfrak{S}_1 = \varpi_1(\mathfrak{S}_2).
\end{equation*}
\end{definition}
These are the three \emph{Levels} of the hierarchy. The definitions cohere with those of \S{}\ref{scms} (and, e.g., \cite{Pearl2009,BCII2020}):
\begin{fact}
Let $\SCM \in \mathfrak{M}$.
Then
$\mu_3 = \varpi_3(\SCM) \in \mathfrak{S}_3$ trivially coincides with its counterfactual distribution as defined in \S{}\ref{ss:scml3}, while
$(\mu_\alpha)_\alpha = \varpi_2(\mu_3) \in \mathfrak{S}_2$ coincides with the indexed family of all its interventional distributions (\S{}\ref{ss:scml2}), i.e., $\pi_{\*W \coloneqq \*w} \big((\mu_\alpha)_\alpha\big) = p^{\SCM_{\*W \coloneqq \*w}}$ for each $\*W \coloneqq \*w \in A$.
Finally $\mu = \varpi_1\big((\mu_\alpha)_\alpha\big) \in \mathfrak{S}_1$ coincides with its observational distribution (\S{}\ref{ss:scml1}).
\end{fact}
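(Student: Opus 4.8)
The plan is to verify the three claims in descending order, at each level reducing to the definitions from \S\ref{scms} and invoking the Fact that a Borel probability measure is determined by its values on a basis (namely the cylinder sets). The top-level claim requires nothing: by the definition of the causal projection, $\varpi_3(\SCM) = p^{\SCM}_{\mathrm{cf}}$, so $\mu_3$ is literally the counterfactual distribution constructed in \S\ref{ss:scml3}.

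For the interventional level, I would fix an arbitrary intervention $\alpha = (\*W \coloneqq \*w) \in A$ and show that the corresponding component $\varsigma_{\{\alpha\} \times \*V}(p^{\SCM}_{\mathrm{cf}})$ equals $p^{\SCM_{\*W \coloneqq \*w}}$. By the basis Fact it suffices to check agreement on each cylinder $\*y = \pi_{\*Y}^{-1}(\{\*y\})$ with $\*Y \subset \*V$ finite. Under the identification $\chi_{\{\alpha\} \times \*V} \cong \chi_{\*V}$, the marginal assigns $\*y$ the value that $p^{\SCM}_{\mathrm{cf}}$ gives to the single-intervention counterfactual cylinder $\*y_{\*w}$, i.e.\ the basis element that constrains only the $\alpha$-coordinate. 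Unwinding the definition of $p^{\SCM}_{\mathrm{cf}}$ on this one-term cylinder yields $P\big((m^{\SCM_{\*W \coloneqq \*w}})^{-1}(\*y)\big)$, which is exactly the definition of the observational distribution $p^{\SCM_{\*W \coloneqq \*w}}(\*y)$ of the manipulated model. Hence the two measures agree on the basis, and therefore everywhere.

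For the observational level, $\varpi_1$ is by definition the projection $\pi_{\varnothing \coloneqq ()}$ onto the empty-intervention component of the indexed family. By the interventional claim just established, that component is $p^{\SCM_{\varnothing \coloneqq ()}}$, and by the earlier remark an empty intervention is a passive observation, so $p^{\SCM_{\varnothing \coloneqq ()}} = p^{\SCM}$. Thus $\mu$ coincides with the observational distribution of \S\ref{ss:scml1}.

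The only point requiring genuine care---the main obstacle---is the bookkeeping in the interventional step: one must confirm that marginalizing a measure on the large product space $\chi_{A \times \*V}$ onto the slice $\chi_{\{\alpha\} \times \*V}$ corresponds, at the level of the bases introduced in \S\ref{ss:scml3}, precisely to restricting the joint counterfactual cylinders to those constraining the single coordinate $\alpha$. Once this correspondence between the two bases is pinned down, all three claims follow mechanically from the definitions of $m^{\SCM}$, $p^{\SCM}_{\mathrm{cf}}$, and the manipulated models, with no further analytic input beyond the basis Fact.
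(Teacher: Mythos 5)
Your proposal is correct and matches the paper's treatment: the paper states this as a \emph{Fact} with no separate proof precisely because it follows by unwinding the definitions of $\varpi_3$, $\varsigma_{\{\alpha\}\times\*V}$, and $\pi_{\varnothing \coloneqq ()}$, exactly as you do. Your careful step---identifying the marginal of $p^{\SCM}_{\mathrm{cf}}$ on the slice $\chi_{\{\alpha\}\times\*V}$ with its values on single-intervention cylinders $\*y_{\*w}$ and then invoking the basis Fact---is the right (and only nontrivial) bookkeeping needed.
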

Thus, e.g., $\mathfrak{S}_3$ is the set of counterfactual distributions that are consistent with at least some SCM from $\mathfrak{M}$. It is a fact that $\mathfrak{S}_3 \subsetneq \mathfrak{P}(\chi_{A \times \*V})$ and similarly not every interventional family belongs to $\mathfrak{S}_2$; see
Appendix \ref{app:causalhierarchy} for explicit characterizations.
At the observational level, this is simple:
\begin{fact}
$\mathfrak{S}_1 = \mathfrak{P}(\chi_{\*V})$ in the finite case. In the infinite case, $\mathfrak{S}_1 = \{ \mu \in \mathfrak{P}(\chi_{\*V}): \mu \text{ is atomless}\}$. \label{prop:alternative}
\end{fact}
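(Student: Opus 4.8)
The plan is to first unwind the projections so that $\mathfrak{S}_1$ becomes a realizability question. Composing the three causal projections, $\varpi_1\big(\varpi_2(\varpi_3(\SCM))\big)$ is the empty-intervention marginal $\varsigma_{\{\varnothing \coloneqq ()\}\times\*V}(p^\SCM_{\text{cf}})$, which by the coherence Fact just above equals the observational distribution $p^\SCM$. Hence $\mathfrak{S}_1 = \{\,p^\SCM : \SCM \in \mathfrak{M}\,\}$, and it suffices to determine which $\mu \in \mathfrak{P}(\chi_{\*V})$ arise as observational distributions of SCMs in $\mathfrak{M}$ (atomless ones, in the infinite case). I would then establish the two containments separately.

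For the ``$\supseteq$'' containment I would exhibit, for a given target $\mu$, a witnessing SCM with empty influence relation: take a single exogenous variable $U$ with $\chi_U = \chi_{\*V}$ and law $P = \mu$ (exogenous signatures are unconstrained), let $\mathbf{Pa}(V) = \varnothing$ for every $V$, and set $f_V(u) = \pi_V(u)$. Then $m^\SCM$ is the identity, so $p^\SCM = \mu$, and the empty relation is trivially well-founded and locally finite. Since $\mu$ is arbitrary, this already yields $\mathfrak{S}_1 = \mathfrak{P}(\chi_{\*V})$ in the finite case (the reverse inclusion being immediate, as each $p^\SCM$ is a probability measure). In the infinite case I must additionally check the witness lies in $\mathfrak{M}$, i.e.\ is atomless. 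Because no variable has endogenous parents, each manipulated map $m^{\SCM_{\*W \coloneqq \*w}}$ is just $m^\SCM$ with the $\*W$-coordinates overwritten by $\*w$, so the joint map $u \mapsto \big(m^{\SCM_\alpha}(u)\big)_{\alpha \in A}$ is injective (its empty block alone recovers $u$). An injective measurable pushforward carries an atomless measure to an atomless measure, whence $p^\SCM_{\text{cf}}$ is atomless whenever $\mu$ is; this gives $\{\mu : \mu \text{ atomless}\} \subseteq \mathfrak{S}_1$.

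For the ``$\subseteq$'' containment in the infinite case, I would start from the observation that $p^\SCM = \varsigma_{\{\varnothing\coloneqq()\}\times\*V}(p^\SCM_{\text{cf}})$ is a single-coordinate marginal of the atomless measure $p^\SCM_{\text{cf}}$, and aim to conclude that $p^\SCM$ is itself atomless. I expect this step to be the main obstacle, and the part demanding the most care. Transferring atomlessness across a marginalization is not a generic measure-theoretic fact: a coordinate marginal of an atomless measure can carry a point mass once the remaining coordinates absorb the continuous variation --- which is exactly what an SCM can arrange when the empty-intervention block is (nearly) deterministic while some other intervention block spreads its mass over infinitely many freed exogenous bits. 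The real work is therefore to rule this out using the structure that ties the intervention blocks together, for instance by promoting any candidate observational atom at $\*v$ to an honest atom of $p^\SCM_{\text{cf}}$ and contradicting atomlessness, or by sharpening the hypothesis so that atomlessness is controlled at the level of each single-intervention marginal rather than only on the full counterfactual space. Getting this transfer right is precisely where the argument must engage the causal structure, not merely generic measure theory.
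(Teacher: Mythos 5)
Your realizability direction is, in substance, the paper's entire proof. The paper derives this Fact from Lemma~\ref{lem:acausals1canonical} (adapted from Suppes--Zanotti): given an atomless $\mu$, build a standard-form SCM whose noise distribution is \emph{acausal}, i.e., concentrated on collections of almost-surely constant mechanisms with joint law $\mu$. Your witness---a single exogenous variable with range $\chi_{\*V}$ and law $\mu$, empty influence relation, identity mechanisms---is the same ``no causal effects'' construction in different packaging, and your injectivity argument (the empty-intervention block recovers $\*u$, so $p^{\SCM}_{\text{cf}}$ is an injective pushforward of the atomless $\mu$) is precisely the detail suppressed in the paper's remark that its construction is ``clearly acausal and atomless.''

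The inclusion you could not complete---that in the infinite case every $\mu \in \mathfrak{S}_1$ is atomless---is a gap in your proposal, but it is equally a gap in the paper: the proof given there consists solely of the realizability lemma and never argues this converse inclusion. In fact your diagnosis is sharper than the paper's treatment, because the step fails under the paper's literal definitions, by exactly the mechanism you describe. Take $\*V = \{X, V_1, V_2, \dots\}$, exogenous i.i.d.\ fair bits $U_1, U_2, \dots$, and set $f_X \equiv 0$ and $f_{V_i}(x, u_i) = x \cdot u_i$. Then $p^{\SCM} = \delta_{(0,0,\dots)}$ has an atom, yet the marginal of $p^{\SCM}_{\text{cf}}$ on the $X \coloneqq 1$ block is an infinite product of fair coins, hence atomless, hence $p^{\SCM}_{\text{cf}}$ is atomless and $\SCM \in \mathfrak{M}$. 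So $\mathfrak{S}_1$ as defined by the projections contains non-atomless measures, and no argument ``engaging the causal structure'' can rescue the transfer: the statement being transferred is simply false when atomlessness is imposed only on the joint counterfactual measure. The Fact becomes correct under the sharpening you suggest at the end---requiring atomlessness of each single-intervention (or at least the observational) marginal in the definition of an atomless SCM---in which case your ``$\subseteq$'' is trivial and your ``$\supseteq$'' witness still qualifies. In short: what you proved is what the paper proves; the step you flagged as the real obstacle is unproven there too, and cannot be proven as stated.
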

We will also use the subsets $\{\mathfrak{S}^\prec_i\}_{i}$ and $\{\mathfrak{S}^X_i\}_{i}$, which are defined analogously but via projection from $\mathfrak{M}_\prec$ and $\mathfrak{M}_X$ respectively.

\subsection{Problems of Causal Inference} \label{subsection:probs}
As elucidated in \cite{pearl2018book,BCII2020}, the causal hierarchy helps characterize many standard problems of causal inference, in as far as these problems typically involve ascending levels of the hierarchy. Some examples include: 
\begin{enumerate}
    \item Classical identifiability: given observational data about some variables in $\*V$, estimate a \emph{causal effect} of setting variables $\*X$ to values $\*x$ \citep{Pearl1995,Spirtes2001}. In the notation here, given information about $p^{\mathcal{M}}(\*V)$, can we determine $p^{\mathcal{M}_{\*X \coloneqq \*x}}(\*Y)$?
    \item General identifiability: given a mix of observational data and limited  experimental data---that is, information about $p^{\mathcal{M}}(\*V)$ as well as some experimental distributions of the form $p^{\mathcal{M}_{\*W \coloneqq \*w}}(\*V)$---determine $p^{\mathcal{M}_{\*X \coloneqq \*x}}(\*Y)$ \citep{tian-2002,lee2019general}.
    \item Structure learning: given observational data, and perhaps experimental data, infer properties of the underlying causal influence relation $\rightarrow$ \citep{Spirtes2001,Peters}.
    \item Counterfactual estimation: given a combination of observational and experimental data, infer a counterfactual quantity, such as probability of necessity  \citep{robins-greenland}, or probability of necessity and sufficiency \citep{Pearl1999,Tian2000} (see also \S\ref{section:pns} below). 
        \item Global identifiability: given observational data drawn from $p^{\mathcal{M}}(\*V)$ infer the full counterfactual distribution $p_{\text{cf}}^{\SCM}(A \times \*V)$ \citep{JMLR:v7:shimizu06a,drton2011}.
\end{enumerate} This is not an exhaustive list, and these problems are not all independent of one another. They are also all unsolvable in general. Problems 1, 2, and 3 involve ascending to Level 2 given information at Level 1 (and perhaps partial information at Level 2); problems 4 and 5 ask us to ascend to Level 3 given only Level 1 (and perhaps also Level 2) information. The upshot of the causal hierarchy theorem from \cite{BCII2020} is that these steps are impossible without assumptions, formalizing the common wisdom, ``no causes in, no causes out'' \cite{Cartwright}. 
To understand the statement of the causal hierarchy theorem---and our topological version of it---we explain what it means for the hierarchy to collapse. 

\subsection{Collapse of the Hierarchy}\label{ss:collapse}

In the present setting a \emph{collapse} of the hierarchy can be understood in terms of injectivity of the functions $\varpi_i$.
For $i = 1, 2$ let $\mathfrak{C}_i \subset \mathfrak{S}_i$ be the injective fibers of $\varpi_i$, i.e.,
$\mathfrak{C}_i = \{\mu_i \in \mathfrak{S}_i : \mu_{i+1} = \mu_{i+1}' \text{ whenever } \varpi_i(\mu_{i+1}) = \varpi_i(\mu'_{i+1}) = \mu_i \}$. 
Every element $\mu \in \mathfrak{C}_i$ is a witness to (global) collapse of the hierarchy: knowing $\mu$ would be sufficient to determine the Level $i+1$ facts completely.

\begin{figure} \centering 
\subfigure [Causal Hierarchy] {
   \begin{tikzpicture}[framed]
  \node (m) at (0,0) {$\mathfrak{M}$}; 
  \node (ss3) at (1.5,0) {$\mathfrak{S}_3$};
  \node (ss2) at (3,0) {$\mathfrak{S}_2$};
  \node (ss1) at (4.5,0) {$\mathfrak{S}_1$};
  
  \path (m) edge[->]  (ss3);
  \node (l1) at (.7,.15) {\small{}$\varpi_3$};
  \path (ss3) edge[->]  (ss2);
    \node (l2) at (2.25,.15) {\small{}$\varpi_2$};
  \path (ss2) edge[->]  (ss1);
    \node (l3) at (3.75,.15) {\small{}$\varpi_1$};
  
  \node (b1) at (-.5,0) {};
  \node (b2) at (5.5,0) {}; 
  
  \node (ns1) at (1.4,-1.25) {$\mathfrak{S}^{X \to Y}_{2}$};
  \node (b1) at (2.8,-1.25) {$\dots$};
  \node (ns2) at (4.4,-1.25) {$\mathfrak{S}^{X' \to Y'}_{2}$};
  
    \path (ss2) edge[->]  (ns1);
   \node (l3) at (1.6,-.55) {\small{}$\varpi^{X \to Y}_{2}$};
  \path (ss2) edge[->]  (ns2);
  \node (l4) at (4.5,-.55) {\small{}$\varpi^{X' \to Y'}_{2}$};
   \end{tikzpicture}
   }
\subfigure [Collapse Set $\mathfrak{C}_2$] {
   \begin{tikzpicture}
\draw (.7,1.2) ellipse (2.5cm and .5cm);
\draw [gray!60,fill=gray!20] (.15,0) ellipse (1.1cm and .25cm);
\draw (.7,0) ellipse (2cm and .4cm);
\node (s0) at (1.5,0) {$\textcolor{blue}{\bullet}$}; 
\node (t0) at (1.2,1.2) {$\textcolor{blue}{\bullet}$};
\node (t1) at (1.55,1.2) {$\textcolor{blue}{\bullet}$};
\node (t2) at (1.9,1.2) {$\textcolor{blue}{\bullet}$};
\path (t0) edge[->] (s0);
\path (t1) edge[->] (s0);
\path (t2) edge[->] (s0);

\node (s5) at (2.15,0) {$\textcolor{violet}{\bullet}$}; 
\node (t6) at (2.3,1.2) {$\textcolor{violet}{\bullet}$};
\node (t7) at (2.75,1.2) {$\textcolor{violet}{\bullet}$};
\path (t6) edge[->] (s5);
\path (t7) edge[->] (s5);

\node (s1) at (.15,0) {$\textcolor{orange}{\bullet}$};
\node (s2) at (-.3,0) {$\textcolor{green}{\bullet}$};
\node (t3) at (.15,1.2) {$\textcolor{orange}{\bullet}$};
\node (t4) at (-.3,1.2) {$\textcolor{green}{\bullet}$};
\node (t5) at (-1,1.2) {$\textcolor{red}{\bullet}$};
\path (t3) edge[->] (s1);
\path (t4) edge[->] (s2);
\path (t5) edge[->,color=gray] (s2);
\node (x) at (-.675,.6) {$\textcolor{red}{\mathsf{X}}$};
\node (C2) at (.65,0) {$\mathfrak{C}_2$};

\node (ss3) at (3.6,1.2) {$\mathfrak{S}_3$};
\node (ss2) at (3.6,0) {$\mathfrak{S}_2$};

\path (ss3) edge[->] (ss2);

\node (f) at (3.3,.6) {$\varpi_2$};

\end{tikzpicture} 
}
    \caption{(a) $\mathfrak{S}_3$ can be seen as a coarsening of $\mathfrak{M}$, abstracting from irrelevant ``intensional'' details. $\mathfrak{S}_2$ is obtained from $\mathfrak{S}_3$ by marginalization (also a coarsening), while $\mathfrak{S}_1$ is a projection of $\mathfrak{S}_2$ via the ``empty'' intervention. Each map $\varpi_i$, $i=1,2$, is continuous in the respective weak topology (Prop. \ref{prop:causalprojectioncontinuous}). The projections $\varpi^{X \to Y}_{2}$ from $\mathfrak{S}_2$ to the 2VE-spaces are likewise continuous and also open (Prop. \ref{prop:causalprojectioncontinuous}). \\
    (b) The shaded region, $\mathfrak{C}_2 \subset \mathfrak{S}_2$, is the collapse set in which Level 2 facts determine all Level 3 facts: those points in $\mathfrak{S}_2$ whose $\varpi_2$-preimage in $\mathfrak{S}_3$ is a singleton set. The main result of this paper is that $\mathfrak{C}_2$ is \emph{meager} in weak topology on $\mathfrak{S}_2$ (Thm. \ref{thm:hierarchyFormal}). This means $\mathfrak{C}_2$ contains no open subset, which by Thm. \ref{thm:l2learning} implies no part of $\mathfrak{C}_2$ is statistically verifiable, even with infinitely many ideal experiments.}\label{fig:hierarchy}
\end{figure}

A first observation is that $\varpi_1$ is \emph{never} injective. In other words, the distribution $p^{\mathcal{M}}(\*V)$ never determines all the interventional distributions $p^{\mathcal{M}_{\*X \coloneqq \*x}}(\*Y)$. This is essentially a way of stating that correlation never implies causation absent assumptions. (See also \cite[Thm. 1]{BCII2020}.) 
\begin{proposition}
$\mathfrak{C}_1 = \varnothing$. That is, Level 2 \emph{never} collapses to Level 1 without assumptions. \label{prop:collapse1}
\end{proposition}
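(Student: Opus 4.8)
The plan is to show directly that no observational distribution has a singleton fiber under $\varpi_1$, by exhibiting for each $\mu_1 \in \mathfrak{S}_1$ two SCMs in $\mathfrak{M}$ that share $\mu_1$ as observational distribution but whose Level 2 families differ at a single intervention. Since $\varpi_1$ merely reads off the empty-intervention (observational) component of a family $(\mu_\alpha)_\alpha$, both resulting families lie in $\mathfrak{S}_2$ and satisfy $\varpi_1(\cdot) = \mu_1$, so the fiber over $\mu_1$ has at least two elements and $\mu_1 \notin \mathfrak{C}_1$. As $\mu_1$ is arbitrary, this gives $\mathfrak{C}_1 = \varnothing$. Throughout I fix two distinct endogenous variables $X, Y \in \*V$ (available in the nontrivial case $|\*V| \ge 2$; recall $\*V$ is infinite in the infinite case).

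The first model $\SCM_A$ is the ``copy'' model: take exogenous variables $\{U_V\}_{V \in \*V}$ with $\chi_{U_V} = \chi_V$, let $P$ be $\mu_1$ transported along the identification $\chi_{\*U} \cong \chi_{\*V}$, and set each mechanism to $V = U_V$, so that $\mathbf{Pa}(V) = \varnothing$ for all $V$. Then $p^{\SCM_A} = \mu_1$, and because no endogenous variable influences another, any intervention leaves the non-intervened coordinates at their $\mu_1$-marginal. The second model $\SCM_B$ agrees with $\SCM_A$ on every variable except $Y$, whose mechanism I rewire to depend on the parent $X$ together with the \emph{shared} exogenous noise $U_X$ (the same noise driving $X$, so that $X$ and $Y$ become confounded) via a function $h$ chosen so that: (i) on the ``diagonal'' $X = U_X$ one has $h = U_Y$, forcing $Y = U_Y$ observationally and hence $p^{\SCM_B} = \mu_1$ as well; while (ii) off the diagonal $h$ is constant, so that under the intervention $X \coloneqq x_0$ the value of $Y$ is changed on the event $\{U_X \ne x_0\}$. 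Adding only the edge $X \to Y$ keeps the influence relation well-founded and locally finite, so $\SCM_A, \SCM_B \in \mathfrak{M}$; atomlessness in the infinite case is inherited from $\mu_1$, since the observational coordinate is already an atomless marginal of the counterfactual distribution, and an atomless marginal forces the whole distribution to be atomless.

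It remains to choose $h$ and $x_0$ so that the $X \coloneqq x_0$ marginals of $Y$ genuinely differ between the two models. A short computation shows that the difference in $p(Y = 1)$ equals $\mu_1(X = 1-x_0,\, Y = 0)$ when $h \equiv 1$ off the diagonal, and $-\mu_1(X = 1-x_0,\, Y = 1)$ when $h \equiv 0$ off the diagonal. The main (and essentially only) delicate point is that this must be made nonzero for \emph{every} $\mu_1$, including product and degenerate distributions where naively adding a cause can leave all interventional distributions unchanged; I handle it by a case split on the $(X,Y)$-marginal of $\mu_1$. If $\mu_1(Y = 0) > 0$, pick $x_0$ with $\mu_1(X = 1-x_0,\, Y = 0) > 0$ and take $h \equiv 1$ off the diagonal, giving a strictly positive difference. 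Otherwise $Y = 1$ almost surely, and picking $x_0$ with $\mu_1(X = 1-x_0) > 0$ together with $h \equiv 0$ off the diagonal gives difference $-\mu_1(X = 1-x_0) < 0$. In either case $p^{\SCM_{B,\,X \coloneqq x_0}} \ne p^{\SCM_{A,\,X \coloneqq x_0}}$ already on the $Y$-marginal, so the two Level 2 families differ while sharing observational component $\mu_1$, completing the argument.
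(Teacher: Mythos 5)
Your proof is correct, and it reaches the conclusion by a more elementary, self-contained route than the paper's. Both arguments share the same germ: realize an arbitrary $\mu_1 \in \mathfrak{S}_1$ by a model with no causal influences, then perturb the behavior of $Y$ only in counterfactual configurations that are invisible observationally. But the paper executes this inside its standard-form machinery: it invokes Lemma~\ref{lem:acausals1canonical} to represent $\mu_1$ by an \emph{acausal} measure $\nu \in \mathfrak{S}^\prec_{\mathrm{std}}$ over deterministic mechanisms, then defines a second measure $\nu'$ directly on cylinder sets of the mechanism space, flipping the counterfactual value $Y_{x^\dagger}$ exactly on an observational cell $(x^*, y^*)$ chosen to have positive mass. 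Because such a cell always exists and the flip changes $\nu_{X \coloneqq x^\dagger}(y^\dagger)$ by exactly $\mu(x^*,y^*) > 0$, the paper needs no case analysis. Your constant-off-the-diagonal modification, by contrast, can be invisible at Level 2 for degenerate marginals --- which is precisely why you need the case split on $\mu_1(Y=0)$, and you handle it correctly. What your approach buys is independence from the standard-form apparatus (\S\ref{ss:app:standardform}): two explicit SCMs, with the perturbation realized as genuine confounding through the shared noise $U_X$, and a direct computation of the interventional marginals. What the paper's approach buys is reuse: the standard form and the acausal lemma are load-bearing elsewhere (e.g., in Example~\ref{example:collapse} and Lemma~\ref{lem:separation}), so its proof of this proposition is a few lines of bookkeeping on top of infrastructure already built. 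Two details you got right that are worth flagging: the observation that an atomless observational marginal forces atomlessness of the full counterfactual distribution (which is what places both of your models in $\mathfrak{M}$ in the infinite case), and the caveat $|\mathbf{V}| \ge 2$ --- the paper's proof needs this too (it takes $X$ and $Y$ to be the first two variables under $\prec$), and the proposition would in fact fail over a single endogenous variable, where Level 2 is forced by Level 1.
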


To overcome this formidable inferential barrier, researchers often assume we are not working in the ``full'' space $\mathfrak{M}$ of all causal models, but rather some proper subset embodying a range of causal assumptions. This may effectively eliminate counterexamples to collapse (cf. Fig. \ref{fig:hierarchy}(b)). For problems of type 1 or 2 (from the list above in \S\ref{subsection:probs}) it is common to assume we are only dealing with models whose graph (direct influence relation) $\rightarrow$ satisfies a fixed set of properties. For problems of type 3 it is common to assume that $p^\mathcal{M}$ and $\rightarrow$ relate in some way (for instance, through an assumption like \emph{faithfulness} or \emph{minimality} \cite{Spirtes2001}). All of these problems become solvable with sufficiently strong assumptions about the form of the functions $\{f_V\}_V$ or the probability space $P$.

In some cases, the relevant causal assumptions are justified by appeal to background or expert knowledge. In other cases, however, an assumption will be justified by the fact that it rules out only a ``small'' or ``negligible'' or ``measure zero'' part of the full set $\mathfrak{M}$ of possibilities. As emphasized by a number of authors \cite{Freedman1997,Uhler,pmlr-v117-lin20a}, not all ``small'' subsets are the same, and it seems reasonable to demand further justification for eliminating one over another. We believe that the framework presented here can contribute to this positive project, but our immediate interest is in solidifying and clarifying limitative results about what cannot be done.



The issue of collapse becomes especially delicate when we turn to $\mathfrak{C}_2$. When do interventional distributions fully determine counterfactual distributions? In contrast to Prop. \ref{prop:collapse1} we have:
\begin{proposition} $\mathfrak{C}_2 \neq \varnothing$. That is, there exists an SCM in which Level 3 collapses to Level 2. 
\end{proposition}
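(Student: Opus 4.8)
The plan is to witness $\mathfrak{C}_2 \neq \varnothing$ by exhibiting a single model $\SCM \in \mathfrak{M}$ whose interventional family $\varpi_2(\varpi_3(\SCM))$ has a \emph{unique} $\varpi_2$-preimage inside $\mathfrak{S}_3$. The guiding observation is a rigidity principle for couplings: if every interventional marginal of a counterfactual distribution is a point mass, then the joint counterfactual distribution is forced, since a family of Dirac marginals admits exactly one coupling, namely the corresponding product Dirac. Thus it suffices to find an $\SCM$ all of whose interventional distributions are as rigid as possible.

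For finite $\*V$, where the atomlessness constraint is vacuous, I would take $\SCM$ to be \emph{deterministic} --- e.g.\ with a one-point exogenous space (equivalently $P$ a point mass) and constant mechanisms $f_V \equiv 0$. Then for each intervention $\*W \coloneqq \*w \in A$ the induced map $m^{\SCM_{\*W \coloneqq \*w}}$ is constant, so each interventional distribution $p^{\SCM_{\*W \coloneqq \*w}}$ is a Dirac measure. The key step is then the elementary lemma that if $\mu_3, \mu_3' \in \mathfrak{S}_3$ have the same image under $\varpi_2$ and every slice-marginal $\varsigma_{\{\alpha\}\times\*V}(\mu_3)$ is a Dirac measure, then $\mu_3 = \mu_3'$; indeed, Dirac slice-marginals make every coordinate of $\chi_{A\times\*V}$ almost surely constant, so $\mu_3$ is the product Dirac determined by $\varpi_2(\mu_3)$. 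Evaluating both measures on the cylinder basis of $\chi_{A\times\*V}$ (Definition~\ref{def:basictopology}) and recalling that a Borel probability measure is determined by its values on a basis then yields $\mu_3 = \mu_3'$, so $\varpi_2(\varpi_3(\SCM)) \in \mathfrak{C}_2$.

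The infinite case is the main obstacle, because atomlessness rules out a deterministic witness: its counterfactual law would be a single atom. Atomlessness forces genuine randomness into infinitely many coordinates, and the danger is that for any source-like random variable one can re-route its exogenous noise through a fictitious parent, producing a \emph{distinct} $\SCM' \in \mathfrak{M}$ with identical interventional distributions but a different cross-world coupling --- precisely the non-identifiability on which the causal hierarchy theorem rests. The plan is therefore to engineer the witness so that this re-routing is unavailable: I would seek an atomless $\SCM$ whose requisite randomness sits in variables whose cross-world couplings are nonetheless pinned by the Level~2 marginals --- for instance by inserting deterministic copy-dependencies that render the relevant interventional conditionals degenerate while keeping the observational law atomless --- and then argue, using acyclicity together with the matched interventional marginals, that no alternative coupling is realizable in $\mathfrak{S}_3$. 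Establishing this last point, i.e.\ that the matched Level~2 data leaves no counterfactual freedom, is the delicate crux and the step I expect to demand the most care.
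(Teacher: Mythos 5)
Your finite-variable argument is correct and is essentially the paper's own: a fully deterministic SCM has Dirac interventional marginals, and a countable family of Dirac slice-marginals admits exactly one joint measure on $\chi_{A \times \mathbf{V}}$ (each coordinate is almost surely constant, and a countable intersection of measure-one events has measure one), so the resulting Level-2 family has a unique $\varpi_2$-preimage in $\mathfrak{S}_3$ and hence lies in $\mathfrak{C}_2$.

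The genuine gap is the infinite case, which the statement does require: the paper fixes an arbitrary countable $\mathbf{V}$, and when $\mathbf{V}$ is infinite every model in $\mathfrak{M}$ is assumed atomless, so, as you correctly observe, the deterministic witness is unavailable. For this case you offer only a strategy with the crucial step explicitly deferred: you never exhibit a concrete atomless model, and you acknowledge that the rigidity claim (``matched Level-2 data leaves no counterfactual freedom'') is unestablished. That claim is exactly the content of the proposition in this regime, so the proof is incomplete precisely where it is hardest. For comparison, the paper's Example~\ref{example:collapse} (Appendix~\ref{app:causalhierarchy}) does this work: it takes a standard-form model in which, almost surely, every mechanism outputs $1$ on every nonzero predecessor valuation, with independent fair coins governing the first variable $X$ and each variable's response to the all-zero valuation. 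This model is atomless (infinitely many independent bits), yet collapse holds: the degenerate interventional facts $\mu_{\mathbf{Pred}(V) \coloneqq \mathbf{p}}(V = 1) = 1$ for $\mathbf{p} \neq (0,\dots,0)$ force any standard-form measure with the same Level-2 projection to concentrate on the same mechanism profiles, and an inclusion--exclusion induction then writes the probability of every remaining mechanism cylinder as a signed combination of single-intervention (Level-2) quantities. Your ``deterministic copy-dependencies'' idea points in this general direction, but note that literal copying fails: if the only randomness is in a source variable that all others copy, the counterfactual law has atoms, so fresh randomness is needed at infinitely many variables, which is what makes the pinning-down argument nontrivial. Without the construction and the accompanying reduction argument, the infinite case remains unproven.
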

\begin{proof}[Proof sketch] As a very simple example in the finite case, any fully deterministic SCM will result in collapse. This is because, if $(\mu_{\alpha})_{\alpha \in A}$ are all binary-valued then the measure $\mu_3 \in \mathfrak{P}\big(\CartProd_{\alpha \in A} \chi_{\*V}\big)$ that produces the marginals $\mu_{\alpha}$ is completely determined: each $\mu_{\alpha}$ specifies an element of $\chi_{\*V}$, so $\mu_3$ must assign unit probability to the tuple that matches $\mu_{\alpha}$ at the $\alpha$ projection. 
In the infinite case, any example must be non-deterministic by atomlessness, but collapse is still possible; see Example \ref{example:collapse} in Appendix \ref{app:causalhierarchy}.
\end{proof}

\subsection{Probabilities of Causation} \label{section:pns}

A handful of counterfactual quantitites over two given variables, collected below, have been particularly prominent in the literature (e.g., \cite{Pearl1999}). Our main result will show that \emph{any} of these six quantities (for any two fixed variables) is robust against collapse.
Below, fix two distinct variables $Y \neq X \in \*V$ and distinct values $x \neq x' \in \chi_X$, $y \neq y' \in \chi_Y$.
\begin{definition}\label{def:probcaus}
The \emph{probabilities of causation} are the following quantities:
\begin{align*}
    P(y_x, y'_{x'}): & \text{ probability of necessity and sufficiency}\\
    P(y'_x, y_{x'}): & \text{ converse prob. of necessity and sufficiency}\\
    P(y'_{x'} \mid x, y): & \text{ prob. of necessity} \qquad P(y_x \mid x', y'): \text{ prob. of sufficiency}\\
    P(y'_{x'} \mid y): & \text{ prob. of disablement} \qquad P(y_x \mid y'): \text{ prob. of enablement}
\end{align*}
\end{definition}
Consider, for example, the probability of necessity and sufficiency (PNS), which is the joint probability that $Y$ would take on value $y$ if $X$ is set by intervention to $x$, and $y'$ if $X$ is set to $x'$.
PNS has been thoroughly studied \citep{Pearl1999,Tian2000,avin:etal05}, in part due to its widespread relevance: from medical treatment to online advertising, we would like to assess which interventions are likely to be both \emph{necessary} and \emph{sufficient} for a given outcome.
Using the notation from \S{}\ref{ss:scml3}, PNS concerns the measure of sets
$y_{x}, y'_{x'} = \pi^{-1}_{(X \coloneqq x, Y)}(\{y\}) \cap \pi^{-1}_{(X \coloneqq x', Y)}(\{y'\})$.


The probabilities of causation are paradigmatically Level 3, and we will be interested in their manifestations at Level 2. In that direction we introduce a small part of $\mathfrak{S}_2$, just enough to witness the behavior of $Y$ (and $X$) under the empty intervention and the two possible interventions on $X$:
\begin{definition}
Let $A_X =\{\varnothing \coloneqq (), X \coloneqq 0, X \coloneqq 1\}$. 
Define a small subspace $\mathfrak{S}^{X \to Y}_{2}\subset \CartProd_{\alpha \in A_X} \mathfrak{P}(\chi_{\{X,Y\}})$ as the image of the map $\varpi^{X\to Y}_{2} = \big(\varsigma_{\{X, Y\}} \times \varsigma_{\{X, Y\}} \times \varsigma_{\{X, Y\}}\big) \circ \pi_{A_X}$ (see Fig. \ref{fig:hierarchy}(a)). Call $\mathfrak{S}^{X \to Y}_{2}$ a \emph{two-variable effect} (2VE) \emph{space}; fixing $X$, we have a 2VE-space for each $Y$.
\end{definition}

It is known in the literature that the probabilities of causation are not identifiable from the data $p(X,Y)$, $p(Y_{x})$, and $p(Y_{x'})$ (see, e.g., \cite{avin:etal05} for PNS). As part of our proof of Theorem \ref{thm:hierarchyFormal} below, we will strengthen this considerably to show them all to be \emph{generically} unidentifiable, in a topological sense to be made precise.






\section{The Weak Topology} \label{sec:weaktopology}

We now demonstrate how $\mathfrak{S}_1,\mathfrak{S}_2$ and $\mathfrak{S}_3$ can be topologized.  
In general, given a space $\vartheta$ and the set $\mathfrak{S} = \mathfrak{P}(\vartheta)$ of Borel probability measures on $\vartheta$, a natural topology on $\mathfrak{S}$ can be defined as follows:
\begin{definition}
For a sequence $(\mu_n)_n$ of measures in $\mathfrak{S}$,
write $(\mu_n)_n \Rightarrow \mu$ and say it \emph{converges weakly} \citep[p.~7]{Billingsley} to $\mu$ if $\int_{\vartheta} f \, \mathrm{d}\mu_n \to \int_{\vartheta} f \, \mathrm{d}\mu$ for all bounded, continuous $f : \vartheta \to \mathbb{R}$.
Then the \emph{weak topology} $\TopoWeak$ on $\mathfrak{S}$ is that with the following closed sets: $E \subset \mathfrak{S}$ is closed in $\TopoWeak$ iff for any weakly convergent sequence $(\mu_n)_n \Rightarrow \mu$ in which every $\mu_n \in E$, the limit point $\mu$ is in $E$.
\end{definition}
There are several alternative characterizations of $\TopoWeak$, which hold under very general conditions. For instance, it coincides with the topology induced by the so called L\'{e}vy-Prohorov metric \citep{Billingsley}. 
The most useful characterization for our purposes is that it 
can be generated by subbasic open sets of the form \begin{equation}  \{\mu: \mu(X)>r\}
\label{subbasis-weak}
\end{equation} with $X$ ranging over basic clopens in $\vartheta$ and $r$ over rationals (see, e.g., \cite[Lemma A.5]{GeninKelly}). 

Conceptually, the explication of $\TopoWeak$ in terms of weak convergence strongly suggests a connection with statistical learning. We now make this connection precise, building on existing work \cite{DemboPeres,Genin2018,GeninKelly}. 

\subsection{Connection to Learning Theory}
Roughly speaking, we will say a hypothesis $H \subseteq \mathfrak{S}$ is \emph{statistically verifiable} if there is some error bound $\epsilon$ and a sequence of statistical tests that converge on $H$ with error at most $\epsilon$, when data are generated from $H$. More formally, a \emph{test} is a function $\lambda: \vartheta^n \rightarrow \{\mathsf{accept},\mathsf{reject}\}$, where $\vartheta^n$ is the $n$-fold product of $\vartheta$, viz. finite data streams from $\vartheta$. The interest is in whether a ``null'' hypothesis can be rejected given data observed thus far. The \emph{boundary} of a set $A\subseteq \vartheta$, written $\mathsf{bd}(A)$, is the difference of its closure and its interior. Intuitively, a learner will not be able to decide whether to accept or reject on the boundary. Consequently it is assumed that $\lambda$ is \emph{feasible} in the sense that the boundary of its acceptance zone (in the product topology on $\vartheta^n$) always has measure 0, i.e., $\mu^n[\mathsf{bd}(\lambda^{-1}(\mathsf{accept}))] = 0$  for every $\mu \in \mathfrak{S}$, where $\mu^n$ is the $n$-fold product measure of $\mu$.

Say a hypothesis $H \subseteq \mathfrak{S}$ is \emph{verifiable} \cite{Genin2018} if there is $\epsilon>0$ and a sequence $(\lambda_n)_{n\in\mathbb{N}}$ of feasible tests (of the complement of $H$ in $\mathfrak{S}$, i.e., the ``null hypothesis'') such that \begin{enumerate}
   \item $\mu^n[\lambda_n^{-1}(\mathsf{reject})] \leq \epsilon$ for all $n$, whenever $\mu \notin H$;
   \item $\underset{n\rightarrow\infty}{\mbox{lim}}\;\mu^n[\lambda_n^{-1}(\mathsf{reject})] = 1$, whenever $\mu \in H$.
\end{enumerate} That is, to be verifiable we only require a sequence of tests that converges in probability to the true hypothesis in the limit of infinite data (requirement 2), while incurring (type 1) error only up to a given bound at finite stages (requirement 1). As an illustrative example, \emph{conditional dependence} is verifiable \cite{Genin2018}. This is a relatively lax notion of verifiability. 
For instance, the hypothesis need not also be \emph{refutable} (and thus ``decidable''). For our purposes this generality is a virtue: we want to show that certain hypotheses are not statistically verifiable by any method, even in this wide sense. The fundamental link between verifiability and the weak topology is the following, due to \cite{Genin2018,GeninKelly}:
\begin{theorem} \label{thm:genin} A set $H \subseteq \mathfrak{S}$ is verifiable if and only if it is open in the weak topology.
\end{theorem}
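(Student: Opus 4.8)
The plan is to prove the two implications separately; in each the real content is translating between the analytic notion of weak convergence and the probabilistic behaviour of the tests. I will rely on three standard ingredients. First, since $\vartheta$ here is a countable product of finite discrete spaces it is compact metrizable, so by Prohorov's theorem $\mathfrak{S} = \mathfrak{P}(\vartheta)$ equipped with $\TopoWeak$ is itself compact metrizable and in particular second countable. Second, the Portmanteau theorem in the form that $\mu_k \Rightarrow \mu$ implies $\mu_k(B) \to \mu(B)$ for every $\mu$-continuity set $B$ (a set with $\mu(\mathsf{bd}(B)) = 0$); this applies in particular to the clopen $X$ appearing in the subbasis \eqref{subbasis-weak}, confirming that $\mu \mapsto \mu(X)$ is $\TopoWeak$-continuous. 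Third, concentration of empirical frequencies via Hoeffding's inequality together with the law of large numbers.

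For the direction \emph{verifiable} $\Rightarrow$ \emph{open}, I would show the complement $H^c$ is sequentially closed, which suffices since $\TopoWeak$ is metrizable. Suppose $(\mu_k)_k \Rightarrow \mu$ with every $\mu_k \in H^c$, and for contradiction that $\mu \in H$. Fix verifying tests $(\lambda_n)_n$ and a bound $\epsilon < 1$. By requirement 2, $\mu^n[\lambda_n^{-1}(\mathsf{reject})] \to 1$, so pick $N$ with $\mu^N[\lambda_N^{-1}(\mathsf{reject})] > \epsilon$. The rejection region $R = \lambda_N^{-1}(\mathsf{reject})$ is now a \emph{fixed} subset of $\vartheta^N$, and by feasibility $\mu^N(\mathsf{bd}(R)) = 0$, so $R$ is a $\mu^N$-continuity set. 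Because weak convergence is preserved under finite products, $\mu_k^N \Rightarrow \mu^N$, and Portmanteau then gives $\mu_k^N(R) \to \mu^N(R) > \epsilon$. Hence $\mu_k^N(R) > \epsilon$ for all large $k$, contradicting requirement 1 applied to $\mu_k \in H^c$. Therefore $\mu \in H^c$, so $H^c$ is closed and $H$ is open.

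For the direction \emph{open} $\Rightarrow$ \emph{verifiable}, I would construct the tests explicitly. By second countability write $H = \bigcup_{j \ge 1} B_j$ where each $B_j = \bigcap_{i=1}^{m_j}\{\mu : \mu(X_{ij}) > r_{ij}\}$ is a finite intersection of subbasic sets from \eqref{subbasis-weak}. For a single constraint $\mu(X) > r$ and data $\omega_1,\dots,\omega_n$, set $\hat p_n = \frac1n\sum_{i \le n}\Indicator[\omega_i \in X]$ and, targeting a budget $\epsilon'$, declare the constraint ``confirmed'' when $\hat p_n \ge r + \delta_n$ with $\delta_n = \sqrt{\ln(1/\epsilon')/(2n)}$; Hoeffding bounds the false-confirmation probability by $\epsilon'$ uniformly over all $\mu$ with $\mu(X) \le r$ and for every $n$, while $\delta_n \to 0$ plus the law of large numbers drives the confirmation probability to $1$ whenever $\mu(X) > r$. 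The test for $B_j$ confirms iff all $m_j$ of its constraints confirm; I then let $\lambda_n$ reject the null iff some $B_j$ with $j \le n$ is confirmed, allocating error budget $\epsilon_j = \epsilon\,2^{-j}$ to the $j$-th block. A union bound yields type-1 error $\le \sum_{j} \epsilon_j \le \epsilon$ for every $n$ (requirement 1), and if $\mu \in B_{j_0} \subseteq H$ then for $n \ge j_0$ the confirmation probability of the $B_{j_0}$-test tends to $1$ (requirement 2). Each acceptance region is a finite Boolean combination of clopen cylinder events $\{\omega_i \in X_{ij}\}$, hence clopen in $\vartheta^n$ with empty boundary, so the tests are trivially feasible.

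The main obstacle is the \emph{verifiable} $\Rightarrow$ \emph{open} direction, and precisely in making feasibility do exactly the right work: it is what guarantees that the \emph{fixed} rejection region $R$ at the chosen finite stage $N$ is a $\mu^N$-continuity set, which is exactly the hypothesis licensing the Portmanteau passage $\mu_k^N(R) \to \mu^N(R)$. Without the continuity-set property this limit could fail and the contradiction would collapse. A secondary point needing care is the preservation of weak convergence under finite products, $\mu_k \Rightarrow \mu \Rightarrow \mu_k^N \Rightarrow \mu^N$, which is routine but must be explicitly invoked. On the constructive side the only genuine bookkeeping is the geometric error budget, which must simultaneously hold type-1 error below $\epsilon$ across the growing family of block-tests for all $n$ while still leaving the relevant block enough power to reach probability $1$ in the limit.
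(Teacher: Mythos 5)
Your proposal cannot be compared against an in-paper proof, because the paper does not prove Theorem~\ref{thm:genin} at all: it imports the result from \cite{Genin2018,GeninKelly}, and its appendix only \emph{uses} it (via Genin's Theorem~3.2.1) to derive the Level-2 analogue, Theorem~\ref{thm:l2learning}, by exhibiting a homeomorphism between $\mathfrak{S}_2$ and a suitable set of product measures. What you have written is therefore a self-contained reconstruction of the cited theorem, and it is correct; it also follows the same architecture as the Genin--Kelly argument. In the verifiable-implies-open direction, feasibility is used exactly where it must be: the fixed rejection region $R=\lambda_N^{-1}(\mathsf{reject})$ has $\mu^N(\mathsf{bd}(R))=0$ (the boundary of the rejection zone equals that of the acceptance zone), so Portmanteau applied to $\mu_k^N \Rightarrow \mu^N$ transfers $\mu^N(R)>\epsilon$ to the $\mu_k\notin H$, contradicting the type-1 bound; note that with the paper's sequential definition of $\TopoWeak$ (closed $=$ closed under weak limits of sequences), your sequential-closedness argument gives closedness of $H^c$ by definition, so the appeal to metrizability is not even needed. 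In the open-implies-verifiable direction, your use of the countable clopen subbasis \eqref{subbasis-weak}, Hoeffding thresholds $\delta_n=\sqrt{\ln(1/\epsilon')/(2n)}$, and a geometric budget $\epsilon_j=\epsilon 2^{-j}$ across basic open blocks is sound, and the acceptance regions are clopen, hence trivially feasible. Three points deserve explicit mention: (i) the paper's definition literally says ``there is $\epsilon>0$,'' under which the always-reject tests would verify \emph{every} hypothesis; your assumption $\epsilon<1$ is the intended (and necessary) reading; (ii) preservation of weak convergence under finite products requires separability of $\vartheta$, which holds here since $\vartheta$ is a countable product of finite discrete spaces; (iii) the subbasis claim \eqref{subbasis-weak} relies on $\vartheta$ being zero-dimensional, again true in this setting, and the paper supplies it by citation, so you may legitimately take it as given.
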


\subsection{Topologizing Causal Models}
We now reinterpret $\TopoWeak$ at each level of the causal hierarchy: 
\begin{definition}
The \emph{weak causal topology} $\TopoWeak_i$, $1 \le i \le 3$, is the subspace topology on $\mathfrak{S}_i$, induced by
\begin{align*}
\text{if } i=3: \TopoWeak \text{ on } \mathfrak{P}(\chi_{A \times \*V}); \quad
\text{if } i=2: \text{product of }\TopoWeak \text{ on } \CartProd_{\alpha \in A} \mathfrak{P}(\chi_{\*V}); \quad
\text{if } i=1: \TopoWeak \text{ on } \mathfrak{P}(\chi_{\*V}).
\end{align*}
\end{definition}
\begin{proposition}\label{prop:causalprojectioncontinuous}
In the weak causal topologies,
$\{\varpi_i\}_{i = 1, 2}$ are continuous and all projections
$\varpi^{X \to Y}_{2}$ are continuous and open.
\end{proposition}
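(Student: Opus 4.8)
The plan is to reduce everything to the subbasis for the weak topology recorded in \eqref{subbasis-weak}, namely the sets $\{\mu : \mu(C) > r\}$ with $C$ a basic clopen cylinder and $r \in \mathbb{Q}$. Continuity of a map into a space carrying the weak topology (or a product of such) amounts to checking that preimages of these subbasic sets are open, and continuity into a product reduces to continuity in each coordinate. The continuity claims are then essentially formal; openness is the substantive part and I would handle it separately via an explicit reweighting construction.

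For continuity I would proceed map by map. The map $\varpi_1$ is by definition the coordinate projection $\pi_{\varnothing \coloneqq ()}$ of the product $\CartProd_{\alpha \in A}\mathfrak{P}(\chi_{\*V})$ onto its $\varnothing \coloneqq ()$ factor, restricted to $\mathfrak{S}_2$; coordinate projections are continuous in the product topology, and restriction to a subspace together with corestriction to $\mathfrak{S}_1$ preserves this, so no further work is needed. For $\varpi_2$, since the codomain is a product it suffices to show each coordinate $\mu_3 \mapsto \varsigma_{\{\alpha\}\times\*V}(\mu_3)$ is continuous. This is an instance of the general fact that a marginalization $\varsigma_{B'}$ is weakly continuous: the preimage of a subbasic set $\{\nu : \nu(C') > r\}$ is $\{\mu : \mu(\pi_{B'}^{-1}(C')) > r\}$, and since $C'$ constrains finitely many coordinates, $\pi_{B'}^{-1}(C')$ is again a basic clopen cylinder, so this preimage is itself subbasic open. (The identification of the slice $\{\alpha\}\times\*V$ with $\*V$ is a homeomorphism and is harmless.) Finally $\varpi^{X\to Y}_{2}$ is the composite of the coordinate projection $\pi_{A_X}$ with the product $\varsigma_{\{X,Y\}} \times \varsigma_{\{X,Y\}} \times \varsigma_{\{X,Y\}}$ of three marginalizations; each factor is continuous by the above, and products and compositions of continuous maps are continuous.

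For openness of $\varpi^{X\to Y}_{2}$ I would again use the decomposition into $\pi_{A_X}$ followed by the triple marginalization, since openness is preserved under composition and finite products. Coordinate projections of a full product onto a subproduct are open, handling $\pi_{A_X}$ on the ambient space. The crux is that a single marginalization $\varsigma_{\{X,Y\}} : \mathfrak{P}(\chi_{\*V}) \to \mathfrak{P}(\chi_{\{X,Y\}})$ is open, and here I would exploit that the target lives over the finite set $\chi_{\{X,Y\}} = \{0,1\}^2$, so disintegration collapses to ordinary conditioning on four cylinder cells. Concretely, given $\mu$ in a basic neighborhood $U = \{\mu : \mu(C_i) > r_i,\ i \le k\}$ and a target marginal $\nu$ on $\chi_{\{X,Y\}}$ close to $\varsigma_{\{X,Y\}}(\mu)$, I would reweight the conditionals, setting $\mu' = \sum_{(x,y)} \nu(x,y)\, \mu(\,\cdot \mid X=x, Y=y)$ and routing the mass for any cell of $\mu$-measure zero that $\nu$ charges through a fixed auxiliary measure on that cell (taken atomless in the infinite case). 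Then $\varsigma_{\{X,Y\}}(\mu') = \nu$ by construction, while $\|\mu - \mu'\|$ in total variation is bounded by $\sum_{(x,y)}|\mu(x,y) - \nu(x,y)|$; since total-variation convergence refines weak convergence, $\mu'$ remains in $U$ once $\nu$ is sufficiently close, so $\varsigma_{\{X,Y\}}(U)$ contains a neighborhood of each of its points and is open.

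The main obstacle is that the reweighted measures must stay inside the relevant subspaces: the domain is $\mathfrak{S}_2$ (realizable interventional families), not the full product, and in the infinite case every measure must remain atomless. To close this gap I would feed the reweighting through the explicit characterization of $\mathfrak{S}_2$ from Appendix \ref{app:causalhierarchy}, performing the reweighting coherently across the three interventions $\alpha \in A_X$ so that the perturbed family is again realized by some SCM, and passing the added mass for zero-measure cells through an auxiliary atomless measure so that atomlessness is preserved. I expect verifying that this construction stays within $\mathfrak{S}_2$ (and that its image stays within $\mathfrak{S}^{X\to Y}_{2}$) to be the delicate step; the continuity statements, by contrast, fall out directly from the subbasis description.
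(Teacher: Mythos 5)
Your continuity argument is fine and is essentially the paper's: marginalizations and coordinate projections are weakly continuous, checked against the subbasis \eqref{subbasis-weak}. The problem is the openness claim, and it sits exactly where you flag it---but flagging the obstacle is not overcoming it, and the one concrete construction you give provably fails at that point. Openness is asserted for $\varpi^{X \to Y}_{2}$ as a map on the \emph{subspace} $\mathfrak{S}_2$ of SCM-realizable families (and, as used in Theorem~\ref{thm:hierarchyFormal}, on $\mathfrak{S}_2^{X}$), not on the ambient product; your reweighting shows that $\varsigma_{\{X,Y\}}$ is open on all of $\mathfrak{P}(\chi_{\mathbf{V}})$, but restrictions of open maps to subspaces are not open in general, so everything hinges on keeping the perturbed family realizable. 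Your recipe---reweight the three coordinates $\alpha \in A_X$ independently and freeze every $\mu_\alpha$ with $\alpha \notin A_X$---exits $\mathfrak{S}_2$. Concretely, take $\mathbf{V} \supseteq \{X,Y,Z\}$ and an SCM in which, under $X \coloneqq x$, the variable $Z$ deterministically copies $Y$ while interventions on $Z$ leave $Y$ unaffected. Realizability (the composition property of SCMs) forces $\mu_{X \coloneqq x}(Y{=}y, Z{=}y) \le \mu_{(X \coloneqq x, Z \coloneqq y)}(Y{=}y)$, and in this model the two sides are \emph{equal}; a target marginal with $\nu'_x(y)$ slightly increased (which does lie in $\mathfrak{S}^{X \to Y}_{2}$, generically) makes your reweighted $\mu'_{X \coloneqq x}$ put mass $\nu'_x(y)$ on $\{Y{=}y, Z{=}y\}$ while the frozen coordinate $\mu_{(X \coloneqq x, Z \coloneqq y)}$ still caps it at the old value, so no SCM realizes the perturbed family. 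Moreover, the repair tool you invoke---``the explicit characterization of $\mathfrak{S}_2$ from Appendix~\ref{app:causalhierarchy}''---is not available in the form you need: Lemma~\ref{prop:p2:characterization:binary} characterizes only the two-variable space $\mathfrak{S}^{X \to Y}_{2}$, while the full $\mathfrak{S}_2$ is governed by infinitely many cross-intervention constraints of exactly the kind just violated.

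The paper escapes this by never perturbing at Level 2 at all. It decomposes $\mathfrak{S}_2 = \bigcup_\prec \mathfrak{S}_2^\prec$, lifts each piece to Level 3 in standard form (\S\ref{ss:app:standardform}), where realizability is free---$\mathfrak{S}^\prec_{\mathrm{std}}$ is simply the set of atomless measures on the mechanism space---so that it suffices to show the composite $\mathfrak{S}^\prec_{\mathrm{std}} \to \mathfrak{S}^{X \to Y}_{2}$ is open; it then reduces to finite initial segments of $\prec$ and concludes with a key lemma of a quite different flavor: a surjective linear map restricted to polyhedra (the standard-form simplex mapping onto the polyhedral set cut out by Lemma~\ref{prop:p2:characterization:binary}) is relatively open, citing \cite{MIDOLO20091186}. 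If you want to keep the spirit of your argument, the perturbation must be performed on the Level-3 (standard-form) measure, where your conditioning idea genuinely works because there are no realizability side conditions, and then pushed down through $\varpi_2$---which is, in essence, what the paper does.
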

A significant observation is that the learning theoretic interpretation, originally intended for $\TopoWeak_1$, naturally extends to $\TopoWeak_2$. While data streams at Level 1 amount to passive observations of $\*V$, data streams at Level 2 can be seen as sequences of experimental results, i.e., observations of ``potential outcomes'' $\*Y_{\*x}$. To make verifiability as easy as possible we assume a learner can observe a sample from all conceivable experiments at each step.  A learner is thus a function $\lambda:\mathcal{E}^n \rightarrow \{\mathsf{accept},\mathsf{reject}\}$, where $\mathcal{E}^n = ((\chi_{\*V})^n)_\alpha$  is the set of potential experimental observations over $n$ trials (with $\alpha$ indexing the experiments). Construing $\mathcal{E}^n$ as a product space we can again speak of \emph{feasibility} of $\lambda$.  

Recall that elements of $\mathfrak{S}_2$ are tuples $(\mu_\alpha)_{\alpha \in A}$ of measures. Say  a hypothesis $H\subseteq \mathfrak{S}_2$ is \emph{experimentally verifiable} if there is $\epsilon>0$ and a sequence $(\lambda_n)_{n \in \mathbb{N}}$ of feasible tests such that 1 and 2 above hold, replacing $\mu^n[\lambda_n^{-1}(\mathsf{reject})]$ with $\prod_{\alpha} \mu_\alpha^n[(\lambda_n^{-1}(\mathsf{reject}))_\alpha]$. That is, when experimental data are drawn from the interventional distributions $(\mu_\alpha)_{\alpha \in A} \in H$, we require that the learner eventually converge on $H$ with bounded error at finite stages. We can then show (see Appendix \ref{app:empirical}): 
\begin{theorem} A set $H \subseteq \mathfrak{S}_2$ is experimentally verifiable if and only if it is open in $\TopoWeak_2$. \label{thm:l2learning}
\end{theorem}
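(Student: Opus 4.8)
The plan is to reduce the claim to the Level-1 characterization in Theorem~\ref{thm:genin} by viewing an experimental data stream as a sample from a single \emph{product} measure. Set $\Omega = \CartProd_{\alpha \in A}\chi_{\*V}$ and define the product-measure map $\Phi : \mathfrak{S}_2 \to \mathfrak{P}(\Omega)$ by $\Phi\big((\mu_\alpha)_\alpha\big) = \bigotimes_{\alpha \in A}\mu_\alpha$. Since the marginal of a product measure on the $\alpha$-th factor recovers $\mu_\alpha$, the map $\Phi$ is injective. The first and central step is to prove that $\Phi$ is a homeomorphism onto its image, where $\mathfrak{S}_2$ carries $\TopoWeak_2$ (the product of weak topologies) and $\Phi(\mathfrak{S}_2)$ carries the subspace weak topology. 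I would argue entirely at the level of subbases: because each basic set $X \subseteq \chi_{\*V}$ is clopen, the indicator $\mathbf{1}_X$ is bounded and continuous, so $\mu \mapsto \mu(X)$ is weakly continuous, and hence so is every finite product $(\mu_\alpha)_\alpha \mapsto \prod_{\alpha \in F}\mu_\alpha(Y_\alpha)$. A subbasic weak-open set $\{\nu : \nu(Y) > r\}$, with $Y$ a basic clopen cylinder of $\Omega$ depending on finitely many factors $F$, pulls back under $\Phi$ to $\{(\mu_\alpha)_\alpha : \prod_{\alpha \in F}\mu_\alpha(Y_\alpha) > r\}$, which is $\TopoWeak_2$-open, giving continuity. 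Conversely a subbasic $\TopoWeak_2$-open set $\{(\mu_\beta)_\beta : \mu_\alpha(X) > r\}$ is carried by $\Phi$ to $\{\nu \in \Phi(\mathfrak{S}_2) : \nu(\pi_\alpha^{-1}(X)) > r\}$, relatively weak-open; since $\Phi$ is injective it commutes with finite intersections and arbitrary unions, so it maps every $\TopoWeak_2$-open set to a relatively open set, i.e.\ $\Phi$ is open onto its image. Note that every computation involves only finitely many factors at once, so the countable infinitude of $A$ causes no difficulty.

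The second step identifies the two data models. There is a canonical homeomorphism $\mathcal{E}^n = \CartProd_{\alpha \in A}(\chi_{\*V})^n \cong \Omega^n$ obtained by transposing the order of the product, and under it the experimental sampling distribution $\bigotimes_{\alpha \in A}\mu_\alpha^n$ coincides with $\nu^n$, where $\nu = \Phi\big((\mu_\alpha)_\alpha\big)$; the product topologies on the two sides agree as well. Consequently a sequence of tests $(\lambda_n)$ is feasible and satisfies conditions 1 and 2 for experimental verifiability of $H \subseteq \mathfrak{S}_2$ if and only if the transported tests are feasible and verify $\Phi(H)$ as a subset of the ambient space $\Phi(\mathfrak{S}_2)$ in the sense of Theorem~\ref{thm:genin} (feasibility with respect to every $\nu \in \Phi(\mathfrak{S}_2)$, and conditions 1 and 2 quantifying over $\Phi(\mathfrak{S}_2)$). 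This is a routine rewriting once the identifications above are in place.

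Finally I would invoke Theorem~\ref{thm:genin} in the form applied to the subspace $\Phi(\mathfrak{S}_2) \subseteq \mathfrak{P}(\Omega)$ to conclude that $\Phi(H)$ is verifiable over $\Phi(\mathfrak{S}_2)$ if and only if it is relatively weak-open. For the ``open $\Rightarrow$ verifiable'' direction one can even bypass the subspace refinement: writing $\Phi(H) = O \cap \Phi(\mathfrak{S}_2)$ for a weak-open $O \subseteq \mathfrak{P}(\Omega)$, the full-space tests verifying $O$ restrict to feasible tests verifying $\Phi(H)$ over the subspace. Transporting this equivalence back through the homeomorphism $\Phi$ then yields exactly that $H$ is experimentally verifiable if and only if $H$ is $\TopoWeak_2$-open.

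The main obstacle is the homeomorphism of the first step: one must ensure that the product of weak topologies on the tuples $(\mu_\alpha)_\alpha$ coincides, via $\Phi$, with the weak topology restricted to product measures on the infinite product $\Omega$, including openness of $\Phi$ onto its image. The subbasis argument is what makes this go through cleanly without confronting weak convergence on $\Omega$ directly. The only other point requiring care is confirming that the relative (subspace) form of Theorem~\ref{thm:genin} is available for the ``verifiable $\Rightarrow$ open'' direction over $\Phi(\mathfrak{S}_2)$, which is the same relative form implicit in interpreting $\TopoWeak_1$ on the atomless subspace $\mathfrak{S}_1$.
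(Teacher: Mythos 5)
Your proposal is correct and takes essentially the same route as the paper's own proof: both identify $(\mathfrak{S}_2,\TopoWeak_2)$ with the set of product measures on $\Omega = \CartProd_{\alpha \in A}\chi_{\*V}$ (your $\Phi(\mathfrak{S}_2)$ is exactly the paper's world-set $W$), observe that experimental verifiability of $H$ coincides with verifiability of the corresponding set of product measures, and then invoke the restricted-worlds form of Theorem~\ref{thm:genin}. The only substantive difference is how the homeomorphism is established: you argue directly on the clopen-cylinder subbasis \eqref{subbasis-weak} (continuity by pulling back subbasic sets, openness on the image via injectivity), whereas the paper deduces it by matching convergence notions, citing a generalization to infinite products of Genin's result that product measures converge weakly iff their marginals do---a minor variation, with your version arguably more self-contained.
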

A similar result can be given for $(\mathfrak{S}_3,\TopoWeak_3)$, although it is less clear what the empirical content of this result would be. 
Note also that $\TopoWeak_1,\TopoWeak_2,\TopoWeak_3$ give a sequence of increasingly fine topologies on the set of actual SCMs $\mathfrak{M}$ by simply pulling back the projections. The point is that $\TopoWeak_2$ is the finest that has clear empirical significance, while $\TopoWeak_3$ is the finest in terms of relevance to the causal hierarchy.



\section{Collapse is Meager} \label{sec:main}
Recall that a set $X \subseteq \vartheta$ is \emph{nowhere dense} if every open set contains an open  $Y$ with $X \cap Y = \varnothing$. A countable union of nowhere dense sets is said to be \emph{meager} (or \emph{of first category}). The complement of a meager set is \emph{comeager}. Intuitively, a meager set is one that can be ``approximated'' by sets ``perforated with holes'' \citep{Oxtoby}. Meagerness is notably preserved when taking the preimage under a map that is both continuous and open. 

As discussed above, one intuition highlighted by the weak topology $\TopoWeak$ is that open sets are the kinds of probabilistic propositions that could, in the limit of infinite data, be verified (Thms. \ref{thm:genin},  \ref{thm:l2learning}). Correlatively, meager sets in $\TopoWeak$ are so negligible as to be unverifiable: as a meager set contains no non-empty open subsets (by the Baire Category Theorem \cite{Oxtoby}), it is statistically unverifiable.
We will now show that the injective collapse set $\mathfrak{C}_2$ from \S{}\ref{ss:collapse} is topologically meager.

The crux is to identify a ``good'' comeager 2VE-subspace where collapse \emph{never} occurs (with separation witnessed by probabilities of causation). 
In this subspace, the constraints circumscribing Level 3 have sufficient slack to make a tweak without thereby disturbing Level 2 (cf. Figure \ref{fig:example:separation}).
We define the good set as the locus of a set of strict inequalities:
\begin{definition}
A family $(\mu_\alpha)_{\alpha \in A_X} \in \mathfrak{S}^{X\to Y}_{2}$ is \emph{$Y$-good}
if we have the following, abbreviating the members of $A_X$ as $(), x, x'$:
\begin{gather}
0 < \mu_x(y') - \mu_{()}(x, y')
 < \mu_{()}(x'),\label{cns:ineq:1}\\
0 < \mu_{()}(x', y') < \mu_{()}(x')\label{cns:ineq:2}.
\end{gather}
\end{definition}
\begin{lemma} 
\label{lem:goodcomeager}
The subspace of $Y$-good families is comeager in $\mathfrak{S}^{X \to Y}_{2}$.
\end{lemma}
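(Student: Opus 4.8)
The plan is to exploit the fact that $\mathfrak{S}^{X \to Y}_{2}$ is a finite-dimensional object on which the weak topology is elementary, and that the $Y$-good conditions are strict versions of inequalities that hold \emph{throughout} $\mathfrak{S}^{X \to Y}_{2}$. Since $\chi_{\{X,Y\}}$ is a four-point discrete space, each $\mathfrak{P}(\chi_{\{X,Y\}})$ is a $3$-simplex and the weak topology coincides with the Euclidean topology; on the ambient product the coordinate functionals $\mu \mapsto \mu(S)$ (for $S \subseteq \chi_{\{X,Y\}}$) are continuous and affine. The four constraints \eqref{cns:ineq:1}--\eqref{cns:ineq:2} defining $Y$-goodness are therefore strict inequalities among continuous affine functions, so the set $G$ of $Y$-good families is \emph{open} in the subspace $\mathfrak{S}^{X \to Y}_{2}$. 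It then suffices to show $G$ is also \emph{dense}: an open dense set has closed, empty-interior complement, hence nowhere dense and a fortiori meager complement, which makes $G$ comeager.

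For density I would establish two structural facts about $\mathfrak{S}^{X \to Y}_{2}$. First, it is \emph{convex}: it is an affine image (marginalization composed with coordinate projection) of $\mathfrak{S}_3 = \varpi_3(\mathfrak{M})$, and $\mathfrak{S}_3$ is convex because any two SCMs can be combined into a single SCM that tosses a fresh exogenous fair coin and runs one model or the other---this realizes the midpoint of the two counterfactual distributions and preserves well-foundedness, local finiteness, and (in the infinite case) atomlessness. Second, \emph{every} $(\mu_{()}, \mu_x, \mu_{x'}) \in \mathfrak{S}^{X \to Y}_{2}$ satisfies the non-strict versions of \eqref{cns:ineq:1}--\eqref{cns:ineq:2}. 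This is immediate from the SCM semantics: unwinding the definitions gives $\mu_x(y') - \mu_{()}(x,y') = P\big(f_X(U_X)=x',\, f_Y(x,U_Y)=y'\big)$, which lies in $[0, P(f_X(U_X)=x')] = [0, \mu_{()}(x')]$, while $\mu_{()}(x',y')$ visibly lies in $[0, \mu_{()}(x')]$. Thus the four affine functionals $\ell$ cut out by \eqref{cns:ineq:1}--\eqref{cns:ineq:2} are all nonnegative on the whole of $\mathfrak{S}^{X \to Y}_{2}$.

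Given these two facts, density follows by a convex-combination argument once we exhibit a single $Y$-good witness $g \in G$ (which exists: take an explicit SCM in which $X$ is a nondegenerate coin and $Y$ responds to $X$ together with independent noise so that all four cells of $\mu_{()}$ are positive and the gap $\mu_x(y')-\mu_{()}(x,y')$ is strictly interior to $(0,\mu_{()}(x'))$; in the infinite case pad with extra independent atomless noise to secure atomlessness without altering the $\{X,Y\}$-marginals). For an arbitrary $p \in \mathfrak{S}^{X \to Y}_{2}$ and $t \in (0,1]$, convexity gives $p_t = (1-t)p + tg \in \mathfrak{S}^{X \to Y}_{2}$, and for each of the four governing functionals $\ell$ one has $\ell(p_t) = (1-t)\ell(p) + t\,\ell(g) > 0$ because $\ell(p) \ge 0$ and $\ell(g) > 0$; hence $p_t \in G$. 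Letting $t \to 0^+$ shows $p \in \overline{G}$, so $G$ is dense, and combined with openness this yields comeagerness.

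The main obstacle is not the topology---which collapses to the elementary finite-dimensional picture---but the two structural inputs: verifying that the good conditions are genuinely \emph{universal} non-strict constraints on $\mathfrak{S}^{X \to Y}_{2}$ (which requires carefully unwinding the interventional/observational semantics) and checking convexity together with the witness construction uniformly across the finite and the atomless infinite cases. The conceptual key that makes the argument short is recognizing \eqref{cns:ineq:1}--\eqref{cns:ineq:2} as strict tightenings of inequalities valid everywhere on the convex set $\mathfrak{S}^{X \to Y}_{2}$, reducing comeagerness to a one-line convex-combination density argument rather than a direct study of the equality loci.
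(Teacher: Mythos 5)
Your overall strategy---show the $Y$-good set is relatively open, then dense via mixing toward a single good witness---is a legitimate alternative to the paper's argument (which instead writes the complement as a finite union of equality loci and shows each locus is the preimage of the nowhere dense set $\{0\} \subset [0,1]$ under a continuous, open, surjective evaluation map). But as written your proof has a genuine gap: the claim that $\mathfrak{S}_3 = \varpi_3(\mathfrak{M})$ is convex is false, and the coin-flip construction you offer does not preserve well-foundedness. If $\mathcal{M}_1$ has influence $X \rightarrow Y$ and $\mathcal{M}_2$ has influence $Y \rightarrow X$, the combined model's influence relation contains a $2$-cycle, hence is not well-founded (equivalently, not extendible to an $\omega$-like order, Fact~\ref{fact:omegalike}). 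This is not an artifact of your particular construction: take $\mathcal{M}_1$ with $X \equiv 0$, $Y = X$, and $\mathcal{M}_2$ with $Y \equiv 0$, $X = Y$ (padding the remaining variables with independent coins for atomlessness). In any single well-founded SCM, whichever of $X, Y$ comes first in an $\omega$-like order cannot be altered by intervening on the other, so at least one of $P(X_{Y \coloneqq 1}=1,\, X=0)$ and $P(Y_{X \coloneqq 1}=1,\, Y=0)$ must vanish; yet the midpoint of $\varpi_3(\mathcal{M}_1)$ and $\varpi_3(\mathcal{M}_2)$ assigns both the value $\nicefrac{1}{2}$, so it lies outside $\mathfrak{S}_3$. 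Consequently convexity of $\mathfrak{S}^{X \to Y}_{2}$ cannot be obtained by pushing forward convexity of $\mathfrak{S}_3$.

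The gap is repairable in two ways, each replacing your key input. (a) The 2VE space \emph{is} convex, but for a different reason: by Lemma~\ref{prop:p2:characterization:binary} it is cut out by the linear constraints $\mu_{X \coloneqq x}(x) = 1$ and $\mu_{X \coloneqq x}(y) \ge \mu_{()}(x,y)$, hence is a polyhedron, and your mixing argument can proceed inside it. Note that this characterization also supplies your other structural input (the universal non-strict versions of \eqref{cns:ineq:1}--\eqref{cns:ineq:2}); your explicit formula $\mu_x(y') - \mu_{()}(x,y') = P\big(f_X(U_X)=x',\, f_Y(x,U_Y)=y'\big)$ implicitly assumes a parentless $X$ and a two-variable parametrization, whereas a general element of $\mathfrak{S}^{X \to Y}_{2}$ projects from an arbitrary SCM over all of $\*V$ (where $X$ may have parents and $Y$ may even precede $X$); what is really used there is the consistency property $\{X = x,\, Y = y\} = \{X = x,\, Y_{X \coloneqq x} = y\}$ of recursive models. (b) Alternatively, keep the SCM-level mixing but choose the witness $g$ \emph{acausal}: a model in which $X$ and $Y$ are independent nondegenerate coins and no variable has any parents satisfies \eqref{cns:ineq:1}--\eqref{cns:ineq:2}. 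Mixing an arbitrary $\mathcal{M} \in \mathfrak{M}$ with this witness by a fresh exogenous coin leaves the union of influence relations equal to that of $\mathcal{M}$, so well-foundedness, local finiteness, and atomlessness are preserved, and the mixture realizes $(1-t)p + tg$. With either repair, your open-plus-dense argument goes through and yields the same strengthened conclusion as the paper, namely that the complement is nowhere dense.
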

\begin{proof}[Proof sketch]
The non-strict versions of \eqref{cns:ineq:1}, \eqref{cns:ineq:2} hold universally, so the complement of the good set is defined by equalities. This is closed and contains no nonempty open by the weak subbasis \eqref{subbasis-weak}. \end{proof}
Figure~\ref{fig:example:separation} presents the construction in a small, two-variable case, and Lemma~\ref{lem:separation} below is proven by generalizing it to arbitrary $\*V$.
Guaranteeing agreement on every interventional distribution in the general case is subtle (Appendix~\ref{app:main}):
it has been observed that enlarging $\*V$ can enable additional inferences (e.g., \cite{9363924}), though the next result reflects a dependence on further assumptions.
\begin{lemma}\label{lem:separation}
Suppose $\prec$ is an order in which $X$ comes first and $(\mu_\alpha)_{\alpha \in A} \in \mathfrak{S}^{\prec}_2$ is such that $\varpi^{X\to Y}_{2}\big((\mu_\alpha)_{\alpha}\big)$ is $Y$-good, and
let $\varphi$ be PNS, the converse PNS, the probability of sufficiency, or the probability of enablement (Definition~\ref{def:probcaus}).
Then for any $\mu_3 \in \mathfrak{S}^{\prec}_3$ such that $\varpi_2(\mu_3) = (\mu_\alpha)_{\alpha}$, there exists a $\mu'_3 \in \mathfrak{S}^{\prec}_3$ such that $\mu_3$ and $\mu'_3$ disagree on $\varphi$. 
\end{lemma}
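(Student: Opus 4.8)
The plan is to reduce the statement to an explicit finite perturbation of a cross-world joint, and then to lift that perturbation to a genuine ordered model. The first step exploits that $X$ comes $\prec$-first and has no parents: intervening on $X$ severs no incoming mechanism, so the intervened population $X := x'$ agrees with the natural sub-population $X = x'$. Concretely I would record, for any $\SCM \in \mathfrak{S}^\prec_3$ (abusing notation for a realizer of $\mu_3$), the joint law $q$ of the triple $(X, Y_x, Y_{x'})$, i.e.\ the natural value of $X$ together with the two potential outcomes of $Y$. All four quantities in the statement, as well as all Level-$2$ data that must be preserved, are functions of $q$ alone. Using the consistency identity $\mu_{()}(x,y') = P(X = x, Y_x = y')$, the difference $\mu_x(y') - \mu_{()}(x,y')$ appearing in \eqref{cns:ineq:1} equals $P(X = x', Y_x = y')$; hence $Y$-goodness is precisely the assertion that all four one-dimensional margins of the $2\times 2$ ``$X = x'$ slice'' $n_{jk} = P(X = x',\, Y_x = j,\, Y_{x'} = k)$ are strictly between $0$ and its total mass $P(X = x')$.

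Next I would isolate the degrees of freedom. Since $Y = Y_{x'}$ on the $X = x'$ slice, the observational data fix the $Y_{x'}$-margin of the slice, while the two interventional laws $P(Y_x = \cdot)$ and $P(Y_{x'} = \cdot)$ fix the remaining margins; thus the slices compatible with the prescribed Level-$2$ data are exactly the $2\times 2$ tables with these four fixed margins. Because all four margins are strictly positive, this family is a \emph{nondegenerate} one-parameter segment: it contains the margin-preserving antidiagonal move that adds $\epsilon$ to the two off-diagonal cells and subtracts $\epsilon$ from the two diagonal cells (or its reverse, whichever keeps every entry nonnegative) for some $\epsilon > 0$. A direct computation then shows each of the four listed quantities is affine and strictly monotone in $\epsilon$: PNS and converse PNS equal an off-diagonal slice cell plus a term living in the untouched $X = x$ slice, while the probability of sufficiency and the probability of enablement equal an off-diagonal cell divided by the fixed observational mass $P(X = x', Y = y')$ and $P(Y = y')$ respectively. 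Hence the perturbed joint $q'$ changes $\varphi$ strictly while agreeing with $q$ on every margin the Level-$2$ data constrain. (These four, and not the probabilities of necessity or disablement, appear precisely because $Y$-goodness controls the $X = x'$ slice, which is exactly where these four acquire their free mass.)

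The main work, and the main obstacle, is the lifting step: I must produce an actual $\mu'_3 \in \mathfrak{S}^\prec_3$ whose $(X, Y_x, Y_{x'})$-joint is $q'$ and whose \emph{every} interventional distribution---for all of $A$, not merely the three interventions of $A_X$, and over \emph{all} of $\*V$---coincides with that of $\mu_3$. The guiding observation is that each interventional distribution is a single-world marginal of the counterfactual distribution, so $\varpi_2(\mu'_3) = \varpi_2(\mu_3)$ is insensitive to how the worlds $X := x$ and $X := x'$ are \emph{coupled}; the slice move is a pure recoupling. I would realize it by modifying only the exogenous input to $Y$ in a realizer $\SCM$, drawing on atomlessness in the infinite case and augmenting the exogenous space in the finite case, so as to redistribute the cross-world mass of $(Y_x, Y_{x'})$ according to $q'$ \emph{without} altering the conditional law of $Y$ given its realized parents in any single world. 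The delicate point is verifying that this recoupling disturbs no within-world joint (which would perturb a descendant's interventional law) and respects realizability and the order $\prec$---this is exactly where $X$ being $\prec$-first is essential, and it is the content of the generalization of the two-variable construction of Figure~\ref{fig:example:separation} to arbitrary $\*V$ carried out in Appendix~\ref{app:main}. Setting $\mu'_3 = \varpi_3(\SCM')$ then yields the required witness.
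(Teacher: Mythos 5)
Your first two paragraphs are correct and are, in substance, the same analysis the paper performs. The identity $\mu_x(y') - \mu_{()}(x,y') = P(X = x',\, Y_x = y')$ (valid because $X$ is $\prec$-first, hence parentless), the observation that the Level-2 data over $\{X,Y\}$ fix exactly the two margins of the slice table $n_{jk} = P(X = x',\, Y_x = j,\, Y_{x'} = k)$, the nondegeneracy of the resulting one-parameter family under \eqref{cns:ineq:1}--\eqref{cns:ineq:2}, and the strict affine dependence of the four quantities of Definition~\ref{def:probcaus} on the off-diagonal cells: all of this is equivalent to the paper's step of finding values $y_0, y_1$ for which both events $\Omega_1, \Omega_2$ (your two antipodal cells) have positive mass, and to its closing computations for PNS, its converse, sufficiency, and enablement.

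The genuine gap is the lifting step, which you yourself call ``the main work'' but then do not carry out: you defer it to ``the generalization \dots carried out in Appendix~\ref{app:main}'', i.e.\ to the very proof being reconstructed. Worse, the one concrete device you propose---``modifying only the exogenous input to $Y$ \dots without altering the conditional law of $Y$ given its realized parents in any single world''---is insufficient, and in some models cannot work at all. Preserving $Y$'s conditional law given its realized parents does not preserve Level~2, because interventional distributions are \emph{joint} laws over all of $\*V$. Concretely, take $X \prec Y \prec Z$, natural value $X = x'$ almost surely, a fair exogenous coin $U$, and mechanisms $Y_x = Y_{x'} = U$ and $Z = U$ (both ignoring endogenous parents). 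This family is $Y$-good, and under the intervention $X \coloneqq x$ we have $P(Y = Z) = 1$; any modification of $Y$'s exogenous input alone that injects off-diagonal mass into $(Y_x, Y_{x'})$ forces $Y_x \neq U = Z_x$ on a positive-mass event, so $P(Y = Z)$ under $X \coloneqq x$ drops below $1$ and Level~2 is perturbed. Here \emph{no} recoupling touching only $Y$ exists, yet the lemma's conclusion still holds. The paper's construction is forced to do something stronger: working with the standard-form realizer, it builds (via atomlessness and the Borel isomorphism theorem) a measure isomorphism $g : \Omega_1 \to \Omega_2$ between the two positive-mass events of \emph{entire mechanism profiles} $\texttt{\textbf{f}} \in \texttt{\textbf{F}}(\*V)$, adjoins a coin with rates balanced so that $\varepsilon_1 \mu(\Omega_1) = \varepsilon_2 \mu(\Omega_2)$, and on heads swaps the \emph{whole} profile $\texttt{\textbf{f}} \mapsto g(\texttt{\textbf{f}})$---every variable's mechanism simultaneously---precisely in worlds whose realized $X$-value is $x$ (a condition constant within each world because $X$ is $\prec$-first). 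Swapping whole profiles is what keeps every within-world joint, hence every interventional distribution over all of $\*V$, intact while altering the cross-world coupling; in the example above it changes $Y$ and $Z$ together, preserving $P(Y = Z) = 1$. Without this coordinated construction, and the accompanying verification that the result lies in $\mathfrak{S}^{\prec}_3$, your claim that $\varpi_2(\mu'_3) = \varpi_2(\mu_3)$ remains unestablished.
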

\begin{figure} \centering 
\subfigure [$Y$-good Model] {
\begin{tabular}{ lllll } 
& & $\SCM$ & & \\
\toprule
$u$ & $P(u)$ & $X_u$ & $Y_{x, u}$ & $Y_{x', u}$\\
\midrule
$u_0$ & $\nicefrac{1}{2}$ & $x'$ & $y$ & $y$\\
$u_1$ & $\nicefrac{1}{2}$ & $x'$ & $y'$ & $y'$\\
\bottomrule
\end{tabular}
}
\subfigure [Example Separating Levels 2 and 3] {
\begin{tabular}{ lllll } 
& & $\SCM'$ & & \\
\toprule
$u$ & $P(u)$ & $X_u$ & $Y_{x, u}$ & $Y_{x', u}$\\
\midrule
$u_0$ & $\nicefrac{1}{2}- \varepsilon$ & $x'$ & $y$ & $y$\\
$u_1$ & $\varepsilon$ & $x'$ & $y$ & $y'$\\
$u_2$ & $\nicefrac{1}{2}- \varepsilon$ & $x'$ & $y'$ & $y'$\\
$u_3$ & $\varepsilon$ & $x'$ & $y'$ & $y$\\
\bottomrule
\end{tabular}
}
\caption{(a): the structural functions and exogenous noise for a model $\SCM$ with direct influence $X \rightarrow Y$. This $\SCM$ meets \eqref{cns:ineq:1} and \eqref{cns:ineq:2}, so we may apply Lemma~\ref{lem:separation}, constructing the model $\mathcal{M}'$ in (b), where $0 < \varepsilon < \nicefrac{1}{2}$. Note that $p^{\SCM}_{\text{cf}}(y_x, y'_{x'}) = 0$ while $p^{\SCM'}_{\text{cf}}(y_x, y'_{x'}) = \varepsilon$, so that the two models disagree on a Level 3 PNS quantity; on the other hand, it is easy to check agreement on all of Level 2. Similarly, $\SCM$ and $\SCM'$ disagree on the converse PNS, probability of sufficiency, and probability of enablement (Definition~\ref{def:probcaus}).}
  \label{fig:example:separation}
\end{figure}
Note that by reversing the roles of $x$ and $x'$, we may obtain the same for the probability of necessity and probability of disablement. 
The main theorem and its important learning-theoretic corollary are now straightforward.
\begin{theorem}[Topological Hierarchy] The set $\mathfrak{C}_2$ of points where all Level 3 facts are identifiable from Level 2 is meager in $(\mathfrak{S}_2,\TopoWeak_2)$. \label{thm:hierarchyFormal}
\end{theorem}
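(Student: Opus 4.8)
The plan is to exhibit a comeager subset of $\mathfrak{S}_2$ that is disjoint from $\mathfrak{C}_2$; since the complement of a comeager set is meager, this yields the claim at once. The comeager set will be the preimage, under the two-variable-effect projection $\varpi^{X\to Y}_2$, of the $Y$-good families, and its disjointness from $\mathfrak{C}_2$ will come from the separation supplied by Lemma~\ref{lem:separation}. Thus the proof rests on three already-established ingredients: Lemma~\ref{lem:goodcomeager} (the $Y$-good families are comeager in $\mathfrak{S}^{X\to Y}_2$), Proposition~\ref{prop:causalprojectioncontinuous} together with the stated fact that meagerness is preserved under preimages of continuous open maps, and Lemma~\ref{lem:separation} (good families are never witnesses to collapse).

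Concretely, I would fix two variables $X \neq Y$ and an $\omega$-like order $\prec$ in which $X$ comes first, then proceed in three steps. First, by Lemma~\ref{lem:goodcomeager} the set $G \subseteq \mathfrak{S}^{X\to Y}_2$ of $Y$-good families is comeager, so its complement $G^{c}$ is meager. Second, since $\varpi^{X\to Y}_2$ is continuous and open and surjects onto its image $\mathfrak{S}^{X\to Y}_2$ (Proposition~\ref{prop:causalprojectioncontinuous}), the preimage $(\varpi^{X\to Y}_2)^{-1}(G^{c})$ is again meager, by the preimage-preservation property recalled at the start of \S\ref{sec:main}. Third, I would establish the containment $\mathfrak{C}_2 \subseteq (\varpi^{X\to Y}_2)^{-1}(G^{c})$, i.e. that no $Y$-good family lies in the collapse set: if a family $(\mu_\alpha)_{\alpha}$ projects to a $Y$-good point and is realized within $\mathfrak{S}^{\prec}_3$, then Lemma~\ref{lem:separation} produces distinct $\mu_3, \mu'_3 \in \mathfrak{S}^{\prec}_3 \subseteq \mathfrak{S}_3$ over it that disagree on the chosen probability of causation $\varphi$, so its $\varpi_2$-fiber is not a singleton and $(\mu_\alpha)_{\alpha} \notin \mathfrak{C}_2$. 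Together these place $\mathfrak{C}_2$ inside a meager set, hence meager in $(\mathfrak{S}_2,\TopoWeak_2)$.

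The main obstacle is the hypothesis in Lemma~\ref{lem:separation} that the family be realized by a model with $X$ first, that is, that it lie in $\mathfrak{S}^{\prec}_2$: a priori a $Y$-good point of the full space might be realizable only by models in which $X$ has parents, where the perturbation of Figure~\ref{fig:example:separation} need not preserve every interventional distribution — precisely the subtlety the paper defers to Appendix~\ref{app:main}. I would handle this by first running the whole argument inside the fixed-order subspace, obtaining the sharper statement that $\mathfrak{C}_2 \cap \mathfrak{S}^{\prec}_2$ is meager in $\mathfrak{S}^{\prec}_2$ (the result advertised in the introduction), and then transferring to $\mathfrak{S}_2$: because $\varpi^{X\to Y}_2$ is continuous and open as a map out of the full $\mathfrak{S}_2$, the set $(\varpi^{X\to Y}_2)^{-1}(G^{c})$ is already meager in $\mathfrak{S}_2$, so the only remaining task is to verify the containment $\mathfrak{C}_2 \subseteq (\varpi^{X\to Y}_2)^{-1}(G^{c})$ for \emph{arbitrary} realizations rather than merely those with $X$ first. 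Discharging this last containment — which amounts to extending the Level-2-preserving separation construction to general $\*V$ — is where I expect the real work to concentrate.
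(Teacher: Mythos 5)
Your steps (1) and (2) are sound and coincide with the paper's ingredients: Lemma~\ref{lem:goodcomeager} plus the continuity and openness of $\varpi^{X \to Y}_{2}$ (Prop.~\ref{prop:causalprojectioncontinuous}) do make $(\varpi^{X \to Y}_{2})^{-1}(G^{c})$ meager in $\mathfrak{S}_2$. The genuine gap is exactly the step you defer: for a \emph{single fixed} pair $X \neq Y$, the containment $\mathfrak{C}_2 \subseteq (\varpi^{X \to Y}_{2})^{-1}(G^{c})$ is not established by anything in the paper, and it is doubtful. Lemma~\ref{lem:separation} applies only to families in $\mathfrak{S}^{\prec}_2$ with $X$ first, i.e.\ realized by an SCM in which $X$ has no parents; a $Y$-good point of $\mathfrak{S}_2$ whose realizations all give $X$ parents is simply out of the lemma's reach, because the Level-2-preserving perturbation (Fig.~\ref{fig:example:separation} and Appendix~\ref{app:main}) conditions on the observational value of $X$ and genuinely uses exogeneity of $X$ --- this is the ``dependence on further assumptions'' the paper flags. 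Your proposed repair, extending the separation construction to arbitrary realizations, is precisely the part you leave undone, and the paper never proves such a strengthening; nor would your fallback of working inside a fixed $\mathfrak{S}^{\prec}_2$ suffice, since $\mathfrak{S}_2 = \bigcup_{\prec} \mathfrak{S}^{\prec}_2$ is an \emph{uncountable} union of such subspaces, so meagerness of each piece relative to itself does not aggregate.

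The paper's resolution is a different decomposition that makes the problematic containment trivial: do not fix $X$ in advance. Since $\*V$ is countable and every SCM has a parentless variable (well-foundedness), $\mathfrak{C}_2 = \bigcup_{X \in \*V} \big(\mathfrak{C}_2 \cap \mathfrak{S}_2^{X}\big)$ is a \emph{countable} union, where $\mathfrak{S}_2^{X}$ comes from models in which $X$ has no parents. For each $X$ one picks any $Y \neq X$; every point of $\mathfrak{S}_2^{X}$ has, by definition, a realization in some $\mathfrak{S}^{\prec}_3$ with $X$ first, so Lemma~\ref{lem:separation} applies and yields $\mathfrak{C}_2 \cap \mathfrak{S}_2^{X} \subseteq \mathfrak{S}_2^{X} \setminus \mathfrak{D}^{X,Y}_2$, and the latter is meager in $\mathfrak{S}_2^{X}$ (hence in $\mathfrak{S}_2$) by your own steps (1)--(2) applied to the restricted projection. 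A countable union of meager sets is meager, finishing the proof. So the missing idea is to let the distinguished parentless variable $X$ vary with the point via the classes $\mathfrak{S}_2^{X}$, rather than trying to prove the fixed-$X$ containment for arbitrary realizations.
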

\begin{proof}
Let $\mathfrak{D}^{X, Y}_2 \subset \mathfrak{S}_2^X$ be the preimage under $\varpi^{X \to Y}_{2}$ of the set of $Y$-good tuples in $\mathfrak{S}^{X \to Y}_{2}$.
Lemma~\ref{lem:separation} implies that $\mathfrak{C}_2 \cap \mathfrak{S}_2^{X}$ is contained in $\mathfrak{S}_2^{X} \setminus \mathfrak{D}^{X, Y}_2$, for \emph{any} $Y \neq X$.
Meanwhile, since $\varpi^{X \to Y}_{2}$ is continuous and open, Lemma~\ref{lem:goodcomeager} implies that $\mathfrak{S}_2^{X} \setminus \mathfrak{D}^{X, Y}_2$ is meager in $\mathfrak{S}_2^{X}$, and thereby also in $\mathfrak{S}_2$.
Thus $\mathfrak{C}_2 = \bigcup_{X \in \*V} \mathfrak{C}_2 \cap \mathfrak{S}_2^{X}$ is a countable union of meager sets, and hence meager.
\end{proof}
\begin{corollary} \label{cor:hierarchy} No causal hypothesis licensing arbitrary counterfactual inferences (and specifically those of the probabilities of causation) from observational and experimental data is itself statistically (even experimentally) verifiable.
\end{corollary}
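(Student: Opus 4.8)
The plan is to derive the statement directly from the two main theorems already in hand: Theorem~\ref{thm:hierarchyFormal} (the collapse set $\mathfrak{C}_2$ is meager) and Theorem~\ref{thm:l2learning} (experimental verifiability is exactly openness in $\TopoWeak_2$). The first thing I would do is pin down the right formal reading of a ``causal hypothesis.'' Since experimental verifiability is a property of subsets of $\mathfrak{S}_2$, and the available input is observational plus experimental data (i.e.\ a Level~2 datum $(\mu_\alpha)_{\alpha}$, the empty intervention supplying the observational part), I model such a hypothesis as a set $H \subseteq \mathfrak{S}_2$ of admissible Level~2 facts. To say that $H$ \emph{licenses} inference of a counterfactual quantity means that this quantity is a single-valued function of the datum $(\mu_\alpha)_\alpha$ at every point of $H$: for each $(\mu_\alpha)_\alpha \in H$, the fiber $\varpi_2^{-1}\big((\mu_\alpha)_\alpha\big) \cap \mathfrak{S}_3$ cannot contain two counterfactual distributions disagreeing on that quantity.

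The key step is then to show such an $H$ is confined to a meager set. If $H$ licenses \emph{arbitrary} counterfactual inference, every fiber over $H$ must be a singleton, so by definition $H \subseteq \mathfrak{C}_2$, and Theorem~\ref{thm:hierarchyFormal} gives meagerness at once. If instead $H$ only licenses inference of one of the probabilities of causation for some fixed pair $X \neq Y$, I would appeal to Lemma~\ref{lem:separation}: at every $Y$-good point of the relevant 2VE-space there exist two Level~3 extensions $\mu_3, \mu_3'$ disagreeing on that quantity, so no $Y$-good point can lie in $H$. Hence $H$ is contained in the complement of the $Y$-good tuples, which (pulling Lemma~\ref{lem:goodcomeager} back along the continuous open map $\varpi^{X\to Y}_{2}$, exactly as in the proof of Theorem~\ref{thm:hierarchyFormal}) is meager in $\mathfrak{S}_2$. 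Either way $H$ lies inside a meager set, and a subset of a meager set is meager.

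Finally I would conclude by contraposition against Theorem~\ref{thm:l2learning}. As recorded in \S\ref{sec:main}, by the Baire Category Theorem a meager subset of $\mathfrak{S}_2$ contains no nonempty open subset; so if $H$ were open in $\TopoWeak_2$ it would have to equal its own interior and therefore be empty. Any substantive (nonempty) hypothesis $H$ is thus not open, and by Theorem~\ref{thm:l2learning} it is not experimentally verifiable. Since experimental verifiability grants the learner access to samples from \emph{all} experiments at once, it is the more permissive notion; ruling it out is the strongest form of the negative claim and subsumes mere statistical (observational) unverifiability, which is why the corollary phrases the impossibility as holding ``even experimentally.''

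I expect the main obstacle to be not the topology---which collapses immediately onto the theorems above---but the conceptual bookkeeping in the first step: fixing a notion of ``hypothesis'' and of ``licensing inference'' that is both faithful to informal causal-inference practice and tight enough to invoke Lemma~\ref{lem:separation}. In particular, one must insist that the hypothesis be read as a constraint at the \emph{verifiable} Level~2, and that determinacy of the counterfactual be demanded against \emph{all} admissible Level~3 extensions of the Level~2 datum, rather than only those extensions already baked into the assumption; it is precisely this global determinacy requirement that forces $H$ into the meager determined-set and lets the Baire argument bite.
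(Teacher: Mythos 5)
Your main line of argument is the paper's own (implicit) one: the paper offers no separate proof of the corollary, remarking only that it is ``straightforward'' from what precedes, and the intended derivation is exactly your first and third steps --- read the hypothesis as $H \subseteq \mathfrak{S}_2$, note that licensing \emph{arbitrary} counterfactual inference forces every $\varpi_2$-fiber over $H$ to be a singleton, i.e.\ $H \subseteq \mathfrak{C}_2$, invoke Theorem~\ref{thm:hierarchyFormal} for meagerness, and conclude via the Baire property and Theorem~\ref{thm:l2learning} that a nonempty $H$ has empty interior, hence is not open, hence is not experimentally (a fortiori not statistically) verifiable. That part of your proposal is correct.

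The fixed-pair branch, however, contains a genuine error. Lemma~\ref{lem:separation} does \emph{not} assert that every point of $\mathfrak{S}_2$ whose $\varpi^{X \to Y}_{2}$-projection is $Y$-good admits two Level~3 extensions disagreeing on the chosen probability of causation: its hypothesis requires $(\mu_\alpha)_\alpha \in \mathfrak{S}_2^{\prec}$ for an order $\prec$ in which $X$ comes \emph{first}, i.e.\ it only covers points compatible with $X$ being parentless. You silently dropped this hypothesis, and without it your claim ``no $Y$-good point can lie in $H$'' is false, not merely unproven. Concretely, fix values $y \neq y'$ of $Y$ and $x$ of $X$ and let $H = \{(\mu_\alpha)_\alpha \in \mathfrak{S}_2 : \mu_{Y \coloneqq y}\big(\pi_X^{-1}(\{x\})\big) > \mu_{Y \coloneqq y'}\big(\pi_X^{-1}(\{x\})\big)\}$. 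This set is open in $\TopoWeak_2$ (a strict inequality between continuous evaluation maps, by Lemma~\ref{prop:probiscontinuous} and continuity of coordinate projections) and nonempty (take $Y$ a fair coin, $X = Y \oplus N$ with $P(N=1)=\nicefrac{1}{4}$, all remaining variables independent fair coins). At any point of $H$, interventions on $Y$ change the distribution of $X$, so in every compatible SCM $Y$ is an ancestor of $X$; by well-foundedness $X$ is then not an ancestor of $Y$, so $\pi_Y \circ m^{\SCM_{X \coloneqq x}} = \pi_Y \circ m^{\SCM}$ for every compatible $\SCM$, and consequently every Level~3 extension of such a point assigns the value $0$ to the PNS, the converse PNS, and the other probabilities of causation for the pair $(X,Y)$. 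Thus all of these quantities \emph{are} determined throughout $H$, even though many points of $H$ (including the example above) have $Y$-good projections. So for a fixed pair there exists a nonempty open --- hence, by Theorem~\ref{thm:l2learning}, experimentally verifiable --- hypothesis licensing inference of that pair's probabilities of causation; the fixed-pair reading of the corollary is simply false. The repair is to read the probabilities-of-causation clause the way Theorem~\ref{thm:hierarchyFormal}'s proof does: work inside each $\mathfrak{S}_2^X$ separately, where $X$ is guaranteed parentless so Lemma~\ref{lem:separation} legitimately applies (with any $Y \neq X$ as witness), require the hypothesis to license these inferences with $X$ ranging over variables rather than for one fixed pair, and take the countable union over $X$ at the end.
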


\section{Conclusion}\label{section:conclusion}

We introduced a general framework for topologizing spaces of causal models, including the space of all (discrete, well-founded) causal models. As an illustration of the framework we characterized levels of the causal hierarchy  topologically, and proved a topological version of the causal hierarchy theorem from \cite{BCII2020}. While the latter shows that collapse of the hierarchy (specifically of Level 3 to Level 2) is \emph{exceedingly unlikely} in the sense of (Lebesgue) measure, we offer a complementary result: any condition guaranteeing that we could infer arbitrary Level 3 information from purely Level 2 information must be \emph{statistically  unverifiable}, even by experimental means. Both results capture an important sense in which collapse is ``negligible'' in the space of all possible models. As an added benefit, the topological approach extends seamlessly to the setting of infinitely many variables.

There are many natural extensions of these results. For instance, we have begun work on a version for continuous endogenous variables. Also of interest are subspaces embodying familiar causal assumptions or other well-studied coarsenings of SCMs (see, e.g., \cite{pmlr-v117-lin20a} on Bayesian networks, or \cite{GeninMayoWilson,Genin2021} on linear non-Gaussian models), which often render important inference problems solvable, though sometimes only ``generically'' so.
In the opposite direction, we expect analogous hierarchy theorems to hold for extensions of the SCM concept, e.g., that dropping the well-foundedness or recursiveness requirements \cite{bongers2021foundations}.
As emphasized by \cite{BCII2020}, a causal hierarchy theorem should not be construed as a purely limitative result, but rather as further motivation for understanding the whole range of causal-inductive assumptions, how they relate, and what they afford. We submit that the topological constructions presented here can help clarify and systematize this broader landscape.

\subsubsection*{Acknowledgments} 
This material is based upon work supported by the National Science Foundation Graduate Research Fellowship Program under Grant No. DGE-16565. We are very grateful to the five anonymous NeurIPS reviewers for insightful and detailed comments and questions that led to significant improvements in the paper. We would also like to thank Jimmy Koppel, Krzysztof Mierzewski, Francesca Zaffora Blando, and especially Kasey Genin for helpful feedback on earlier versions. Finally, we are indebted to Saminul Haque for identifying a gap in the published version of the paper, which has been corrected in the present arXiv version (see in particular the augmented statement of Prop.~\ref{prop:causalprojectioncontinuous}). 


\medskip

{
\small

\bibliographystyle{abbrvnat}
\bibliography{uai2021}

\begin{thebibliography}{46}
\providecommand{\natexlab}[1]{#1}
\providecommand{\url}[1]{\texttt{#1}}
\expandafter\ifx\csname urlstyle\endcsname\relax
  \providecommand{\doi}[1]{doi: #1}\else
  \providecommand{\doi}{doi: \begingroup \urlstyle{rm}\Url}\fi

\bibitem[Angrist et~al.(1996)Angrist, Imbens, and Rubin]{Angrist}
J.~D. Angrist, G.~W. Imbens, and D.~B. Rubin.
\newblock Identification of causal effects using instrumental variables.
\newblock \emph{Journal of the ASA}, 91\penalty0 (343):\penalty0 444--455,
  1996.

\bibitem[Avin et~al.(2005)Avin, Shpitser, and Pearl]{avin:etal05}
C.~Avin, I.~Shpitser, and J.~Pearl.
\newblock Identifiability of path-specific effects.
\newblock In \emph{Proceedings of IJCAI}, 2005.

\bibitem[Balke and Pearl(1994)]{BalkePearl}
A.~Balke and J.~Pearl.
\newblock Probabilistic evaluation of counterfactual queries.
\newblock In \emph{Proceedings of AAAI}, 1994.

\bibitem[Bareinboim et~al.(2020)Bareinboim, Correa, Ibeling, and
  Icard]{BCII2020}
E.~Bareinboim, J.~D. Correa, D.~Ibeling, and T.~Icard.
\newblock On {P}earl's hierarchy and the foundations of causal inference.
\newblock Technical Report R-60, Causal AI Lab, Columbia University, 2020.

\bibitem[Belot(2020)]{BELOT2020159}
G.~Belot.
\newblock Absolutely no free lunches!
\newblock \emph{Theoretical Computer Science}, 845:\penalty0 159--180, 2020.

\bibitem[Billingsley(1999)]{Billingsley}
P.~Billingsley.
\newblock \emph{Convergence of Probability Measures}.
\newblock Wiley, 2nd edition, 1999.

\bibitem[Bogachev(2007)]{Bogachev2007}
V.~I. Bogachev.
\newblock \emph{Measure Theory}.
\newblock Springer Berlin Heidelberg, 2007.

\bibitem[Bongers et~al.(2021)Bongers, Forré, Peters, and
  Mooij]{bongers2021foundations}
S.~Bongers, P.~Forré, J.~Peters, and J.~M. Mooij.
\newblock Foundations of structural causal models with cycles and latent
  variables, 2021.

\bibitem[Cartwright(1989)]{Cartwright}
N.~Cartwright.
\newblock \emph{Nature's Capacities and their Measurements}.
\newblock Clarendon Press, 1989.

\bibitem[Dembo and Peres(1994)]{DemboPeres}
A.~Dembo and Y.~Peres.
\newblock A topological criterion for hypothesis testing.
\newblock \emph{Annals of Statistics}, 22\penalty0 (1):\penalty0 106--117,
  1994.

\bibitem[Diaconis and Freedman(1986)]{DiaconisFreedman}
P.~Diaconis and D.~Freedman.
\newblock On the consistency of {B}ayes estimates.
\newblock \emph{Annals of Statistics}, 14\penalty0 (1):\penalty0 1--26, 1986.

\bibitem[Drton et~al.(2011)Drton, Foygel, and Sullivant]{drton2011}
M.~Drton, R.~Foygel, and S.~Sullivant.
\newblock Global identifiability of linear structural equation models.
\newblock \emph{Annals of Statistics}, 39\penalty0 (2):\penalty0 865--886,
  2011.

\bibitem[Eberhardt et~al.(2005)Eberhardt, Glymour, and Scheines]{Eberhardt2005}
F.~Eberhardt, C.~Glymour, and R.~Scheines.
\newblock On the number of experiments sufficient and in the worst case
  necessary to identify all causal relations among n variables.
\newblock In \emph{Proceedings of UAI}, page 178–184, 2005.

\bibitem[Freedman(1997)]{Freedman1997}
D.~Freedman.
\newblock From association to causation via regression.
\newblock \emph{Advances in Applied Mathematics}, 18:\penalty0 59--110, 1997.

\bibitem[Galles and Pearl(1998)]{Galles1998}
D.~Galles and J.~Pearl.
\newblock An axiomatic characterization of causal counterfactuals.
\newblock \emph{Foundations of Science}, 3\penalty0 (1):\penalty0 151--182, Jan
  1998.

\bibitem[Genin(2018)]{Genin2018}
K.~Genin.
\newblock \emph{The Topology of Statistical Inquiry}.
\newblock PhD thesis, Carnegie Mellon University, 2018.

\bibitem[Genin(2021)]{Genin2021}
K.~Genin.
\newblock Statistical undecidability in linear, non-gaussian models in the
  presence of latent confounders.
\newblock In \emph{Proceedings of NeurIPS}, 2021.

\bibitem[Genin and Kelly(2017)]{GeninKelly}
K.~Genin and K.~Kelly.
\newblock The topology of statistical verifiability.
\newblock In \emph{Proceedings of TARK}, pages 236--250, 2017.

\bibitem[Genin and Mayo-Wilson(2020)]{GeninMayoWilson}
K.~Genin and C.~Mayo-Wilson.
\newblock Statistical decidability in linear, non-{G}aussian models.
\newblock In \emph{Causal Discovery \& Causality-Inspired Machine Learning},
  NeurIPS, 2020.

\bibitem[Ibeling and Icard(2019)]{II2019}
D.~Ibeling and T.~Icard.
\newblock On open-universe causal reasoning.
\newblock In \emph{Proceedings of UAI}, 2019.

\bibitem[Ibeling and Icard(2020)]{ibelingicard2020}
D.~Ibeling and T.~Icard.
\newblock Probabilistic reasoning across the causal hierarchy.
\newblock In \emph{Proceedings of AAAI}, 2020.

\bibitem[Imbens and Rubin(2015)]{Imbens2015}
G.~W. Imbens and D.~B. Rubin.
\newblock \emph{Causal Inference in Statistics, Social, and Biomedical
  Sciences}.
\newblock Cambridge University Press, 2015.

\bibitem[Kechris(1995)]{Kechris1995}
A.~S. Kechris.
\newblock \emph{Classical Descriptive Set Theory}.
\newblock Springer New York, 1995.

\bibitem[Kelley(1975)]{johnkelley1975}
J.~L. Kelley.
\newblock \emph{General Topology}.
\newblock Springer, 1975.

\bibitem[Lee et~al.(2019)Lee, Correa, and Bareinboim]{lee2019general}
S.~Lee, J.~Correa, and E.~Bareinboim.
\newblock General identifiability with arbitrary surrogate experiments.
\newblock In \emph{Proceedings of UAI}, 2019.

\bibitem[Lin and Zhang(2020)]{pmlr-v117-lin20a}
H.~Lin and J.~Zhang.
\newblock On learning causal structures from non-experimental data without any
  faithfulness assumption.
\newblock In \emph{Proceedings of ALT}, pages 554--582, 2020.

\bibitem[Meek(1995)]{Meek1995}
C.~Meek.
\newblock Strong completeness and faithfulness in {B}ayesian networks.
\newblock In \emph{Proceedings of UAI}, page 411–418, 1995.

\bibitem[Midolo and {De Marco}(2009)]{MIDOLO20091186}
L.~Midolo and G.~{De Marco}.
\newblock Right inverses of linear maps on convex sets.
\newblock \emph{Topology and its Applications}, 156\penalty0 (7):\penalty0
  1186--1191, 2009.

\bibitem[Oxtoby(1971)]{Oxtoby}
J.~C. Oxtoby.
\newblock \emph{Measure and Category}.
\newblock Springer, 1971.

\bibitem[Pearl(1995)]{Pearl1995}
J.~Pearl.
\newblock Causal diagrams for empirical research.
\newblock \emph{Biometrika}, 82\penalty0 (4):\penalty0 669--710, 1995.

\bibitem[Pearl(1999)]{Pearl1999}
J.~Pearl.
\newblock Probabilities of causation: Three counterfactual interpretations and
  their identification.
\newblock \emph{Synthese}, 121\penalty0 (1):\penalty0 93--149, 1999.

\bibitem[Pearl(2009)]{Pearl2009}
J.~Pearl.
\newblock \emph{Causality}.
\newblock Cambridge University Press, 2009.

\bibitem[Pearl and Mackenzie(2018)]{pearl2018book}
J.~Pearl and D.~Mackenzie.
\newblock \emph{The book of why: The new science of cause and effect}.
\newblock Basic Books, 2018.

\bibitem[Peters et~al.(2017)Peters, Janzing, and Sch\"{o}lkopf]{Peters}
J.~Peters, D.~Janzing, and B.~Sch\"{o}lkopf.
\newblock \emph{Elements of Causal Inference: Foundations and Learning
  Algorithms}.
\newblock MIT Press, 2017.

\bibitem[Richardson and Robins(2013)]{Richardson}
T.~S. Richardson and J.~L. Robins.
\newblock Single world intervention graphs ({SWIGs}): A unification of the
  counterfactual and graphical approaches to causality.
\newblock Working Paper Number 128, Center for Statistics and the Social
  Sciences, University of Washington, 2013.

\bibitem[Robins and Greenland(1989)]{robins-greenland}
J.~Robins and S.~Greenland.
\newblock The probability of causation under a stochastic model for individual
  risk.
\newblock \emph{Biometrics}, 45\penalty0 (4):\penalty0 1125--1138, 1989.

\bibitem[Schölkopf et~al.(2021)Schölkopf, Locatello, Bauer, Ke, Kalchbrenner,
  Goyal, and Bengio]{9363924}
B.~Schölkopf, F.~Locatello, S.~Bauer, N.~R. Ke, N.~Kalchbrenner, A.~Goyal, and
  Y.~Bengio.
\newblock Toward causal representation learning.
\newblock \emph{Proceedings of the IEEE}, 109\penalty0 (5):\penalty0 612--634,
  2021.

\bibitem[Shalev-Shwartz and Ben-David(2014)]{MLbook}
S.~Shalev-Shwartz and S.~Ben-David.
\newblock \emph{Understanding Machine Learning: From Theory to Algorithms}.
\newblock Cambridge University Press, 2014.

\bibitem[Shimizu et~al.(2006)Shimizu, Hoyer, Hyv{\"{a}}rinen, and
  Kerminen]{JMLR:v7:shimizu06a}
S.~Shimizu, P.~O. Hoyer, A.~Hyv{\"{a}}rinen, and A.~Kerminen.
\newblock A linear non-{G}aussian acyclic model for causal discovery.
\newblock \emph{Journal of Machine Learning Research}, 7\penalty0
  (72):\penalty0 2003--2030, 2006.

\bibitem[Spirtes et~al.(2001)Spirtes, Glymour, and Scheines]{Spirtes2001}
P.~Spirtes, C.~Glymour, and R.~Scheines.
\newblock \emph{{Causation, Prediction, and Search}}.
\newblock MIT Press, 2001.

\bibitem[Suppes and Zanotti(1981)]{suppes:zan81}
P.~Suppes and M.~Zanotti.
\newblock {When are probabilistic explanations possible?}
\newblock \emph{Synthese}, 48:\penalty0 191--199, 1981.

\bibitem[Tavares et~al.(2021)Tavares, Koppel, Zhang, Das, and Lezama]{Tavares}
Z.~Tavares, J.~Koppel, X.~Zhang, R.~Das, and A.~S. Lezama.
\newblock A language for counterfactual generative models.
\newblock In \emph{Proceedings of ICML}, 2021.

\bibitem[Tian and Pearl(2000)]{Tian2000}
J.~Tian and J.~Pearl.
\newblock Probabilities of causation: Bounds and identification.
\newblock \emph{Annals of Mathematics and Artificial Intelligence}, 28\penalty0
  (1):\penalty0 287--313, 2000.

\bibitem[Tian and Pearl(2002)]{tian-2002}
J.~Tian and J.~Pearl.
\newblock A general identification condition for causal effects.
\newblock In \emph{Proceedings of AAAI}. 2002.

\bibitem[Tian et~al.(2006)Tian, Kang, and Pearl]{tian:etal06}
J.~Tian, C.~Kang, and J.~Pearl.
\newblock A characterization of interventional distributions in
  semi-{M}arkovian causal models.
\newblock In \emph{Proceedings of AAAI}. 2006.

\bibitem[Uhler et~al.(2013)Uhler, Raskutti, B\"{u}hlmann, and Yu]{Uhler}
C.~Uhler, G.~Raskutti, P.~B\"{u}hlmann, and B.~Yu.
\newblock Geometry of the faithfulness assumption in causal inference.
\newblock \emph{Annals of Statistics}, 41\penalty0 (2):\penalty0 436--463,
  2013.

\end{thebibliography}
%
%
%
}

\newpage
\section*{Appendices}
\appendix

In this supplement we give proofs of all the main results in the text.

\section{Structural Causal Models (\S\ref{scms})} \label{app:causalmodels}
\subsection{Background on Relations and Orders}

\begin{definition}
Let $C$ be a set.
Then a subset $R \subset C \times C$ is called a \emph{binary relation} on $C$. We write $cRc'$ if $(c, c') \in R$.
The binary relation $R$ is \emph{well-founded} if every nonempty subset $D \subset C$ has a minimal element with respect to $R$, i.e., if for every nonempty $D \subset C$, there is some $d \in D$, such that there is no $d' \in D$ such that $d' R d$.
The binary relation $\left.\prec\right. \subset C \times C$ is a (strict) \emph{total order} if it is irreflexive, transitive, and \emph{connected}: either $c \prec c'$ or $c' \prec c$ for all $c \neq c' \in C$.
\end{definition}
\begin{example}
The edges of a dag form a well-founded binary relation on its nodes. If $\*V = \{V_n\}_{n \ge 0}$, then the binary relation $\rightarrow$ defined by $V_m \rightarrow V_n$ iff either $0 < m < n$ or $n = 0 < m$ is well-founded but not extendible to an $\omega$-like total order (see Fact~\ref{fact:omegalike}) and not locally finite: $V_0$ has infinitely many predecessors $V_1, V_2, \dots$
\end{example}

\subsection{Proofs}

\begin{proof}[Proof of Proposition~1.]
We assume without loss that $\*U(V) = \*U$ for every $V \in \*V$.
For each $\*u \in \chi_{\*U}$,
well-founded induction along $\rightarrow$ shows unique existence of a $m^{\SCM}(\*u) \in \chi_{\*V}$
solving $f_{V}\big(\pi_{\mathbf{Pa}(V)}(m^{\SCM}(\*u)), \*u\big) = \pi_V(m^{\SCM}(\*u))$ for each $V$.
We claim the resulting function $m^{\SCM}$ is measurable.
One has a clopen basis of cylinders, so it suffices to show each preimage $(m^{\SCM})^{-1}(v)$ is measurable. Recall that here $v$ denotes the cylinder set $\pi^{-1}_V(\{v\}) \in \mathcal{B}(\chi_{\*V})$, for $v \in \chi_V$.
Once again this can be established inductively. Note that
\begin{align*}
(m^{\SCM})^{-1}(v) =
\bigcup_{\mathbf{p} \in \chi_{\mathbf{Pa}(V)}}\Big[ (m^{\SCM})^{-1}(\*p) \cap \pi_{\*U}\big(f_V^{-1}(\{v\}) \cap (\{\*p\} \times \chi_{\*U})\big) \Big].
\end{align*}
which is a finite union (by local finiteness) of measurable sets (by the inductive hypothesis) and therefore measurable.
Thus for any $\SCM$
the pushforward $p^{\SCM} = m^{\SCM}_*(P)$ is a measure on $\mathcal{B}(\chi_{\*V})$ and gives the observational distribution (Definition~4).
\end{proof}

\begin{proof}[Remark on Definition~6]
To see that $p^{\SCM}_{\mathrm{cf}}$ thus defined is a measure,
note that $p^{\SCM}_{\mathrm{cf}} = p^{\SCM_A}$ and apply Proposition~1, where the model $\SCM_A$ is defined in Definition~\ref{def:counterfactualmodel}. This is similar in spirit to the construction of ``twinned networks'' \citep{BalkePearl} or ``single-world intervention graphs'' \citep{Richardson}.
\end{proof}

\begin{definition}\label{def:counterfactualmodel}
Given $\SCM$ as in Def.~\ref{def:scm:lit}
and a collection of interventions $A$
form the following \emph{counterfactual model} $\SCM_A = \langle \*U, A \times \*V, \{f_{(\alpha, V)}\}_{(\alpha, V)}, P\rangle$, over endogenous variables $A \times \*V$. The counterfactual model has the influence relation $\rightarrow'$, defined as follows.
Where $\alpha', \alpha \in A$ let $(\alpha', V') \rightarrow' (\alpha, V)$ iff $\alpha' = \alpha$ and $V' \rightarrow V$.
The exogenous space $\*U$ and noise distribution $P$ of $\mathcal{M}_A$ are the same as those of $\mathcal{M}$,
the exogenous parents sets $\{\*U(V)\}_V$ are also identical,
and the functions are $\{f_{(\alpha, V)}\}_{(\alpha, V)}$ defined as follows.
For any $\*W \coloneqq \*w \in A$, $V \in \*V$, $\mathbf{p} \in \chi_{\mathbf{Pa}(V)}$, and $\*u \in \chi_{\*U(V)}$ let
\begin{align*}
f_{(\*W \coloneqq \*w, V)}\big((\*W \coloneqq \*w, \mathbf{p}), \*u\big) = \begin{cases}
\pi_V(\*w), & V \in \*W\\
f_V(\mathbf{p}, \*u), & V \notin \*W
\end{cases}.
\end{align*}
\end{definition}


\section{Proofs from \S\ref{sec:causalhierarchy}} \label{app:causalhierarchy}
\begin{proof}[Remark on exact characterizations of $\mathfrak{S}_3$, $\mathfrak{S}_2$]
Rich probabilistic languages interpreted over $\mathfrak{S}_3$ and $\mathfrak{S}_2$ were axiomatized in \cite{ibelingicard2020}.
This axiomatization, along with the atomless restriction, gives an exact characterization for the hierarchy sets.
Standard form, defined below, gives an alternative characterization exhibiting each $\mathfrak{S}_3^\prec$ as a particular atomless probability space (Corollary~\ref{cor:standardform}).
For $\mathfrak{S}^{X \to Y}_{2}$ (or $\mathfrak{S}_2$ in the two-variable case) we need the characterization for the proof of the hierarchy separation result, so it is given explicitly as Lemma~\ref{prop:p2:characterization:binary} in the section below on 2VE-spaces.
\end{proof}
\subsection{Standard Form}\label{ss:app:standardform}
Fix $\prec$. Note that the map $\varpi_3$ restricted to $\mathfrak{M}_\prec$ does \emph{not} inject into $\mathfrak{S}_3^\prec$, as any trivial reparametrizations of exogenous noise are distinguished in $\mathfrak{M}_\prec$.
It is therefore useful to identify a ``standard'' subclass $\mathfrak{M}_\prec^{\mathrm{std}}$ on which $\varpi_3$ is injective with image $\mathfrak{S}_3^\prec$, and in which we lose no expressivity.
\begin{notation*}
Let $\mathbf{Pred}(V) = \{V' : V' \prec V\}$ and denote a \emph{deterministic} mechanism for $V$ mapping a valuation of its predecessors to a value as $\texttt{f}_V \in \chi_{\mathbf{Pred}(V)} \to \chi_V$. Write an entire collection of such mechanisms, one for each variable, as $\texttt{\textbf{f}} = \{\texttt{f}_V\}_{V}$.
A set $\*B \subset \*V$ is \emph{ancestrally closed} if $\mathbf{B} = \bigcup_{V \in \*B} \mathbf{Pred}(V)$.
For any ancestrally closed $\*B$
let $\xi(\*B) = \big\{(V, \*p): V \in \*B, \*p \in \chi_{\mathbf{Pred}(V)}\big\}$. Note that $\texttt{\textbf{F}}(\*B) = \CartProd_{(V, \mathbf{p}) \in \xi(\*B)} \chi_V$ encodes the set of all possible such collections of deterministic mechanisms, and we write, e.g., $\texttt{f} \in \texttt{F}(\*B)$.
Abbreviate $\xi(\*V)$, $\texttt{F}(\*V)$ for the entire endogenous variable set $\*V$ as $\xi$, $\texttt{F}$ respectively. We also use $\texttt{f}$ to abbreviate the set
\begin{align}
\label{eq:probabilityofmechanisms}
\bigcap_{\substack{V \in \*B\\ \*p \in \chi_{\mathbf{Pred}(V)}}} \pi^{-1}_{(\mathbf{Pred}(V) \coloneqq \*p, V)}(\{\texttt{f}(\*p)\}) \in \mathcal{B}(\chi_{A \times \*V})
\end{align}
so we can write, e.g., $p^{\SCM}_{\mathrm{cf}}(\texttt{f})$ for the probability  in $\SCM$ that the effective mechanisms $\texttt{f}$ have been selected (by exogenous factors) for the variables $\*B$.
\end{notation*}
\begin{definition}\label{def:standardform}

The SCM $\SCM = \langle \*U, \*V, \{f_V\}_V, P\rangle$ of Def.~\ref{def:scm:lit} is in \emph{standard form} over $\prec$, and we write $\SCM \in \mathfrak{M}_\prec^{\mathrm{std}}$, if we have that $\left.\rightarrow\right. = \left.\prec\right.$ for its influence relation, $\*U = \{U\}$ for a single exogenous variable $U$ with $\chi_U = \texttt{{F}}$, $P \in \mathfrak{P}(\texttt{F})$ for its exogenous noise space, and for every $V$, we have that $\*U(V) = \*U = \{U\}$ and the mechanism $f_V$ takes $\mathbf{p}, (\{ \texttt{f}_V\}_V) \mapsto \texttt{f}_V(\mathbf{p})$ for each $\*p \in \chi_{\mathbf{Pred}(V)}$ and joint collection of deterministic functions $\{ \texttt{f}_V\}_V \in \texttt{F} = \chi_{U}$.
\end{definition}
Each unit $\*u$ in a standard form model amounts to a collection $\{ \texttt{f}_V\}_V$ of deterministic mechanisms, and each variable is determined by a mechanism specified by the ``selector'' endogenous variable $U$.
\begin{lemma}
Let $\SCM \in \mathfrak{M}_\prec$. Then there exists $\SCM^{\mathrm{std}} \in \mathfrak{M}_\prec^{\mathrm{std}}$ such that $\varpi_3(\SCM) = \varpi_3(\SCM^{\mathrm{std}})$.
\end{lemma}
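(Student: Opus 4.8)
The plan is to realize the exogenous randomness of $\SCM$ as randomness over the \emph{response functions} it induces, and then take a pushforward. Given $\SCM = \langle \*U, \*V, \{f_V\}_V, P\rangle \in \mathfrak{M}_\prec$, I would define a \emph{response map} $\Phi : \chi_{\*U} \to \texttt{F}$ that reads off, from each exogenous valuation, the deterministic mechanism each variable would follow: for $\*u \in \chi_{\*U}$ and $V \in \*V$, set $\texttt{f}^{\*u}_V : \chi_{\mathbf{Pred}(V)} \to \chi_V$ by $\texttt{f}^{\*u}_V(\mathbf{p}) = f_V\big(\pi_{\mathbf{Pa}(V)}(\mathbf{p}), \pi_{\*U(V)}(\*u)\big)$ (well-defined since $\mathbf{Pa}(V) \subseteq \mathbf{Pred}(V)$ because $\rightarrow$ extends to $\prec$), and let $\Phi(\*u) = \{\texttt{f}^{\*u}_V\}_V \in \texttt{F}$. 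I would then take $\SCM^{\mathrm{std}}$ to be the standard-form model over $\prec$ of Definition~\ref{def:standardform} (so its influence relation is declared to be exactly $\prec$, its single selector $U$ has $\chi_U = \texttt{F}$, and its mechanisms send $(\mathbf{p}, \texttt{f}) \mapsto \texttt{f}_V(\mathbf{p})$), equipped with exogenous noise $P^{\mathrm{std}} = \Phi_*(P) \in \mathfrak{P}(\texttt{F})$.

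Next I would verify that $\Phi$ is measurable, so $P^{\mathrm{std}}$ is a legitimate Borel measure. Since $\texttt{F} = \CartProd_{(V,\mathbf{p}) \in \xi}\chi_V$ carries the product topology and $\xi$ is countable (each $\mathbf{Pred}(V)$ is finite by $\omega$-likeness, and $\*V$ is countable), it suffices to check that the $\Phi$-preimage of each subbasic cylinder $\{\texttt{f} : \texttt{f}_V(\mathbf{p}) = c\}$ is measurable. This preimage is exactly $\{\*u : f_V(\pi_{\mathbf{Pa}(V)}(\mathbf{p}), \pi_{\*U(V)}(\*u)) = c\}$, which is measurable because $f_V$ is measurable and $\*u \mapsto \pi_{\*U(V)}(\*u)$ is continuous.

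The technical heart—and the step I expect to require the most care—is a single \emph{well-founded induction along $\prec$} (which is well-founded because an $\omega$-like order admits no infinite descending chain) establishing, \emph{simultaneously for every intervention} $\alpha = (\*W \coloneqq \*w) \in A$, the identity $m^{\SCM_\alpha}(\*u) = m^{\SCM^{\mathrm{std}}_\alpha}(\Phi(\*u))$ for all $\*u$. For $V \in \*W$ both sides equal $\pi_V(\*w)$; for $V \notin \*W$, the inductive hypothesis gives that the two solutions agree on $\mathbf{Pred}(V)$, say with common value $\mathbf{p}$, and then the $\SCM^{\mathrm{std}}_\alpha$-value of $V$ is $\texttt{f}^{\*u}_V(\mathbf{p}) = f_V(\pi_{\mathbf{Pa}(V)}(\mathbf{p}), \pi_{\*U(V)}(\*u))$ by construction of $\Phi$, which is precisely the $\SCM_\alpha$-value. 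The main bookkeeping subtlety is that in standard form the formal parent set of $V$ is all of $\mathbf{Pred}(V)$, even though the selected mechanism ignores the coordinates outside $\mathbf{Pa}(V)$; one must track that this redundancy is harmless, and that the induction goes through uniformly across all interventions at once.

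Finally I would evaluate both counterfactual distributions on a basis. For any basic cylinder $\*y_{\*x}, \dots, \*z_{\*w}$, the induction yields the preimage identity $\Phi^{-1}\big((m^{\SCM^{\mathrm{std}}_{\*X \coloneqq \*x}})^{-1}(\*y) \cap \dots \cap (m^{\SCM^{\mathrm{std}}_{\*W \coloneqq \*w}})^{-1}(\*z)\big) = (m^{\SCM_{\*X \coloneqq \*x}})^{-1}(\*y) \cap \dots \cap (m^{\SCM_{\*W \coloneqq \*w}})^{-1}(\*z)$, whence $p^{\SCM^{\mathrm{std}}}_{\mathrm{cf}}(\*y_{\*x}, \dots, \*z_{\*w}) = P^{\mathrm{std}}(\cdots) = P\big(\Phi^{-1}(\cdots)\big) = p^{\SCM}_{\mathrm{cf}}(\*y_{\*x}, \dots, \*z_{\*w})$. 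Since a Borel measure is determined by its values on a basis (the Fact citing \cite{Bogachev2007}, Lemma~1.9.4), this gives $p^{\SCM}_{\mathrm{cf}} = p^{\SCM^{\mathrm{std}}}_{\mathrm{cf}}$, i.e., $\varpi_3(\SCM) = \varpi_3(\SCM^{\mathrm{std}})$. In the infinite-variable case, the atomlessness required of members of $\mathfrak{M}^{\mathrm{std}}_\prec$ comes for free, since atomlessness is a property of $p_{\mathrm{cf}}$ and the two counterfactual distributions coincide.
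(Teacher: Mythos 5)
Your proof is correct, and it constructs the same object as the paper---the standard-form model whose noise is the law of the response-function profile induced by $\SCM$---but it verifies the construction by a genuinely different argument. The paper never introduces a response map $\Phi$: it defines the noise $P \in \mathfrak{P}(\texttt{F})$ directly on cylinder sets, declaring $P\big(\bigcap_i \pi^{-1}_{(V_i,\*p_i)}(\{v_i\})\big)$ to equal the counterfactual probability in $\SCM$ that each $V_i = v_i$ under the intervention $\mathbf{Pred}(V_i) \coloneqq \*p_i$ (its Eq.~\eqref{eq:standardformdefinition}), and then shows that \emph{any} two models agreeing on all counterfactuals of this ``predecessor-intervention'' form agree on the whole of $p_{\mathrm{cf}}$, by writing an arbitrary counterfactual event as a finite disjoint union of mechanism-profile events over the ancestrally closed set $\*B = \bigcup_i \mathbf{Pred}(V_i)$. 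Your route buys some rigor at little cost: the pushforward $P^{\mathrm{std}} = \Phi_*(P)$ is automatically a well-defined Borel measure (the paper's prescription on cylinders tacitly presupposes that it extends to a measure, which is true precisely because those prescribed values \emph{are} pushforward values), and your pointwise intertwining identity $m^{\SCM_\alpha}(\*u) = m^{\SCM^{\mathrm{std}}_\alpha}(\Phi(\*u))$, proved once by well-founded induction uniformly in $\alpha$, reduces the basis computation to an immediate preimage calculation---whereas the paper's disjoint-union decomposition is, strictly speaking, an identity of preimages under the counterfactual solution map rather than of subsets of $\chi_{A \times \*V}$, a point it glosses. What the paper's route buys in exchange is an intermediate fact of independent value: the predecessor-intervention counterfactuals alone determine the entire counterfactual distribution, which is what makes standard form canonical and underlies the bijection of Corollary~\ref{cor:standardform}; your argument delivers the existence claim of the lemma without isolating that determination principle.
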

\begin{proof}
To give $\SCM^{\mathrm{std}}$ define a measure $P \in \mathfrak{P}(\texttt{F})$ as in Def.~\ref{def:standardform} on a basis of cylinder sets by the counterfactual in $\SCM$
\begin{multline}
P\big(\pi^{-1}_{(V_1, \*p_1)}(\{v_1\}) \cap \dots \cap \pi^{-1}_{(V_n, \*p_n)}(\{v_n\})\big)\\
= p_{\mathrm{cf}}^{\SCM}\big( \pi^{-1}_{(\mathbf{Pred}(V_1) \coloneqq \*p_1, V_1)}(\{v_1\}) \cap \dots \cap \pi^{-1}_{(\mathbf{Pred}(V_n) \coloneqq \*p_n, V_n)}(\{v_n\}) \big). \label{eq:standardformdefinition}
\end{multline}
To show that $\varpi_3(\SCM) = \varpi_3(\SCM^{\mathrm{std}})$ it suffices to show that any two models agreeing on all counterfactuals of the form \eqref{eq:standardformdefinition} must agree on all counterfactuals in $A$.
Suppose $\alpha_i \in A$, $V_i \in \*V$, $v_i \in \chi_{V_i}$ for $i = 1, \dots, n$.
Let $\*{B} = \bigcup_{i} \mathbf{Pred}(V_i)$ and given $\texttt{f} = \{\texttt{f}_V\}_V$,
define $\texttt{f}^{\*W \coloneqq \*w}_V$ to be a constant function mapping to $\pi_V(\*w)$ if $V \in \*W$ and $\texttt{f}^{\*W \coloneqq \*w}_V = \texttt{f}_V$ otherwise.
Write $\texttt{f} \models V = v$ if $\pi_V(\*v) = v$ for that $\*v \in \chi_{\*V}$ such that $\texttt{f}_{V}\big(\pi_{\mathbf{Pred}(V)}(\*v)\big) = \pi_{V}(\*v)$ for all $V$.
Finally, 
note that 
\begin{align*}
 \bigcap_{i=1}^n \pi^{-1}_{(\alpha_i, V_i)}(\{v_i\})  = \bigsqcup_{\substack{\{\texttt{\textbf{f}}_{V}\}_{V \in \*B} \in \texttt{\textbf{F}}(\*B) \\\{\texttt{\textbf{f}}^{\alpha_i}_{V}\}_{V \in \*B} \models V_i = v_i \\ \text{for each } i }} \{\texttt{f}_V\}_{V \in \*B}
\end{align*}
where each set in the finite disjoint union is of the form \eqref{eq:probabilityofmechanisms}.
Thus the measure of the left-hand side can be written as a sum of measures of such sets, which use only counterfactuals of the form \eqref{eq:standardformdefinition},
showing agreement of the measures (by Fact~1).
\end{proof}
\begin{corollary}\label{cor:standardform}
$\mathfrak{S}_3^\prec$ bijects with the set of atomless measures in $\mathfrak{P}(\texttt{F})$, which we denote $\mathfrak{S}^\prec_{\mathrm{std}}$.
We write the map as $\varpi^\prec_{\mathrm{std}} : \mathfrak{S}_3^\prec \to \mathfrak{S}^\prec_{\mathrm{std}}$.
\qed
\end{corollary}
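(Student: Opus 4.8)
The corollary is essentially a repackaging of the preceding Lemma, and the plan is to exhibit the inverse of $\varpi^\prec_{\mathrm{std}}$ explicitly and read off bijectivity. I would introduce the candidate inverse $\Psi$ sending each measure $P \in \mathfrak{P}(\texttt{F})$ (atomless, in the infinite case) to $\varpi_3(\SCM^P)$, where $\SCM^P \in \mathfrak{M}_\prec^{\mathrm{std}}$ is the standard-form model of Definition~\ref{def:standardform} whose exogenous noise is $P$. First I would check that $\Psi$ is well-defined with codomain $\mathfrak{S}_3^\prec$: each mechanism $f_V(\mathbf{p}, \texttt{f}) = \texttt{f}_V(\mathbf{p})$ is a coordinate projection of $\texttt{f} \in \texttt{F}$, hence measurable, and since $\rightarrow = \prec$ is $\omega$-like we have $\SCM^P \in \mathfrak{M}_\prec$ and thus $\varpi_3(\SCM^P) \in \mathfrak{S}_3^\prec$.

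The structural heart of the argument is that in standard form the counterfactual solution map $m = m^{\SCM^P_A} : \texttt{F} \to \chi_{A \times \*V}$ (whose pushforward of $P$ is $p^{\SCM^P}_{\mathrm{cf}} = \varpi_3(\SCM^P)$, via the Remark on Definition~6) is injective: under the mechanism-revealing intervention $\mathbf{Pred}(V) \coloneqq \mathbf{p}$, the value of $V$ in $\SCM^P$ is exactly $f_V(\mathbf{p}, \texttt{f}) = \texttt{f}_V(\mathbf{p})$, so the entire collection $\texttt{f}$ is recoverable coordinate-by-coordinate from the counterfactual valuation. Concretely, evaluating $p^{\SCM^P}_{\mathrm{cf}}$ on the basic cylinder $\bigcap_i \pi^{-1}_{(\mathbf{Pred}(V_i) \coloneqq \mathbf{p}_i, V_i)}(\{v_i\})$ returns precisely $P\big(\bigcap_i \pi^{-1}_{(V_i, \mathbf{p}_i)}(\{v_i\})\big)$, which is formula \eqref{eq:standardformdefinition}. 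Since a Borel measure is determined by its values on a basis (Fact~1), this recovers $P$ from $\varpi_3(\SCM^P)$; hence \eqref{eq:standardformdefinition} defines a genuine left inverse to $\Psi$, this inverse is by definition $\varpi^\prec_{\mathrm{std}}$, and $\Psi$ is injective. Surjectivity of $\Psi$ onto $\mathfrak{S}_3^\prec$ is then immediate from the preceding Lemma: any $\mu_3 = \varpi_3(\SCM)$ with $\SCM \in \mathfrak{M}_\prec$ equals $\varpi_3(\SCM^{\mathrm{std}})$ for a standard-form model, whose noise $P$ satisfies $\Psi(P) = \mu_3$.

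It remains to align the atomless classes in the infinite case. Because $\texttt{F}$ is a countable product of two-point spaces, it is a compact Polish space, and $\chi_{A \times \*V}$ is likewise Polish ($A \times \*V$ being countable); an injective measurable map between standard Borel spaces is a Borel isomorphism onto its (Borel) image, so $p^{\SCM^P}_{\mathrm{cf}}(\{t\}) = P(m^{-1}(\{t\}))$ with $m^{-1}(\{t\})$ empty or a singleton. Thus $p^{\SCM^P}_{\mathrm{cf}}$ is atomless iff $P$ is, so $\Psi$ restricts to a bijection between the atomless measures in $\mathfrak{P}(\texttt{F})$ and the atomless elements of $\mathfrak{S}_3^\prec$, which (by the class definitions) are all of $\mathfrak{S}_3^\prec$ when $\*V$ is infinite; in the finite case $\texttt{F}$ is finite and the atomlessness restriction is dropped on both sides, giving a bijection with all of $\mathfrak{P}(\texttt{F})$. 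The hard part is exactly this atom-preservation step: verifying that point masses correspond in both directions, which rests on injectivity of $m$ together with the standard-Borel isomorphism theorem. Everything else is bookkeeping on top of the preceding Lemma and the cylinder computation \eqref{eq:standardformdefinition}.
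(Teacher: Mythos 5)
Your proposal is correct and takes essentially the same approach the paper intends: the corollary is marked there as immediate from the preceding Lemma, and your writeup is exactly the natural unpacking of that claim --- the map $\varpi^\prec_{\mathrm{std}}$ defined on cylinders by \eqref{eq:standardformdefinition}, its inverse $P \mapsto \varpi_3(\SCM^P)$ via the standard-form model, injectivity through the mechanism-revealing interventions $\mathbf{Pred}(V) \coloneqq \mathbf{p}$ (the same computation driving the Lemma's proof), and surjectivity from the Lemma itself. The atomlessness alignment is the one detail the paper leaves implicit, and your treatment of it is sound, though the standard-Borel isomorphism theorem is more than needed: injectivity of $m$ plus the fact that singletons are Borel in the Polish space $\chi_{A \times \*V}$ already give $p^{\SCM^P}_{\mathrm{cf}}(\{t\}) = P\big(m^{-1}(\{t\})\big)$ with $m^{-1}(\{t\})$ empty or a singleton, and the converse direction needs only the inclusion $\{\texttt{f}\} \subseteq m^{-1}(\{m(\texttt{f})\})$.
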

Where the order $\prec$ is clear, the above result permits us to abuse notation, using e.g. $\mu$ to denote either an element of $\mathfrak{S}_3^\prec$ or its associated point $\varpi^\prec_{\mathrm{std}}(\mu)$ in $\mathfrak{S}^\prec_{\mathrm{std}}$.
We will henceforth indulge in such abuse.

\begin{proof}[Proof of Fact~\ref{prop:alternative}]
The follows easily from Lem.~\ref{lem:acausals1canonical} below, adapted from \citet[Thm.~1]{suppes:zan81}.
This shows that every atomless distribution is generated by some SCM; furthermore, it can  chosen so as to exhibit no causal effects whatsoever.
\end{proof}
\begin{definition}
Say that $\nu \in \mathfrak{P}\big(\texttt{\textbf{F}}(\*V)\big)$ is \emph{acausal} if $\nu(\pi^{-1}_{(V, \*p)}(\{v_1\}) \cap \pi^{-1}_{(V, {\*p}')}(\{v_2\})\big) = 0$
for every $(V, \*p), (V, \*p') \in \xi$ and $v_1 \neq v_2 \in \chi_V$.

\end{definition}
\begin{lemma}\label{lem:acausals1canonical}
Let $\mu \in \mathfrak{P}(\chi_{\*V})$ be atomless. Then there is a $\SCM \in \mathfrak{M}^{\mathrm{std}}_\prec$ (see Def.~\ref{def:standardform}) with an acausal noise distribution such that $\mu = (\varpi_1 \circ \varpi_2 \circ \varpi_3)(\SCM)$. 
\end{lemma}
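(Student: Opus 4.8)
The plan is to exploit the fact that acausality forces every selected mechanism $\texttt{f}_V$ to be (almost surely) a constant function of its parents: an acausal noise measure on $\texttt{F}$ must put all of its mass on collections $\{\texttt{f}_V\}_V$ in which each $\texttt{f}_V$ ignores $\*p$ entirely. So the natural construction is to concentrate the noise on the ``constant-mechanism'' copy of $\chi_{\*V}$ sitting inside $\texttt{F}$, transporting $\mu$ directly onto it. Concretely, fix any $\omega$-like order $\prec$ on $\*V$ and define an embedding $\iota : \chi_{\*V} \to \texttt{F}$ sending $\*v = (v_V)_{V \in \*V}$ to the point of $\texttt{F}$ whose $(V, \*p)$-coordinate equals $v_V$ for every $(V, \*p) \in \xi$ (equivalently, to the collection $\{\texttt{f}_V\}_V$ with each $\texttt{f}_V \equiv v_V$). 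I would then set $\nu = \iota_* \mu$ and let $\SCM$ be the standard-form model over $\prec$ (Def.~\ref{def:standardform}) whose exogenous noise is $\nu$.

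It then remains to discharge three checks. First, that \emph{$\nu$ is a well-defined atomless Borel measure}, so that $\SCM \in \mathfrak{M}^{\mathrm{std}}_\prec$ via Cor.~\ref{cor:standardform}: each coordinate of $\iota$ is a projection $\*v \mapsto v_V$, so $\iota$ is continuous and injective, hence a homeomorphism onto its compact (thus closed) image; injectivity makes $\iota^{-1}(\{\texttt{f}\})$ either empty or a singleton, whence $\nu(\{\texttt{f}\}) = \mu(\iota^{-1}(\{\texttt{f}\})) = 0$ by atomlessness of $\mu$. Second, that \emph{$\nu$ is acausal}: for $v_1 \neq v_2$ the set $\pi^{-1}_{(V, \*p)}(\{v_1\}) \cap \pi^{-1}_{(V, \*p')}(\{v_2\})$ is disjoint from $\iota(\chi_{\*V})$, since a constant mechanism never returns different values on $\*p$ and $\*p'$, and as $\nu$ is carried by $\iota(\chi_{\*V})$ this set is $\nu$-null. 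Third, that \emph{the observational law is $\mu$}: in standard form $m^{\SCM}$ satisfies $\pi_V(m^{\SCM}(\texttt{f})) = \texttt{f}_V(\pi_{\mathbf{Pred}(V)}(m^{\SCM}(\texttt{f})))$, which on the image of $\iota$ collapses to $\pi_V(m^{\SCM}(\iota(\*v))) = v_V$, i.e.\ $m^{\SCM} \circ \iota = \mathrm{id}_{\chi_{\*V}}$; pushing forward gives $(\varpi_1 \circ \varpi_2 \circ \varpi_3)(\SCM) = p^{\SCM} = (m^{\SCM})_* \nu = (m^{\SCM} \circ \iota)_* \mu = \mu$.

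I expect no genuinely hard step; the construction is essentially the canonical Suppes--Zanotti realization, with the single selector variable $U$ reading off the endogenous outcome directly. The only point demanding care is recognizing that one device---concentrating $\nu$ on constant mechanisms through $\iota$---simultaneously secures all three properties, and in particular that atomlessness of $\mu$ transfers to $\nu$ (exactly where injectivity of $\iota$ is used). It is also worth noting that the finite case does not arise: $\chi_{\*V}$ admits an atomless measure only when $\*V$ is infinite, so the hypothesis on $\mu$ silently places us in the regime where standard-form models are required to be atomless, consistent with Cor.~\ref{cor:standardform}.
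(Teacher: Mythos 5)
Your proof is correct and is essentially the paper's construction: the pushforward $\nu = \iota_*\mu$ along the constant-mechanism embedding is exactly the measure the paper defines on cylinder sets by $\nu\big(\pi^{-1}_{(V_1,\mathbf{p}_1)}(\{v_1\})\cap\dots\cap\pi^{-1}_{(V_n,\mathbf{p}_n)}(\{v_n\})\big) = \mu\big(\pi^{-1}_{V_1}(\{v_1\})\cap\dots\cap\pi^{-1}_{V_n}(\{v_n\})\big)$. You simply spell out the verifications (atomlessness via injectivity of $\iota$, acausality via concentration on constant mechanisms, and $m^{\SCM}\circ\iota = \mathrm{id}$ for the observational law) that the paper compresses into ``this is clearly acausal and atomless.''
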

\begin{proof}
Consider $\nu \in \mathfrak{P}\big(\texttt{\textbf{F}}(\*V)\big) = \mathfrak{P}\big(\CartProd_{(V, \*p)} \chi_V \big)$ determined on a basis as follows:
$\nu\big( \pi^{-1}_{(V_1, \mathbf{p}_1)}(\{v_1\})\cap \dots \cap \pi^{-1}_{(V_n, \*p_n)}(\{v_n\}) \big) = \mu\big( \pi^{-1}_{V_1}(\{v_1\}) \cap \dots \cap \pi^{-1}_{V_n}(\{v_n\}) \big)$.
This is clearly acausal and atomless.
\end{proof}

\subsection{Proofs from \S{}{3.2}}

\begin{proof}[Proof of Prop. \ref{prop:collapse1} (Collapse set $\mathfrak{C}_1$ is empty)]
Let $\mu \in \mathfrak{S}_1$ and $\nu \in \mathfrak{S}^\prec_{\mathrm{std}}$ with $(\varpi_1 \circ \varpi_2 \circ \varpi_{\mathrm{std}}^{-1})(\nu) = \mu$. 
By Lemma \ref{lem:acausals1canonical} we may assume $\nu$ is acausal.
Let $X$ be the first, and $Y$ the second variable with respect to $\prec$.
Note there are $x^*$, $y^*$ such that $\mu(\pi^{-1}_X(\{x^*\}) \cap \pi^{-1}_Y(\{y^*\})) > 0$;
let $x^\dagger \neq x^*$, $y^\dagger \neq y^*$.
Consider $\nu'$ defined as follows where $\digamma_3$ stands for any set of the form
$\pi^{-1}_{(V_1, {\*p}_1)}(\{v_1\}) \cap \dots \cap \pi^{-1}_{(V_n, {\*p}_n)}(\{v_n\}) \subset \texttt{\textbf{F}}(\*V)$, for $V_i \in \*V$, $\*p_i \in \chi_{\mathbf{P}(V_i)}$, $v_i \in \chi_{V_i}$,
and $\digamma_1$ is the corresponding $\pi^{-1}_{V_1}(\{v_1\}) \cap \dots \cap \pi^{-1}_{V_n}(\{v_n\}) \subset \chi_{\*V}$.
\begin{multline*}
 \nu'\big( \pi^{-1}_{(X, ())}(\{x\}) \cap \pi^{-1}_{(Y, (x^*))}(\{y_*\}) \cap \pi^{-1}_{(Y, (x^{\dagger}))}(\{y_\dagger\}) \cap \digamma_3 \big) =\\
 \begin{cases}
 \mu\big(\pi^{-1}_X(\{x^*\}) \cap \pi^{-1}_Y(\{y^*\}) \cap \digamma_1 \big), & x = x^*, y_* = y^* \neq y_\dagger\\
 0, & x = x^*, y_* = y^{\dagger} \neq y_\dagger\\
 0, & x = x^*, y_* = y_{\dagger} = y^*\\
 \mu\big(\pi^{-1}_X(\{x^*\}) \cap \pi^{-1}_Y(\{y^\dagger\}) \cap \digamma_1 \big), & x = x^*, y_* = y_{\dagger} = y^\dagger\\
 \mu\big(\pi^{-1}_X(\{x^\dagger\}) \cap \pi^{-1}_Y(\{y\}) \cap \digamma_1 \big), & x = x^\dagger
 \end{cases}
\end{multline*}
We claim that $\mu = \mu'$ where $\mu' = (\varpi_1 \circ \varpi_2 )(\nu')$; it suffices to show agreement on sets of the form $\pi^{-1}_X(\{x\}) \cap \pi^{-1}_Y(\{y\}) \cap \digamma_1$. If $x = x^\dagger$ then the last case above occurs; if $x = x^*$ and $y = y^\dagger$ then we are in the fourth case; if $x = x^*$ and $y = y^*$ then exclusively the first case applies. In all cases the measures agree.
Let $(\nu_\alpha)_\alpha = \varpi_2(\nu)$ and $(\nu'_\alpha)_\alpha = \varpi_2(\nu')$
be the Level 2 projections of $\nu$, $\nu'$ respectively.
Note that $\nu_{X \coloneqq x^\dagger}(y^\dagger) < \nu'_{X \coloneqq x^\dagger}(y^\dagger)$.
This shows that the standard-form measures $\nu$, $\nu'$ project down to different points in $\mathfrak{S}_2$ (in particular differing on the $Y$-marginal at the index corresponding to the intervention $X \coloneqq x^\dagger$) while projecting to the same point in $\mathfrak{S}_1$.
Thus $\mu \notin \mathfrak{C}_1$ and since $\mu$ was arbitrary, $\mathfrak{C}_1 = \varnothing$.
\end{proof}

%


\begin{example}[Collapse set $\mathfrak{C}_2$ is nonempty]
We present a $\mu \in \mathfrak{S}_{\mathrm{std}}^{\prec}$ for which $\varpi_2(\mu) \in \mathfrak{C}_2$.
Let ${\*S}_n \subset \*V$ be the ancestrally closed (\S\ref{ss:app:standardform}) set of the $n$ least variables with respect to $\prec$ and $X$ be the first variable with respect to $\prec$; thus, e.g., ${\*S}_1 = \{X\}$.
Where $\texttt{\textbf{f}} = \{\texttt{f}_V\}_{V \in {\*S}_n} \in \texttt{F}(\textbf{S}_n)$, define $\mu(\texttt{\textbf{f}}) = 0$ if there is any $V \in {\*S}_n \setminus \{X\}$, $\*p \neq (0, \dots, 0) \in \chi_{\mathbf{Pred}(V)}$ such that $\texttt{f}_V(\*p) = 0$, and otherwise define $\mu(\texttt{\textbf{f}}) = 1/2^n$.
Note that this example is \emph{monotonic} in the sense of \cite{Angrist,Pearl1999}.

We claim $\mu' = \mu$ for any $\mu' \in \mathfrak{S}_{\mathrm{std}}^\prec$ projecting to the same Level 2, i.e., such that $\varpi_2(\mu') = \varpi_2(\mu)$; note that it suffices to consider only candidate counterexamples with order $\prec$ since $\varpi_2(\mu) \notin \mathfrak{S}_2^{\prec'}$ for any $\left.\prec'\right. \neq \left.\prec\right.$.
It suffices to show that $\mu(\texttt{f}) = \mu'(\texttt{f})$ for any $n$ and $\texttt{\textbf{f}} = \{\texttt{f}_V\}_{V \in {\*S}_n}$; recall that in the measures, $\texttt{f}$ denotes a set of the form \eqref{eq:probabilityofmechanisms}.
Let $(\mu_\alpha)_\alpha = \varpi_2(\mu) \in \mathfrak{S}_2^\prec$ and $(\mu'_\alpha)_\alpha = \varpi_2(\mu')$, with $(\mu_\alpha)_\alpha = (\mu'_\alpha)_\alpha$.
Since $\mu'_{\mathbf{Pred}(V) \coloneqq \*p}(\pi^{-1}_V(\{1\})) = 1$ for any $V \in {\*S}_n \setminus \{X\}$, $\*p \neq (0, \dots, 0)$, probability bounds show $\mu'(\texttt{f})$ vanishes unless $\texttt{f}_V(\*p) = 1$ for each such $\*p$, in which case
\begin{equation}\label{eq:toreducetol2}
\mu'(\texttt{f})=
\mu'\Big(
\bigcap_{i=1}^n \pi^{-1}_{(V_i, \{V_1, \dots, V_{i-1}\} \coloneqq (0, \dots, 0))}(\{v_i\})\Big)
\end{equation}
for some $v_i \in \chi_{V_i}$,
where we have labeled the elements of ${\*S}_n$ as $V_1, \dots, V_n$,
with $V_1 \prec \dots \prec V_n$.
We claim this is reducible---again using probabilistic reasoning alone---to a linear combination of quantities fixed by $(\mu'_\alpha)_\alpha$, the Level 2 projection of $\mu'$, which is the same as the projection $(\mu_\alpha)_\alpha$ of $\mu$.
This can be seen by an induction on the number $m = \left|M\right|$ where $M = \{i : v_i = 1 \}$: note
\eqref{eq:toreducetol2} becomes
\begin{multline*}
\mu'\Big(
\bigcap_{\substack{i \notin M}} \pi^{-1}_{(V_i, \{V_1, \dots, V_{i-1}\} \coloneqq (0, \dots, 0))}(\{0\})\Big)\\
- \sum_{M' \subsetneq M}\mu'\Big(
\bigcap_{\substack{i \notin M'}} \pi^{-1}_{(V_i, \{V_1, \dots, V_{i-1}\} \coloneqq (0, \dots, 0))}(\{0\})
\cap
\bigcap_{\substack{i \in M'}} \pi^{-1}_{(V_i, \{V_1, \dots, V_{i-1}\} \coloneqq (0, \dots, 0))}(\{1\})
\Big)
\end{multline*}
and the inductive hypothesis implies each summand  can be written in the sought form
while the first term becomes
$\mu'\big(\bigcap_{i\notin M} \pi^{-1}_{(V_i, ())}(\{0\})\big) = \mu'_{()}\big(\bigcap_{i \notin M}\pi^{-1}_{V_1}(\{0\})\big) = \mu_{()}\big(\bigcap_{i \notin M}\pi^{-1}_{V_1}(\{0\})\big)$.
Here $()$ abbreviates the empty intervention $\varnothing \coloneqq ()$.
Thus any Level 3 quantity reduces to Level 2, on which the two measures agree by hypothesis.
\label{example:collapse}
\end{example}

\subsection{Remarks on \S{}{3.3}}\label{ss:app:3.3}

\begin{lemma}
\label{prop:p2:characterization:binary}
  Let $(\mu_{\alpha})_{\alpha} \in \CartProd_{\alpha \in A^{X \to Y}_{2}} \mathfrak{P}(\chi_{X, Y})$.
  Then $(\mu_\alpha)_{\alpha} \in \mathfrak{S}^{X \to Y}_{2}$ iff
    \begin{align}\label{eq:snsr}
    \mu_{X \coloneqq x}( x ) = 1
    \end{align}
    for every $x \in \chi_X$
    and \begin{align}\label{eq:snscm}
    \mu_{X \coloneqq x}(y)
    \ge \mu_{()}(x, y)
    \end{align}
    for every $x \in \chi_{X}$, $y \in \chi_Y$. Here $x, y$ abbreviates the basic set $\pi_X^{-1}(\{x\}) \cap \pi_Y^{-1}(\{y\})$.
\end{lemma}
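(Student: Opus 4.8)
The plan is to prove both implications by mediating between the family $(\mu_\alpha)_{\alpha}$ and a \emph{potential-outcome} representation over $\{X,Y\}$, where $Y_x$ denotes the value $Y$ takes under the intervention $X \coloneqq x$, so that a realizing model induces a joint law on the triple $(X, Y_0, Y_1)$. Writing $a_{xy} = \mu_{()}(x,y)$ for the observational cell probabilities and $b_y = \mu_{X \coloneqq 0}(y)$, $c_y = \mu_{X \coloneqq 1}(y)$ for the two interventional $Y$-marginals, the governing identities are $\mu_{()}(x,y) = P(X = x,\ Y_x = y)$ and $\mu_{X \coloneqq x}(y) = P(Y_x = y)$, which hold for any SCM realizing the family by the definitions of the observational and interventional distributions. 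I would use these throughout.

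For necessity, condition \eqref{eq:snsr} is immediate from the definition of intervention: in $\SCM_{X \coloneqq x}$ the mechanism for $X$ is the constant map to $x$, so $m^{\SCM_{X \coloneqq x}}(\*u)$ assigns $X = x$ for every $\*u$, whence $\mu_{X \coloneqq x}(x) = 1$ after marginalizing to $\{X,Y\}$. For \eqref{eq:snscm} I would prove a consistency claim: for every unit $\*u$ with $\pi_X\big(m^{\SCM}(\*u)\big) = x$ one has $m^{\SCM_{X \coloneqq x}}(\*u) = m^{\SCM}(\*u)$. This follows by well-founded induction along $\rightarrow$, since every $V \neq X$ keeps its mechanism $f_V$ and receives identical parental inputs in the two models (for a parent $X$ because $X = x$ in both, and for the other parents by the inductive hypothesis). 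Hence the observational event $\{X = x,\ Y = y\}$ is contained in the event $\{Y = y\}$ under $X \coloneqq x$, and taking $P$-measure yields $\mu_{()}(x,y) \le \mu_{X \coloneqq x}(y)$.

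For sufficiency I would run this construction in reverse. Assuming \eqref{eq:snsr} and \eqref{eq:snscm}, I build a response-function model on $\{X,Y\}$ with $X$ a root by defining a joint law on $(X, Y_0, Y_1)$ that forces $P(X = x,\ Y_x = y) = a_{xy}$ and sets the cross terms $P(X = 1,\ Y_0 = y) = b_y - a_{0y}$ and $P(X = 0,\ Y_1 = y) = c_y - a_{1y}$, both nonnegative precisely by \eqref{eq:snscm}. Within each stratum $X = x$ the prescribed $Y_0$- and $Y_1$-marginals have equal total mass (matching totals uses $\sum_{x,y} a_{xy} = 1$ together with $b_0 + b_1 = c_0 + c_1 = 1$, which is \eqref{eq:snsr}), so a coupling exists, e.g.\ the independent one, giving a valid $8$-atom distribution. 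Reading this off in standard form, with $U$ selecting a constant mechanism for $X$ and a response function $\chi_X \to \chi_Y$ for $Y$, yields $\SCM \in \mathfrak{M}_X$ whose observational and $X$-interventional $\{X,Y\}$-marginals are exactly $a$, $b$, $c$. I then embed this into a model over all of $\*V$ by letting every remaining variable be an independent fair coin with no influence on $X$ or $Y$; since the interventions in $A_X$ touch only $X$ and the marginalization is onto $\{X,Y\}$, the image under $\varpi^{X \to Y}_{2}$ is unchanged, while in the infinite case the added independent coins render $p^{\SCM}_{\mathrm{cf}}$ atomless as required, placing $(\mu_\alpha)_\alpha$ in $\mathfrak{S}^{X \to Y}_{2}$.

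The main obstacle I anticipate is the realizability step in sufficiency: verifying that \eqref{eq:snsr}--\eqref{eq:snscm} are exactly what guarantees a \emph{nonnegative} joint law on $(X, Y_0, Y_1)$ with the prescribed observational and interventional margins, i.e.\ that the stratum marginals always have matching totals and the cross terms stay nonnegative. The remaining points---measurability of the mechanisms, the harmless padding by trivial variables, and (in the infinite case) the atomlessness of the padded model---I expect to be routine.
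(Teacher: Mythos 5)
Your proof is correct in both directions, and it follows the same overall architecture as the paper's: reduce to the two-variable response-function model over $\{X,Y\}$ with $X$ as root, realize the family there, then pad out to the full variable set with an atomlessness check in the infinite case. The substantive difference is in the crux of sufficiency. The paper does not construct anything: it observes that the two-variable model has $|\texttt{F}(\{X,Y\})| = 8$ mechanism profiles and simply cites a characterization result of \citet{tian:etal06} asserting that \eqref{eq:snsr} and \eqref{eq:snscm} exactly characterize the families induced by distributions over these profiles. You instead reprove that cited result from scratch: the joint law on $(X, Y_0, Y_1)$ with cross terms $b_y - a_{0y}$ and $c_y - a_{1y}$, nonnegativity from \eqref{eq:snscm}, mass balance within each $X$-stratum, and an arbitrary (e.g.\ independent) coupling inside each stratum. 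This buys a self-contained argument that makes visible where each hypothesis enters, at the cost of redoing known work; the paper's citation is terser but defers the key step. Your necessity argument (well-founded induction showing $m^{\SCM_{X \coloneqq x}}(\*u) = m^{\SCM}(\*u)$ on the event $X = x$) is also a full proof of what the paper dismisses as ``easy to see.'' One bookkeeping correction: you attribute the identity $b_0 + b_1 = c_0 + c_1 = 1$ to \eqref{eq:snsr}, but that holds for the $Y$-marginal of any probability measure on $\chi_{\{X,Y\}}$. The genuine role of \eqref{eq:snsr} in sufficiency is different: your constructed model necessarily puts all interventional mass on $\{X = x\}$, so matching the $Y$-marginals $b, c$ yields the full measures $\mu_{X \coloneqq x}$ on $\chi_{\{X,Y\}}$ only because \eqref{eq:snsr} guarantees the targets are likewise concentrated on $\{X = x\}$. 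The hypothesis is used correctly in your construction, just not at the step where you invoke it.
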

\begin{proof}
It is easy to see that \eqref{eq:snsr}, \eqref{eq:snscm} hold for any $(\mu_\alpha)_\alpha$.
For the converse,
consider the two-variable model over endogenous $\*Z = \{X, Y\}$ with $X \prec Y$; note that $|\texttt{\textbf{F}}(\*Z)| = 8$.
A result of
\citet{tian:etal06} gives that this model is characterized exactly by \eqref{eq:snsr}, \eqref{eq:snscm} so for any such $(\mu_\alpha)_\alpha$ there is a distribution on $\texttt{\textbf{F}}(\*Z)$ such that this model induces $(\mu_\alpha)_\alpha$.
It is straightforward to extend this distribution to an atomless measure on $\texttt{\textbf{F}}(\*V)$.
\end{proof}

\section{Proofs from \S\ref{sec:weaktopology}} \label{app:empirical}

\begin{proof}[Proof of Prop.~\ref{prop:causalprojectioncontinuous}]
The continuity of any of the maps amounts to the continuity of projections in product spaces and marginalizations in weak convergence spaces. The latter follows easily from results in \S{}3.1.3 of \cite{Genin2018} or \cite{Billingsley}.

As for the openness of any $\varpi_2^{X \to Y}$, note we can write any $S \subset \mathfrak{S}_2$ as $S = \bigcup_{\prec} S \cap \mathfrak{S}_2^\prec$ where $\prec$ in the union ranges over all total orders of $\*V$. It thus suffices to show that for any $\prec$ the image of any open $S \subset \mathfrak{S}_2^\prec$ is open.
Define the map $\varpi_2^{\prec}: \mathfrak{S}_{\mathrm{std}}^\prec \to \mathfrak{S}_2^\prec$ and the map $\varpi_2^{X \to Y, \prec}: \mathfrak{S}_2^\prec \to \mathfrak{S}_2^{X \to Y}$ as restrictions of $\varpi_2$ and $\varpi_2^{X \to Y}$ respectively. Evidently, $\varpi_2^{\prec}$ is continuous so it suffices to show that $\varpi_2^{X \to Y, \prec} \circ \varpi_2^{\prec}$ is open.

For any $n \ge 1$ let $\*S_{n, \prec}$ be the initial segment of the first $n$ variables in $\*V$ when ordered according to $\prec$, as in Ex.~\ref{example:collapse}, and define sets $\mathfrak{S}_{\mathrm{std}}^{n, \prec}$, $\mathfrak{S}_2^{n, \prec}$ analogously to $\mathfrak{S}_{\mathrm{std}}^\prec$, $\mathfrak{S}_2^{\prec}$ but over the set of variables $\*{S}_{n, \prec}$.
Define maps $\varpi_{2}^{n, \prec} : \mathfrak{S}_{\mathrm{std}}^{n, \prec} \to \mathfrak{S}_2^{n, \prec}$ and $\varpi_2^{X \to Y, n, \prec} :\mathfrak{S}_2^{n, \prec} \to \mathfrak{S}_2^{X \to Y}$, where $X, Y \in \*{S}_{n, \prec}$, in a similar fashion.
Define also a map $\varpi_{\mathrm{std}}^{n, \prec}: \mathfrak{S}_{\mathrm{std}}^{\prec} \to \mathfrak{S}_{\mathrm{std}}^{n, \prec}$ as a marginalization taking a distribution over mechanisms (recall \S{}\ref{ss:app:standardform}) determining all of $\*V$ to a distribution over deterministic mechanisms for $\*S_{n, \prec}$.
Let $S \subset \mathfrak{S}_{\mathrm{std}}^\prec$ be an arbitrary basic open set and let $n$ be least such that $\*{S}_{n, \prec}$ contains $X$, $Y$, and every variable whose structural equation appears as a cylinder in the finite intersection defining $S$.
Then note that $\big(\varpi_2^{X \to Y, \prec} \circ \varpi_2^{\prec}\big)(S) = \big(\varpi_2^{X \to Y, n, \prec} \circ \varpi_2^{n, \prec} \circ \varpi_{\mathrm{std}}^{n, \prec}\big)(S)$ and $\varpi_{\mathrm{std}}^{n, \prec}(S)$ is certainly open, so it suffices to show that $\varpi_2^{X \to Y, n, \prec} \circ \varpi_2^{n, \prec} :  \mathfrak{S}_{\mathrm{std}}^{n, \prec} \to \mathfrak{S}_2^{X \to Y}$ is open for any $n$.

To see this, note that $\mathfrak{S}_2^{X \to Y}$ and $\mathfrak{S}_{\mathrm{std}}^{n, \prec}$ in the weak topology are homeomorphic to (subsets of products of) probability simplices in appropriate Euclidean spaces $\mathbb{R}^m$ with the standard topology, as they are distributions over a discrete space.
The latter in fact is exactly a probability simplex while $\mathfrak{S}_2^{X \to Y}$ is polyhedral by Lemma~\ref{prop:p2:characterization:binary}, and $\varpi_2^{X \to Y, n, \prec} \circ \varpi_2^{n, \prec}$ is the restriction of a surjective linear mapping under the aforementioned homeomorphism.
This must be open by \citet[Corollary~8]{MIDOLO20091186}.
\end{proof}

\begin{proof}[Proof of Thm. \ref{thm:l2learning}]
We show how Theorem~3.2.1 of \cite{Genin2018} can be applied to derive the result. Specifically, let $\Omega = \CartProd_\alpha \chi_{\*V}$. Let $\mathcal{I}$ be the usual clopen basis, and let $W$ be the set of Borel measures $\mu \in \mathfrak{P}(\Omega)$ that factor as a product $\mu = \times_\alpha \mu_\alpha$ where each $\mu_\alpha \in \mathfrak{S}_1$ and $(\mu_\alpha)_\alpha \in \mathfrak{S}_2$. This choice of $W$ corresponds exactly to our notion of experimental verifiability.

It remains to check that a set is open in $W$ iff the associated set is open in $\mathfrak{S}_2$ (homeomorphism).
It suffices to show their convergence notions agree. Suppose $(\nu_n)_n$ is a sequence, each $\nu_n \in W$, converging to $\nu = \times_\alpha \mu_\alpha \in W$. We have for each $n$ that $\nu_n = \times_\alpha \mu_{n, \alpha}$ such that $(\mu_{n,\alpha})_\alpha \in \mathfrak{S}_2$. By Theorem~3.1.4 in \cite{Genin2018}, which is straightforwardly generalized to the infinite product, for each fixed $\alpha$ we have $(\mu_{n, \alpha})_n \Rightarrow \mu_\alpha$. This is exactly pointwise convergence in the product space $\mathfrak{S}_2$, and the same argument in reverse works for the converse.
\end{proof}

\section{Proofs from \S\ref{sec:main}} \label{app:main}

We will use the following result to categorize sets in the weak topology.
\begin{lemma}\label{prop:probiscontinuous}
  If $X \subset \vartheta$ is a basic clopen,
  the map $p_X : (\mathfrak{S}, \TopoWeak) \to ([0, 1], \tau)$ sending $\mu \mapsto \mu(X)$ is continuous and open (in its image), where $\tau$ is as usual on $[0, 1] \subset \mathbb{R}$.
\end{lemma}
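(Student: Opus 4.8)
The plan is to prove continuity and openness separately, in both cases exploiting the fact that the basic clopen $X$ (and, for openness, the finitely many cylinders defining a given basic weak-open set) depend only on finitely many coordinates. Throughout I work with the subbasis \eqref{subbasis-weak} rather than the weak-convergence definition.

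For continuity it suffices to pull back the two families of subbasic opens of $[0,1]$, namely $(r,1]$ and $[0,s)$. The first is immediate: $p_X^{-1}\big((r,1]\big) = \{\mu : \mu(X) > r\} = \bigcup_{q \in \mathbb{Q},\, q > r} \{\mu : \mu(X) > q\}$ is a union of subbasic sets \eqref{subbasis-weak}. For the second I would use that, since every variable is dichotomous and $X = \pi^{-1}_{\*Y}(\{\*y\})$ for a finite $\*Y$, the complement $\vartheta \setminus X$ is a \emph{finite} disjoint union of basic clopens $C_1, \dots, C_k$. Then $\mu(X) < s$ iff $\sum_{j} \mu(C_j) > 1-s$, and I would write $\{\mu : \sum_j \mu(C_j) > 1-s\}$ as the union, over rational tuples $(q_j)_j$ with $\sum_j q_j \ge 1-s$, of the finite intersections $\bigcap_{j}\{\mu : \mu(C_j) > q_j\}$; each such intersection is weak-open by \eqref{subbasis-weak}, so $p_X^{-1}\big([0,s)\big)$ is open.

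For openness it is enough to show that $p_X$ carries each basic weak-open set to a set open in the image $\mathrm{im}(p_X) \subseteq [0,1]$, since taking images commutes with unions. Fix a basic open $\mathcal{O} = \bigcap_{i=1}^{n}\{\mu : \mu(X_i) > r_i\}$. All of $X, X_1, \dots, X_n$ are cylinders depending on a common finite coordinate set $\*Y$, hence each is a union of atoms of the induced finite clopen partition $\{B_{\*y}\}_{\*y \in \chi_{\*Y}}$ of $\vartheta$. Writing $\Delta$ for the probability simplex over $\chi_{\*Y}$ and $\rho : \mathfrak{S} \to \Delta$ for the map $\mu \mapsto (\mu(B_{\*y}))_{\*y}$, every constraint $\mu(X_i) > r_i$ and the value $\mu(X)$ are linear in $\rho(\mu)$: we have $p_X = \ell \circ \rho$ with $\ell : \mathrm{aff}(\Delta) \to \mathbb{R}$ the linear form summing the coordinates of atoms contained in $X$, and $\mathcal{O} = \rho^{-1}(P)$ for the relatively open polytope $P = \{w \in \Delta : \ell_i(w) > r_i,\ 1 \le i \le n\}$, where $\ell_i$ sums the coordinates of atoms contained in $X_i$.

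Two finite-dimensional facts then finish the argument. First, $\rho$ is surjective onto $\Delta$: given $w \in \Delta$, choose $c_{\*y} \in B_{\*y}$ and set $\mu = \sum_{\*y} w_{\*y}\,\delta_{c_{\*y}} \in \mathfrak{S}$, so $\rho(\mu) = w$. This yields $\mathrm{im}(p_X) = \ell(\Delta)$ and $p_X(\mathcal{O}) = \ell\big(\rho(\rho^{-1}(P))\big) = \ell(P)$. Second, $\ell$ is the restriction of a surjective linear map and is therefore open onto its image by \citet[Corollary~8]{MIDOLO20091186}, so $\ell(P)$ is open in $\ell(\Delta) = \mathrm{im}(p_X)$, as required. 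The one genuine obstacle is precisely this openness step: because $X$ and the $X_i$ generally overlap in complicated ways, a naive coordinatewise mass-shifting perturbation keeping all the constraints $\mu(X_i) > r_i$ intact is awkward, which is exactly why I pass to the finite partition $\{B_{\*y}\}$ and reduce to the linear map $\ell$ on the simplex — the same device used in the proof of Proposition~\ref{prop:causalprojectioncontinuous}.
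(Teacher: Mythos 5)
Your proposal is correct and takes essentially the same approach as the paper: continuity is checked against the subbasis \eqref{subbasis-weak}, and openness is obtained by passing to a finite clopen partition under which $p_X$ and the defining constraints of a basic open set become linear on a finite-dimensional simplex, with openness of the restricted linear map supplied by \citet[Corollary~8]{MIDOLO20091186}. The only minor differences are that you partition by the coordinate atoms $B_{\mathbf{y}}$ while the paper partitions by the atoms generated by $X, X_1, \dots, X_n$ themselves, and your explicit decomposition of $\vartheta \setminus X$ into basic clopens when handling $\{\mu : \mu(X) < s\}$ is slightly more careful than the paper's direct treatment of $\{\mu : \mu(\vartheta \setminus X) > 1 - r_2\}$ as a subbasic set.
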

\begin{proof}
Continuous: the preimage of the basic open $(r_1, r_2) \cap p_X(\mathfrak{S})$ where $r_1, r_2 \in \mathbb{Q}$ is $\{ \mu: \mu(X) > r_1 \} \cap \{ \mu : \mu(X) < r_2 \} = \{ \mu: \mu(X) > r_1 \} \cap \{ \mu : \mu(\vartheta \setminus X) > 1- r_2 \}$, a finite intersection of the subbasic sets \eqref{subbasis-weak} from \S{}\ref{sec:weaktopology}.
See also \citet[Corollary~17.21]{Kechris1995}.

Open: if $X = \varnothing$ or $\vartheta$, then $p_X(\mathfrak{S}) = \{0\}$ or $\{1\}$ resp., both open in themselves.
Else $p_X(\mathfrak{S}) = [0, 1]$;
we show any $Z = p_X\big(\bigcap_{i = 1}^n \{\mu: \mu(X_i)>r_i\} \big)$ is open.
Consider a mutually disjoint, covering $\mathcal{D} = \big\{ \bigcap_{i=0}^n Y_i : Y_0 \in \{X, \vartheta \setminus X\},\text{ each } Y_i \in \{X_i, \vartheta \setminus X_i\}\big\}$ 
and space $\Delta = \{(\mu(D))_{D \in \mathcal{D}} : \mu \in \mathfrak{S}\} \subset \mathbb{R}^{2^{n+1}}$.
Just as in the Lemma, we have $\textsf{p}_{S} : \Delta \to [0, 1]$, for each $S \subset \mathcal{D}$ taking $(\mu(D))_D \mapsto \sum_{D \in S} \mu(D)$.
Note $Z = \textsf{p}_{\{D: D\cap X \neq \varnothing\}}\big( \bigcap_{i=1}^n\textsf{p}^{-1}_{\{D: D\cap X_i \neq \varnothing\}}((r_i, 1])  \big)$ so it suffices to show $\textsf{p}_S$ is continuous and open; this is straightforward (see the end of the proof of Prop.~\ref{prop:causalprojectioncontinuous}).
\end{proof}

\begin{proof}[Full proof of Lem.~\ref{lem:goodcomeager}]
We show a stronger result, namely  that the complement of the good set is nowhere dense.
By rearrangement and laws of probability we find that the second inequality in \eqref{cns:ineq:1} is equivalent to
\begin{align*}
\mu_x(y') 
 &< \mu_{()}(x') + \mu_{()}(x, y')\\
 1- \mu_x(y) &< \underbrace{\mu_{()}(x') + \mu_{()}(x)}_1 - \mu_{()}(x, y)\\
 \mu_x(y) &> \mu_{()}(x, y).
\end{align*}
Lemma~\ref{prop:p2:characterization:binary} then entails the non-strict analogues of all four inequalities in \eqref{cns:ineq:1}, \eqref{cns:ineq:2} are met for any $(\mu_\alpha)_\alpha \in \mathfrak{S}^{X \to Y}_{2}$, so we show that converting each to an equality yields a nowhere dense set, whose finite union is also nowhere dense.
Note that we have a continuous, open (again, refer to the end of the proof of Prop.~\ref{prop:causalprojectioncontinuous}), and surjective observational projection $\pi_{()}: \mathfrak{S}^{X \to Y}_{2} \to \mathfrak{P}\big(\chi_{\{X, Y\}}\big)$,
and the first inequality in \eqref{cns:ineq:2} is met iff $(\mu_\alpha)_\alpha \in \big(p_{x', y'} \circ \pi_{()}\big)^{-1}(\{0\})$ where $p_{x', y'}$ is the map from Lemma~\ref{prop:probiscontinuous} and $x', y'$ denotes the set $\pi_X^{-1}(\{x'\}) \cap \pi_Y^{-1}(\{y'\}) \subset \chi_{\{X, Y\}}$.
This is nowhere dense as it is the preimage of the nowhere dense set $\{0\}\subset [0, 1]$ under a map which is continuous and open by Lemma~\ref{prop:probiscontinuous}. The second inequality of \eqref{cns:ineq:2} is wholly analogous after rearrangement.

As for \eqref{cns:ineq:1},
define a function $d: \mathfrak{S}^{X \to Y}_2 \to [0, 1]$ taking $(\mu_\alpha)_\alpha \mapsto \mu_{X \coloneqq x}(y') - \mu_{()}(x, y')$; this function $d$ is continuous by Lemma~\ref{prop:probiscontinuous} and the continuity of addition and projection, and is once again open. Note
that the first inequality of \eqref{cns:ineq:1} holds iff $d((\mu_\alpha)_\alpha) = 0$.
For any $\mu \in \mathfrak{S}_3^{X}$ such that $(\varpi^{X \to Y}_{2}\circ \varpi_2)(\mu) = (\mu_\alpha)_\alpha$, note that $d((\mu_\alpha)_\alpha) = \mu\big(x', y'_x\big)$ where $x', y'_x$ abbreviates the basic set $\pi_{((), X)}^{-1}(\{x'\}) \cap \pi_{(X \coloneqq x, Y)}^{-1}(\{y'\}) \in \mathcal{B}(\chi_{A \times \*V})$.
Thus $d$ is surjective, so that $d^{-1}(\{0\})$ is nowhere dense since $\{0\} \subset [0, 1]$ is nowhere dense.
The second inequality in \eqref{cns:ineq:1} is again totally analogous.
\end{proof}

\begin{proof}[Proof of Lem.~\ref{lem:separation}]
Abbreviate $\mu_3$ as $\mu$, and without loss take $\mu \in \mathfrak{S}^\prec_{\mathrm{std}}$.
Note that \eqref{cns:ineq:1}, \eqref{cns:ineq:2} entail
\begin{equation*}
0 < \mu(x', y'_x) < \mu(x'), \quad 0 < \mu(x', y'_{x'}) < \mu(x').
\end{equation*}
and therefore
\begin{align*}
0 < \mu\big(\pi^{-1}_{((), X)}(\{x'\}) \cap \pi^{-1}_{(x^*, Y)}(\{1\})\big) < \mu\big(\pi^{-1}_{((), X)}(\{x'\})\big)
\end{align*}
for each $x^* \in \chi_X = \{0, 1\}$.
In turn this 
entails that there are some values $y_{0}, y_{1} \in \{ 0, 1 \}$ such that
$\mu(\Omega_1) > 0$, $\mu(\Omega_2) > 0$
where the disjoint sets $\{\Omega_i\}_i$ are defined as
\begin{align*}
\Omega_1 &= \pi^{-1}_{((), X)}(\{x'\}) \cap \pi^{-1}_{(X \coloneqq 0, Y)} (\{y_0\}) \cap \pi^{-1}_{(X \coloneqq 1, Y)} (\{y_1\})\\
\Omega_2 &= \pi^{-1}_{((), X)}(\{x'\}) \cap \pi^{-1}_{(X \coloneqq 0, Y)} (\{y^\dagger_0\}) \cap \pi^{-1}_{(X \coloneqq 1, Y)} (\{y^\dagger_1\})
\end{align*}
where in the second line, $y_0^\dagger = 1-y_0$ and $y_1^\dagger = 1 - y_1$.
Note that for $i = 1, 2$ we have conditional measures $\mu_i(S_i) = \frac{\mu(S_i)}{\mu(\Omega_i)}$ for $S_i \in \mathcal{B}(\Omega_i)$; further, $\Omega_i$ is Polish, since each is clopen.
This implies $\Omega_i$ is a standard atomless (since $\mu$ is) probability space under $\mu_i$.
By \citet[Thm.~17.41]{Kechris1995}, there are Borel isomorphisms $f_i : \Omega_i \hookdoubleheadrightarrow [0, 1]$ pushing $\mu_i$ forward to Lebesgue measure $\lambda$, i.e., $\mu_i(f_i^{-1}(B)) = \lambda(B)$ for $B \in \mathcal{B}([0, 1])$. 
Thus $g = f_2^{-1} \circ f_1 : \Omega_1  \hookdoubleheadrightarrow \Omega_2$ is $\mu_i$-preserving: for $X_1 \in \mathcal{B}(\Omega_1)$,
\begin{align}
\mu(g(X_1)) = \frac{\mu(\Omega_2)}{\mu(\Omega_1)} \mu(X_1).\label{eqn:NOCBKNantKPlTpiViI0=}
\end{align}

Consider $\mu' = \varpi_3(\mathcal{M}')$ for a new $\mathcal{M}' \in \mathfrak{M}_\prec$, given as follows. Its exogenous valuation space is $\chi_{\*U} = \Omega'$ where we define the sample space $\Omega' = \texttt{\textbf{F}}(\*V) \times \{ \mathrm{T}, \mathrm{H} \}$; that is, a new exogenous variable representing a coin flip is added to some representation of the choice of deterministic standard form mechanisms.
Fix constants $\varepsilon_1, \varepsilon_2 \in (0, 1)$ with $\varepsilon_1 \cdot \mu(\Omega_1) = \varepsilon_2 \cdot \mu(\Omega_2)$ and define its exogenous noise distribution $P$ by
\begin{equation}\label{eq:modprobz}
P(X \times \{\mathrm{S}\}) = 
\begin{cases}
(1-\varepsilon_1)\cdot \mu(X), & X \subset \Omega_1, \mathrm{S} = \mathrm{T}\\
\varepsilon_1 \cdot \mu(X), & X \subset \Omega_1, \mathrm{S} = \mathrm{H}\\
(1-\varepsilon_2)\cdot\mu(X), & X \subset \Omega_2, \mathrm{S} = \mathrm{T}\\
\varepsilon_2 \cdot \mu(X), & X \subset \Omega_2, \mathrm{S} = \mathrm{H}\\
\mu(X), & X \subset \texttt{\textbf{F}}(\*V) \setminus (\Omega_1 \cup \Omega_2), \mathrm{S} = \mathrm{T}\\
0, & X \subset \texttt{\textbf{F}}(\*V) \setminus (\Omega_1 \cup \Omega_2), \mathrm{S} = \mathrm{H}
\end{cases}.
\end{equation}
Where $\texttt{\textbf{f}}  \in \texttt{\textbf{F}}(\*V)$ and $V \in \*V$ write $\texttt{f}_V$ for the deterministic mechanism (of signature $\chi_{\mathbf{Pred}(V)} \to \chi_V$) for $V$ in $\texttt{f}$. (Note that each $\texttt{f}$ is just an indexed collection of such mechanisms $\texttt{f}_V$.)
The function $f'_V$ in $\mathcal{M}'$ is defined at the initial variable $X$ as
$f'_X(\texttt{\textbf{f}}, \mathrm{S}) = \texttt{\textbf{f}}_X$ for both values of $\mathrm{S}$,
and for $V \neq X$ is defined as follows, where $\*p \in \mathbf{Pred}(V)$:
\begin{equation} \label{eqn:modseqz}
{f}'_V\big(\*p, (\texttt{\textbf{f}}, \mathrm{S})\big) =
 \begin{cases}
      (g(\texttt{\textbf{f}}))_V(\*p), & \texttt{\textbf{f}}  \in\Omega_1, \mathrm{S} = \mathrm{H},\, \pi_X(\*p) = x\\
      (g^{-1}(\texttt{\textbf{f}}))_V(\*p), & \texttt{\textbf{f}}  \in\Omega_2, \mathrm{S} = \mathrm{H},\, \pi_X(\*p) = x\\
      \texttt{\textbf{f}}_V(\*p), &\textnormal{otherwise}
                                                 \end{cases}.
                                                 \end{equation}

We claim that $\varpi_2(\mu') = \varpi_2(\mu)$.
It suffices to show for any $\*Z \coloneqq \*z \in A$ and ${\mathbf{w}} \in \chi_{{\*W}}$, $\*W$ finite,
we have
\begin{equation}
    \mu(\theta) = \mu'(\theta), \text{ where }\theta = \bigcap_{W \in {\*W}}\pi^{-1}_{(\*Z \coloneqq \*z, W)}(\{\pi_W(\*w)\}).\label{eqn:IOXCVOIXCJVSOIJDklfsjdlfksdj}
\end{equation}
Assume $\pi_Z(\*w) = \pi_Z(\*z)$ for every $Z \in \*Z \cap {\*W}$, since both sides of \eqref{eqn:IOXCVOIXCJVSOIJDklfsjdlfksdj} trivially vanish otherwise.
Where $\texttt{\textbf{f}} \in \texttt{\textbf{F}}(\*V)$
write, e.g., $\texttt{\textbf{f}} \models \theta$ if $m^{\SCM_A}(\texttt{\textbf{f}}) \in \theta$, where $\SCM$ is a standard form model (Def.~\ref{def:standardform}); for
$\omega' \in \Omega'$ write $\omega' \models' \theta$
if $m^{\SCM'_A}(\omega') \in \theta$. By the last two cases of \eqref{eqn:modseqz} we have
\begin{align}
\mu'(\theta)
&= \sum_{\mathrm{S} =  \mathrm{T}, \mathrm{H}}P\big( \{ (\texttt{\textbf{f}}, \mathrm{S}) \in \Omega' : (\texttt{\textbf{f}}, \mathrm{S}) \models' \theta \} \big)\nonumber\\
&= \mu\big( \{ \texttt{\textbf{f}} \in \texttt{\textbf{F}}(\*V) \setminus (\Omega_1 \cup \Omega_2) : \texttt{\textbf{f}} \models \theta \}\big) +
\sum_{\substack{\mathrm{S} = \mathrm{T}, \mathrm{H}\\ i = 1, 2}} P\big( \{ (\texttt{\textbf{f}}, \mathrm{S}) \in \Omega' : \texttt{\textbf{f}} \in \Omega_i, (\texttt{\textbf{f}}, \mathrm{S}) \models' \theta \} \big).\label{eqn:CXJSDFJKSDKFJKSDVC}
\end{align}
Applying the first four cases of \eqref{eq:modprobz} and the third case of \eqref{eqn:modseqz}, the second term of \eqref{eqn:CXJSDFJKSDKFJKSDVC} becomes
\begin{equation}
\sum_i \Big[\varepsilon_i \cdot \mu\big( \{ \texttt{\textbf{f}} \in \Omega_i : (\texttt{\textbf{f}}, \mathrm{H}) \models' \theta \} \big) +
 \left(1- \varepsilon_i\right) \cdot\mu\big( \{ \texttt{\textbf{f}} \in \Omega_i : \texttt{\textbf{f}} \models \theta \} \big)\Big] .\label{eqn:ckxljvXIOCJVOX}
\end{equation}
Either $X \in \*Z$ and $\pi_X(\*z) = x$, or not.
In the former case: defining $X_i = \{\texttt{\textbf{f}} \in \Omega_i : \texttt{\textbf{f}} \models \theta\}$ for each $i = 1, 2$,
the first two cases of \eqref{eqn:modseqz} yield that
\begin{align}
\{ \texttt{\textbf{f}} \in \Omega_1 : (\texttt{\textbf{f}}, \mathrm{H}) \models' \theta \} &= \{\texttt{\textbf{f}} \in \Omega_1 : g(\texttt{\textbf{f}}) \models \theta\}
= g^{-1}(X_2 )\nonumber\\
\{ \texttt{\textbf{f}} \in \Omega_2 : (\texttt{\textbf{f}}, \mathrm{H}) \models' \theta \} &= \{\texttt{\textbf{f}} \in \Omega_2 : g^{-1}(\texttt{\textbf{f}}) \models \theta\} 
= g(X_1).\label{eqn:yDuRPR/5r83+sSdJ2Iw=}
\end{align}
Applying \eqref{eqn:yDuRPR/5r83+sSdJ2Iw=} and \eqref{eqn:NOCBKNantKPlTpiViI0=}, \eqref{eqn:ckxljvXIOCJVOX} becomes
\begin{gather}
\varepsilon_1 \cdot \frac{\mu(\Omega_1)}{\mu(\Omega_2)} \cdot \mu\big( X_2 \big)
 + \left(1- \varepsilon_1\right) \cdot\mu\big( X_1 \big)
  + \varepsilon_2 \cdot\frac{\mu(\Omega_2)}{\mu(\Omega_1)} \cdot \mu\big( X_1\big)
 + \left(1- \varepsilon_2\right) \cdot\mu\big( X_2 \big)\nonumber\\
 = \mu(X_1) + \mu(X_2),\label{eqn:CxoFLyrcYfY8on7xY/A=}
\end{gather}
the final cancellation by choice of $\varepsilon_1, \varepsilon_2$.
In the latter case: since $m^{\SCM}(\texttt{\textbf{f}}) \in \pi^{-1}_X(\{x'\})$ for any $\texttt{\textbf{f}} \in \Omega_1 \cup \Omega_2$, the third case of \eqref{eqn:modseqz} gives $\{ \texttt{\textbf{f}} \in \Omega_i : (\texttt{\textbf{f}}, \mathrm{H}) \models'\theta\} = X_i$. Thus \eqref{eqn:ckxljvXIOCJVOX} becomes \eqref{eqn:CxoFLyrcYfY8on7xY/A=} in either case.
Putting in \eqref{eqn:CxoFLyrcYfY8on7xY/A=} as the second term in \eqref{eqn:CXJSDFJKSDKFJKSDVC}, we find $\mu(\theta) = \mu'(\theta)$.

Now we claim $\mu(\zeta) \neq \mu'(\zeta)$
for $\zeta = \zeta_0 \cap \zeta_1$ where
$\zeta_1 = \pi^{-1}_{(X \coloneqq 1, Y)}(\{y_1\})$ and $\zeta_0 = \pi^{-1}_{(X \coloneqq 0, Y)}(\{y_0\})$.
We have
\begin{align}
\mu'(\zeta) = &\mu\big( \{ \texttt{\textbf{f}} \in \Omega \setminus (\Omega_1 \cup \Omega_2) : \texttt{\textbf{f}} \models \zeta \} \big)\nonumber\\
&+ \sum_{i=1,2} \Big[\varepsilon_i \cdot \mu\big( \{ \texttt{\textbf{f}} \in \Omega_i : (\texttt{\textbf{f}}, \mathrm{H}) \models' \zeta \} \big) +
\left(1- \varepsilon_i\right) \cdot\mu\big( \{ \texttt{\textbf{f}} \in \Omega_i : \texttt{\textbf{f}} \models \zeta \} \big)\Big].\label{eqn:XT26+J4OsmYVQhi6+dY=}
\end{align}
First suppose that $x = 0$.
If $\texttt{\textbf{f}} \in \Omega_1$, then
note that $(\texttt{\textbf{f}}, \mathrm{H}) \models' \zeta_0$ iff $g(\texttt{\textbf{f}}) \models \zeta_0$, but this is never so, since $g(\texttt{\textbf{f}}) \in \Omega_2$.
If $\texttt{\textbf{f}} \in \Omega_2$,
then 
$(\texttt{\textbf{f}}, \mathrm{H}) \models' \zeta_1$ iff $\texttt{\textbf{f}} \models \zeta_1$, which is never so again by choice of $\Omega_2$.
If $x = 1$ then we find that $(\texttt{\textbf{f}}, \mathrm{H}) \not\models \zeta_1$ (if $\texttt{\textbf{f}} \in \Omega_1$) and $(\texttt{\textbf{f}}, \mathrm{H}) \not\models \zeta_0$ (if $\texttt{\textbf{f}} \in \Omega_2$).
Thus $(\texttt{\textbf{f}}, \mathrm{H}) \not\vDash ' \zeta$ for any $\texttt{\textbf{f}} \in \Omega_1 \cup \Omega_2$ and \eqref{eqn:XT26+J4OsmYVQhi6+dY=} becomes
\begin{equation*}
 \mu\big( \{ \texttt{\textbf{f}} \in \Omega : \texttt{\textbf{f}} \models \zeta \} \big)
 - \sum_{i=1,2} \varepsilon_i \cdot \mu\big( \{ \texttt{\textbf{f}} \in \Omega_i : \texttt{\textbf{f}} \models \zeta \} \big)
 = \mu\big( \{ \texttt{\textbf{f}} \in \Omega : \texttt{\textbf{f}} \models \zeta \} \big) - \varepsilon_1 \cdot\mu(\Omega_1) < \mu(\zeta).
\end{equation*}

It is straightforward to check (via casework on the values $y_0$, $y_1$) that $\mu$ and $\mu'$ disagree also on the PNS: $\mu(y_x, y'_{x'}) \neq \mu'(y_x, y'_{x'})$ as well as its converse.
As for the probability of sufficiency (Definition \ref{def:probcaus}), note that
\begin{align*}
    P(y_x \mid x', y') = \frac{P(y_x, x', y'_{x'}) + \overbrace{P(y_x, y'_x, x', x)}^0}{P(x', y')}
\end{align*}
and it is again easily seen (given the definition of the $\Omega_i$) that $\mu(y_x, x', y'_{x'}) \neq \mu'(y_x, x', y'_{x'})$ while the two measures agree on the denominator; similar reasoning shows disagreement on the probability of enablement, since
\begin{equation*}
    P(y_x \mid y') = \frac{P(y_x, y'_{x'}, x') + \overbrace{P(y_x, y'_{x}, x)}^0}{P(y')}. \qedhere
\end{equation*}
\end{proof}

\end{document}